
\documentclass{article}

\usepackage{microtype}
\usepackage{graphicx}
\usepackage{subcaption}
\usepackage{booktabs} 
\usepackage[table]{xcolor} 

\usepackage{hyperref}
\usepackage{multirow}


\usepackage[accepted]{icml2026}



\usepackage{amsmath}
\usepackage{amssymb}
\usepackage{mathtools}
\usepackage{amsthm}

\usepackage[capitalize,noabbrev]{cleveref}

\theoremstyle{plain}
\newtheorem{theorem}{Theorem}[section]
\newtheorem{proposition}[theorem]{Proposition}
\newtheorem{lemma}[theorem]{Lemma}
\newtheorem{corollary}[theorem]{Corollary}
\theoremstyle{definition}

\theoremstyle{remark}

\usepackage[textsize=tiny]{todonotes}

\usepackage{algorithm}
\usepackage{algorithmic}

\usepackage{amsfonts}  
\usepackage{nicefrac}  
\usepackage{bm}            

\usepackage{wrapfig}        
\usepackage{array}          

\usepackage{minitoc}

\icmltitlerunning{Offline Multi-agent Reinforcement Learning via Sequential Score Decomposition}

\begin{document}

\twocolumn[
  \icmltitle{Offline Multi-agent Reinforcement Learning via Sequential Score Decomposition}



  \icmlsetsymbol{equal}{*}
\icmlsetsymbol{corr}{\textdagger}
\icmlsetsymbol{lead}{\textsection}

  \begin{icmlauthorlist}
    \icmlauthor{Dan Qiao}{sch1}
    \icmlauthor{Wenhao Li}{sch4}
    \icmlauthor{Shanchao Yang}{sch1}
    \icmlauthor{Hongyuan Zha}{sch1}
    \icmlauthor{Baoxiang Wang}{sch1,vector,corr}
  \end{icmlauthorlist}

\icmlaffiliation{sch1}{School of Data Science, The Chinese University of Hong Kong, Shenzhen, China}
\icmlaffiliation{sch4}{School of Computer Science, Tongji University, Shanghai, China}
\icmlaffiliation{vector}{Vector Institute}

\icmlcorrespondingauthor{Dan Qiao}{danqiao@link.cuhk.edu.cn}
\icmlcorrespondingauthor{Baoxiang Wang}{bxwang@cuhk.edu.cn}

  \icmlkeywords{Machine Learning, ICML}

  \vskip 0.3in
]



\printAffiliationsAndNotice{%
}

\begin{abstract}
  Offline cooperative multi-agent reinforcement learning (MARL) faces unique challenges due to the distribution shift between online and offline data collection. While online MARL typically converges to a single coordinated joint policy, offline datasets are often mixtures of diverse cooperative behaviors, resulting in highly multimodal joint behavior distributions. In such settings, independent policy regularization often misaligns joint policy constraints and leads to severe distribution shift. To address this, we propose OMSD, which sequentially decomposes the joint behavior policy into individual conditional distributions and leverages diffusion-based generative models to provide modality-coordinated regularization for each agent. Combined with centralized critic guidance, OMSD achieves coordinated exploration within high-value, in-distribution regions, and avoids out-of-distribution joint actions. Experiments across multiple datasets on various continuous control tasks demonstrate that OMSD consistently achieves state-of-the-art performance, especially in challenging multimodal scenarios. Our results highlight the necessity of modality-aware coordination for robust offline MARL.
\end{abstract}

\section{Introduction}

Multi-Agent Reinforcement Learning (MARL) has achieved remarkable success in complex decision-making scenarios, including games \citep{berner2019dota, zhang2021multi}, AI-driven economic models \citep{zheng2020ai}, power systems \citep{chen2021powernet}, and traffic control \citep{ma2024efficient}. Yet online MARL  often suffers from poor sample efficiency and a pronounced sim-to-real gap, as simulators fail to capture full complexities in the  real-world  and real-world exploration is risky and costly. These limitations have motivated offline MARL, which  learns coordinated policies from fixed  datasets without interacting with the environment during training \citep{yang2021believe, formanek2024putting}. In offline MARL, a central challenge is the distribution shift problem, stemming from the disparity between the learned policy and the data collection policy \citep{pan2022plan,barde2023model}. 
Beyond the challenges seen in single-agent offline RL \citep{levine2020offline,prudencio2023survey}, offline MARL must contend with exponentially large joint state-action spaces, as well as the need for high-quality coordination among agents to achieve common goals. All these challenges make effective policy learning in offline settings very difficult.

\begin{figure*}[t!]
    \centering
    \begin{subfigure}[b]{0.24\linewidth}
        \centering
        \includegraphics[width=\linewidth]{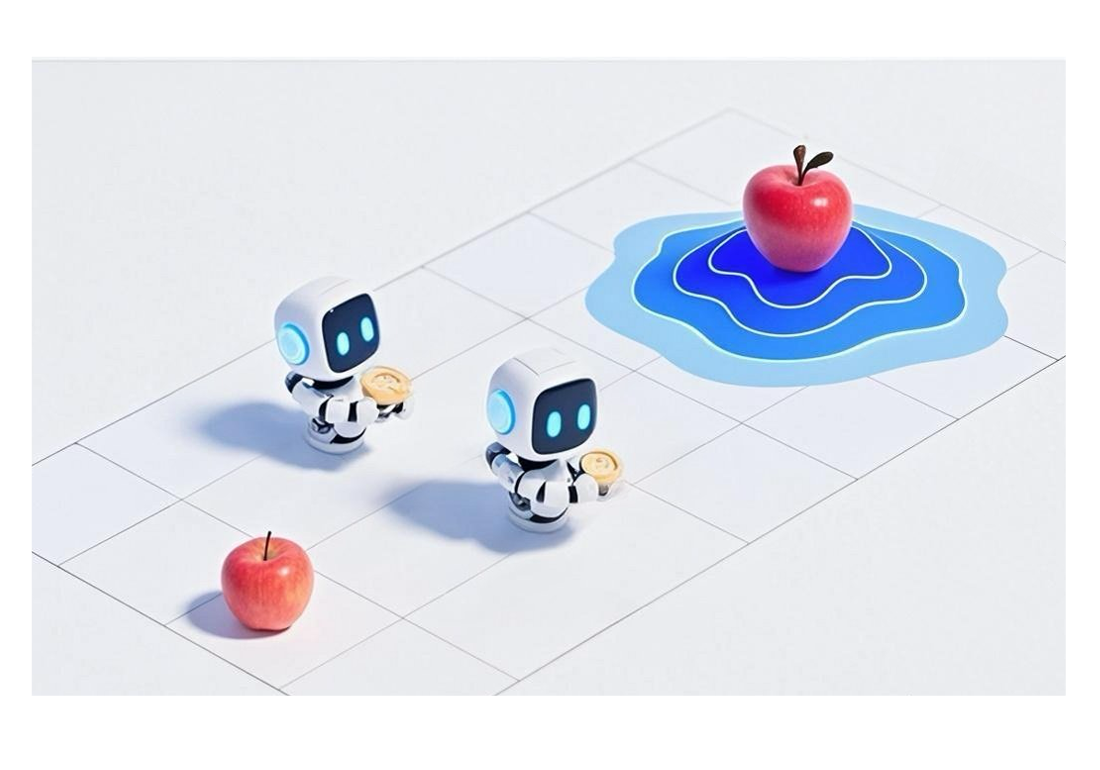}
        \caption{}
        \label{fig:online-offline-a}
    \end{subfigure}
    \begin{subfigure}[b]{0.24\linewidth}
        \centering
        \includegraphics[width=\linewidth]{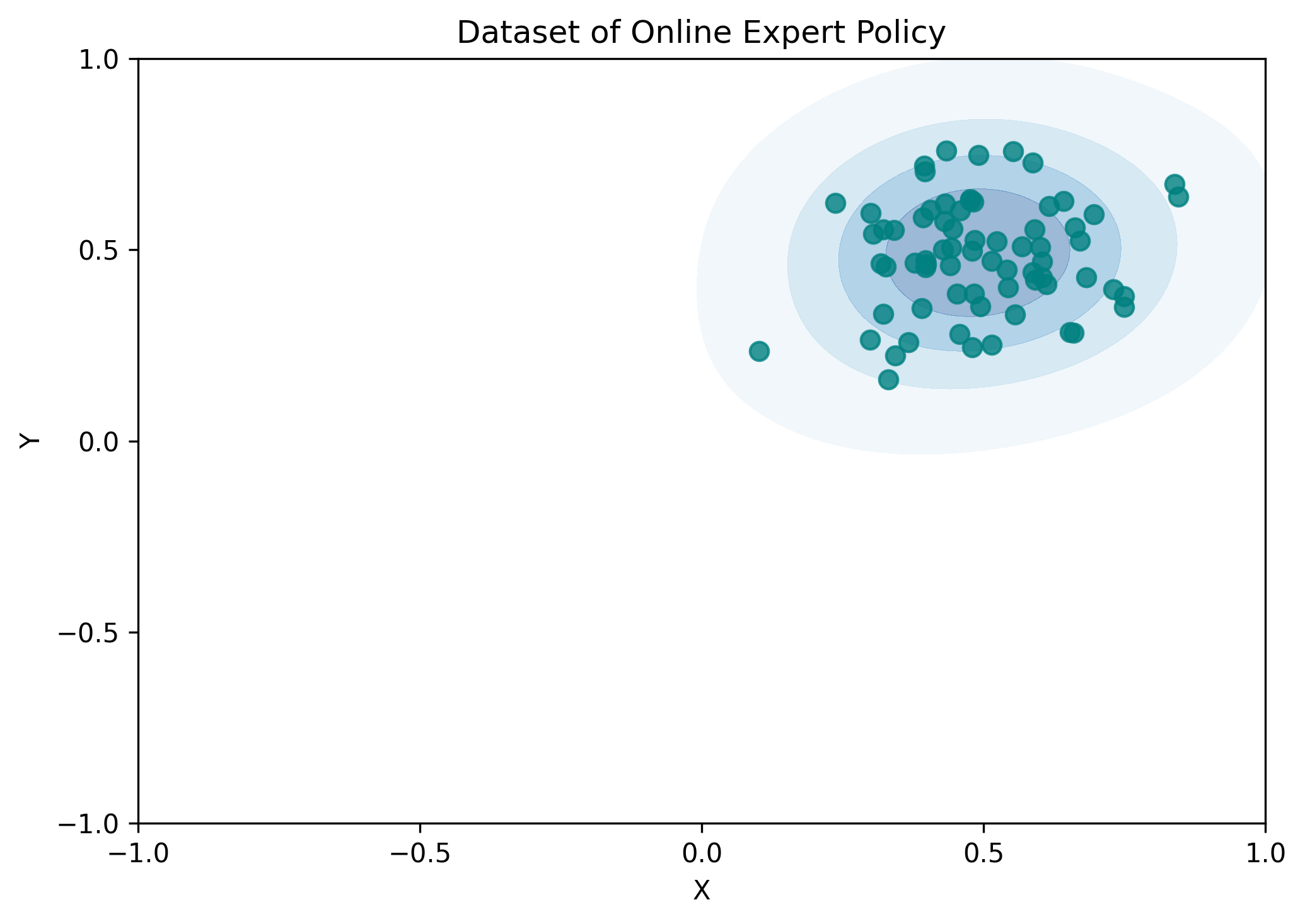}
        \caption{}
        \label{fig:online-offline-b}
    \end{subfigure}
    \begin{subfigure}[b]{0.24\linewidth}
        \centering
        \includegraphics[width=\linewidth]{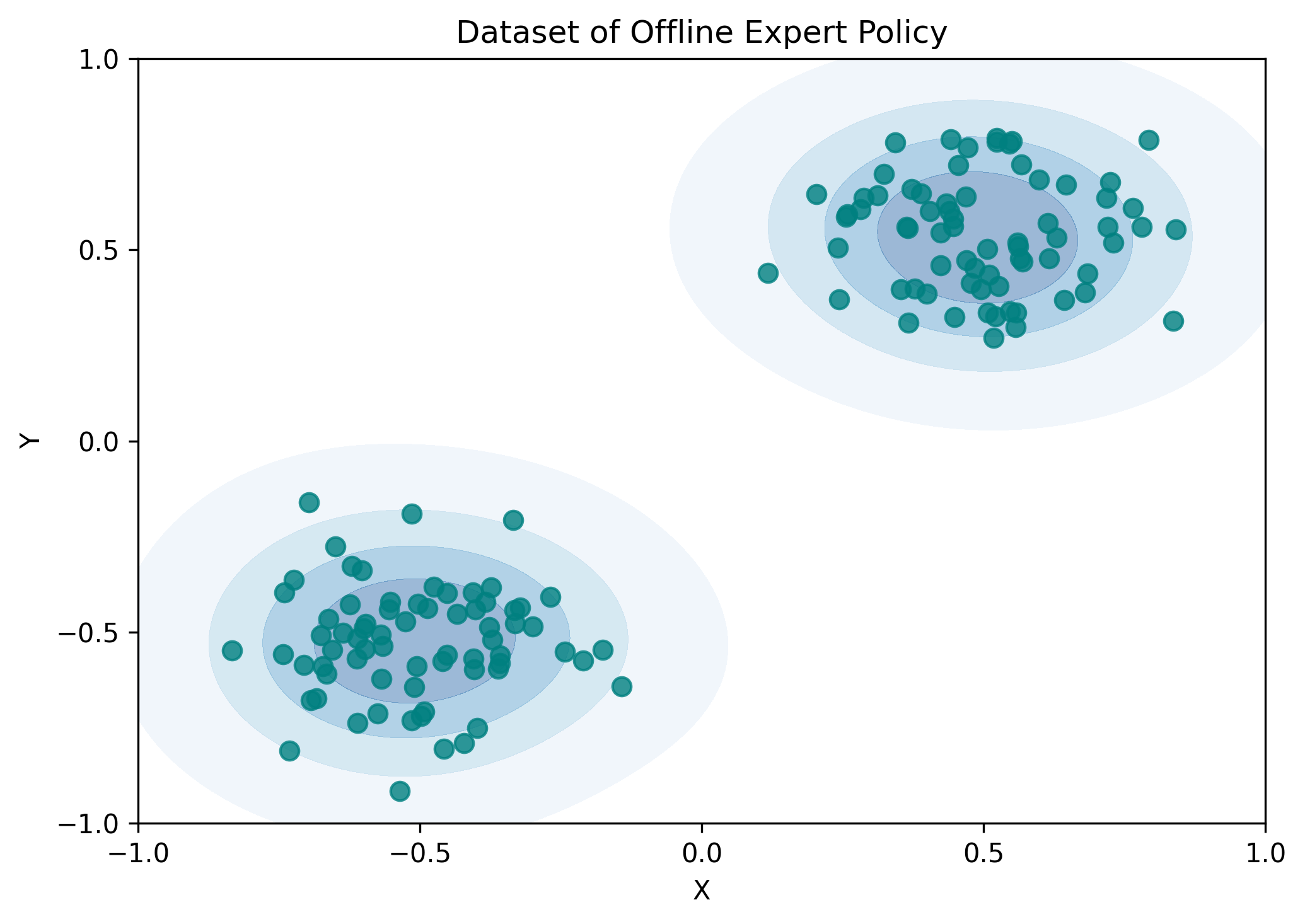}
        \caption{}
        \label{fig:online-offline-c}
    \end{subfigure}
    \begin{subfigure}[b]{0.24\linewidth}
        \centering
        \includegraphics[width=\linewidth]{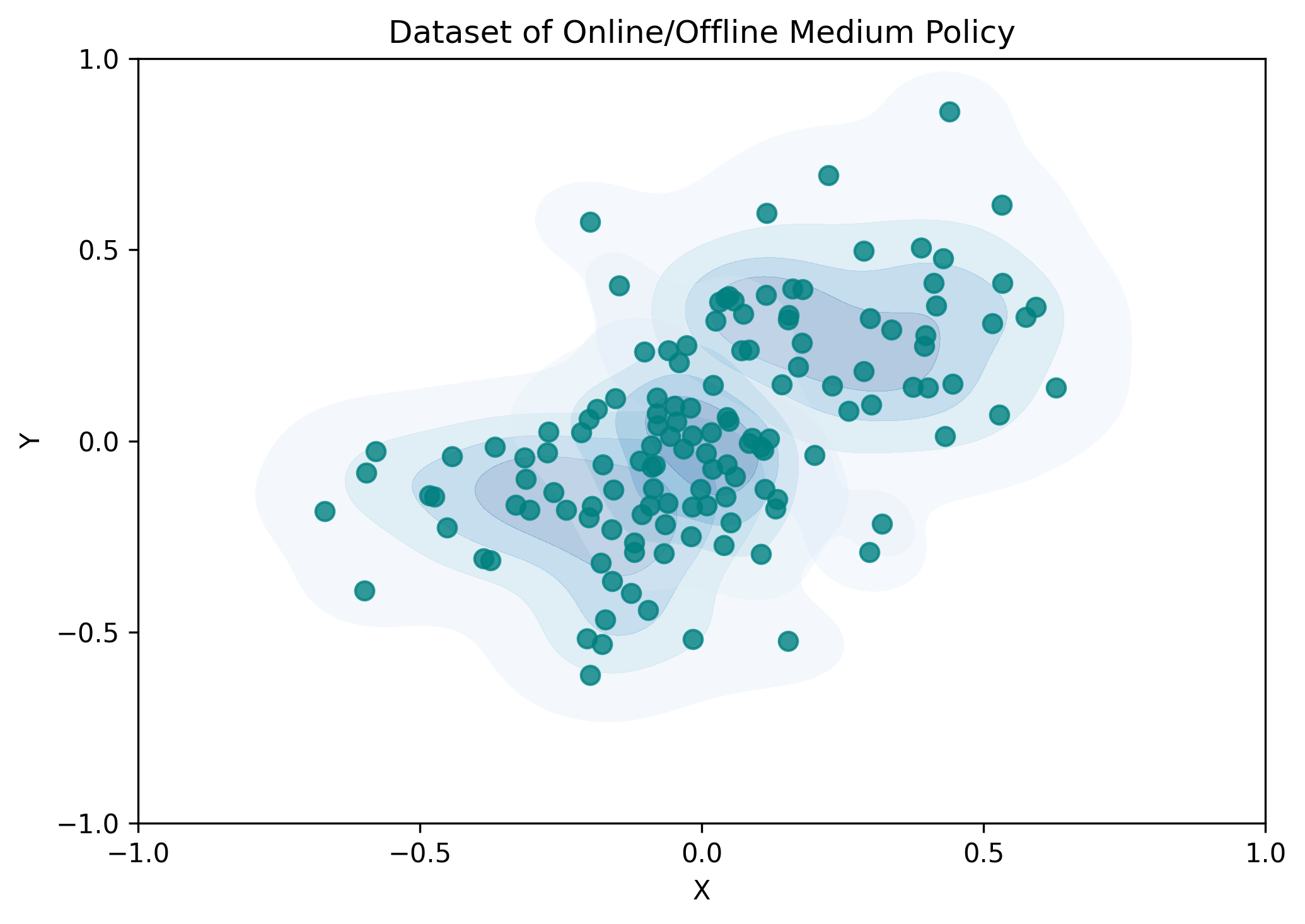}
        \caption{}
        \label{fig:online-offline-d}
    \end{subfigure}
    \caption{(a) Both robots need to cooperatively pick the same one of the two apples in order to receive a reward and end the game. There are two optimal strategies in this game. (b) The online expert policy converges to the optimal joint policy for either single mode due to policy dependence.  (c) Offline expert datasets exhibits multimodal optimal joint strategies due to diverse data collection sources. (d) Lower quality datasets demonstrate more pronounced multimodality.}
    \label{fig:online-offline-illu}
\end{figure*}

To address these challenges, existing offline MARL methods mainly fall into two categories. The first category comprises value-based methods that build on Individual-Global-Maximization (IGM)  decompositions \citep{rashid2018qmix}, typically coupled with conservative value estimation to mitigate critic overestimation problems under limited data coverage \citep{yang2021believe, pan2022plan, wang2023IGL}. While these approaches alleviate extrapolation and achieve credit assignment under the Centralized-Training-Decentralized-Execution (CTDE) framework \citep{yang2020overview}, the  individual $\epsilon$-greedy policy of each agent  can still lead to the selection of out-of-distribution (OOD) joint actions, which are often of low quality and may not be covered by the datasets \citep{matsunaga2023alberdice}. The second category constrains policies via behavior-regularized updates or generates trajectories with centralized planners and world models \citep{matsunaga2023alberdice, barde2023model, zhu2024madiff}. Although these methods aim to avoid OOD joint action selection through direct  policy constraints, they often rely on local independent regularization for each agent. In cases where dataset policies exhibit substantial behavioral diversity, such local constraints can cause misaligned policy updates at the individual agent level, ultimately hindering the coordination required for an effective joint policy.  Furthermore, centralized planners introduce additional burdens in practice, as they often entail high inference costs and require opponent modeling, which may be imprecise \citep{foerster2017learning, yu2022MBOM}, to facilitate the translation into decentralized execution strategies for each agent.

From the perspective of data distribution, the fundamental cause of these limitations lies in the stark difference between online and offline MARL data collection, as exemplified by a simple $2$-agent cooperative harvesting task (Fig. \ref{fig:online-offline-illu}). 
This is a common game with multiple Nash Equilibria, where the optimal strategy is for both players to go together to either of the apples. 
Online MARL resolves this ambiguity via interactive, on-policy adaptation: coupled updates and exploration break symmetry and drive convergence to a single equilibrium, yielding a single-mode joint policy.
In contrast, offline MARL datasets are typically mixtures collected from diverse sources with various cooperative policies \citep{formanek2023off, formanek2024putting}, demonstrating highly multimodal behavior. In such scenarios, the multiplicative decomposition of joint policies commonly used in online MARL can lead to biased regularization across agents, as it fails to account for the dependencies introduced by multimodality. Consequently, each agent may be pulled toward different modes, resulting in a misaligned joint policy that lies outside the high-density regions of the dataset.

In this paper, we propose the \textbf{Offline MARL with Sequential Score Decomposition} (OMSD) method to achieve coordination regularization under multimodal joint behavior policies. In particular, OMSD sequentially factorizes the joint behavior policy into individual conditional behavior distributions conditioned on both states and prefix-actions, providing an exact chain-rule conditional reference for each agent's Kullback–Leibler (KL) divergence policy constraints. Then  the flexible diffusion models are trained to capture complex individual conditional distributions of each agent and estimate the action-space gradient of the KL constraints with score functions \citep{song2020score}. Finally, OMSD combines the individual scores with the centralized critic gradient to guide appropriate exploration within the modality and reduce extrapolation bias with limited data coverage. This design  ensures modality-consistent coordination regularization without explicit access to the full joint policy, and guides to high-value in-distribution regions without OOD joint action selection problems. Extensive experiments across various datasets and continuous control tasks demonstrate that OMSD significantly outperforms existing methods, notably excelling in multimodal scenarios such as medium datasets.

In summary, our contributions are threefold: (i) We identify the multimodal behavior policy distribution introduced by the online-offline data collection gap as the root cause of the difficulty in offline MARL policy coordination, and shed light on how independent regularization can misalign agents and cause the joint action policy distribution to shift; (ii) We develop OMSD, which sequentially decomposes behavior policies and learns diffusion-based conditional scores as a behavior regularizer, which can promote coordinated mode selection without modeling a full joint policy or relying on a planner; (iii) We demonstrate state-of-the-art performance on a multi-agent continuous control task benchmark, effectively handling scenarios with multimodal data distribution. Our source code is available at \url{https://github.com/qiaodan-cuhk/OMSD}.

\section{Preliminaries}

\subsection{Partially Observable Stochastic Game}

A partially observable stochastic game~\citep[POSG;][]{posg} or Markov game is defined as a tuple:
$
\langle \mathcal{X}, \mathcal{S}, \left\{ \mathcal{A}^i \right\}_{i=1}^{n}, \left\{ \mathcal{O}^i \right\}_{i=1}^{n}, \mathcal{P}, \mathcal{E}, \left\{ \mathcal{R}^i \right\}_{i=1}^{n} \rangle,
$
where $n$ is the number of agents, $\mathcal{X}$ is the agent space, $\mathcal{S}$ is a finite set of states, $\mathcal{A}^i$ is the action set for agent $i$, $\boldsymbol{\mathcal{A}} = \mathcal{A}^1 \times \mathcal{A}^2 \times \cdots \times \mathcal{A}^n$ is the set of joint actions, $\mathcal{P}(s^{\prime}|s, \boldsymbol{a})$ is the state transition probability function, $\mathcal{O}^i$ is the observation set for agent $i$, $\boldsymbol{\mathcal{O}} = \mathcal{O}^1 \times \mathcal{O}^2 \times \cdots \times \mathcal{O}^n$ is the set of joint observations, $\mathcal{E}(\boldsymbol{o}|s)$ is the emission function, and $\mathcal{R}^i: \mathcal{S} \times \boldsymbol{\mathcal{A}} \times \mathcal{S} \rightarrow \mathbb{R}$ is the reward function for agent $i$.
The game progresses over a sequence of stages called the \textit{horizon}, which can be finite or infinite. 
This paper focuses on the episodic infinite horizon problem, where each agent aims to maximize expected discounted cumulative return.

In a cooperative POSG~\citep{arena}, the relationship between agents $x$ and $x^\prime$ is given by:
\begin{equation}
\forall x \in \mathcal{X}, \forall x^{\prime} \in \mathcal{X} \setminus \{x\}, \forall \pi_{x} \in \Pi_{x}, \forall \pi_{x^{\prime}} \in \Pi_{x^{\prime}}, \frac{\partial \mathcal{R}^{x^{\prime}}}{\partial \mathcal{R}^{x}} \geqslant 0, \nonumber
\end{equation}
where $\pi_{x}$ and $\pi_{x^{\prime}}$ are policies in the policy spaces $\Pi_{x}$ and $\Pi_{x^{\prime}}$, respectively. 
The inequality condition  intuitively means that there is no conflict of interest among any pair of agents.
The paper addresses the fully cooperative POSG, also known as the decentralized partially observable Markov decision process~\citep[Dec-POMDP;][]{dec-pomdp}, where all agents share the same global reward at each stage, i.e., $\mathcal{R}^1 = \mathcal{R}^2 = \cdots = \mathcal{R}^n$. 
The optimization goal for Dec-POMDP is defined as:
$\max_{\Psi} \sum_{i=1}^{n} \sum_{t=0}^{\infty} \mathbb{E}_{s_0 \sim p_0, \boldsymbol{o} \sim \mathcal{E}, \boldsymbol{a}_t \sim \boldsymbol{\pi}_{\Psi}} [ \gamma^{t} r_{t+1}^{i} ]$
where $\Psi := \{\psi^i\}_{i=1}^{n}$ are the parameters of the approximated policies $\pi^i_{\psi^i}: \mathcal{O}^i \rightarrow \mathcal{A}^i$, and $\boldsymbol{\pi}_{\Psi} := \prod_{i=1}^{n} \pi^i_{\psi^i}$ is the joint policy of all agents. 
Here, $\gamma$ is the discount factor, $p_0$ is the initial state distribution, and $r_{t+1}^{i}$ is the reward received by agent $i$ at timestep $t+1$ after taking action $a_t^i$ in observation $o_t^i$.
In the offline setting, we only have a static dataset of transitions $\mathcal{D} = \{(o_t^m, a_t^m, o_{t+1}^m, r_t^m)\}_{m=1}^{nk}$, where $k$ is the number of transitions for each agent.

\subsection{Diffusion Probabilistic Models}\label{preliminary_diffusion}

Diffusion probabilistic models \citep{sohl2015deep, ho2020denoising} are a likelihood-based generative framework designed to learn data distributions $q(\boldsymbol{x})$ from offline datasets $\mathcal{D}:={\boldsymbol{x}^i}$, where $i$ indexes individual samples \citep{song2021diffusion}. 
A key feature of these models is the representation of the (Stein) score function \citep{liu2016kernelized}, which does not require a tractable partition function.

The model's discrete-time generation procedure involves a forward noising process, defined as $q(\boldsymbol{x}_{k+1} | \boldsymbol{x}_k):=\mathcal{N}(\boldsymbol{x}_{k+1}; \sqrt{\tilde{\alpha}_k} \boldsymbol{x}_k, (1-\tilde{\alpha}_k) \boldsymbol{I})$, at diffusion timestep $k$. 
This is paired with a learnable reverse denoising process, $p_\theta(\boldsymbol{x}_{k-1} | \boldsymbol{x}_k):=\mathcal{N}(\boldsymbol{x}_{k-1} | \mu_\theta(\boldsymbol{x}_k, k), \Sigma_k)$, where $\mathcal{N}(\mu, \Sigma)$ represents a Gaussian distribution with mean $\mu$ and variance $\Sigma$. 
The variance schedule is defined by $\alpha_k \in \mathbb{R}$. Here, $\boldsymbol{x}_0 := \boldsymbol{x}$ is a sample in $\mathcal{D}$, and $\boldsymbol{x}_1, \boldsymbol{x}_2, \ldots, \boldsymbol{x}_{K-1}$ are latent variables, with $\boldsymbol{x}_K \sim \mathcal{N}(\mathbf{0}, \boldsymbol{I})$ for appropriately chosen $\tilde{\alpha}_k$ values and a sufficiently large $K$.

Starting with Gaussian noise, samples are iteratively generated through a series of denoising steps. 
The training of the denoising operator is guided by an optimizable and tractable variational lower bound, with a simplified surrogate loss proposed in \cite{ho2020denoising}:
\begin{equation}
    \mathcal{L}_{\text{denoise}}(\theta) := \mathbb{E}_{k \sim [1, K], \boldsymbol{x}_0 \sim q, \epsilon \sim \mathcal{N}(\mathbf{0}, \boldsymbol{I})}\left[\|\epsilon - \epsilon_\theta(\boldsymbol{x}_k, k)\|^2\right]
\end{equation}

Here, the predicted noise $\epsilon_\theta(\boldsymbol{x}_k, k)$, parameterized by a deep neural network, approximates the noise $\epsilon \sim \mathcal{N}(\mathbf{0}, \boldsymbol{I})$ added to the dataset sample $\boldsymbol{x}_0$ to produce the noisy $\boldsymbol{x}_k$ in the noising process.

\subsection{Policy-Based Offline RL}

Policy based methods are successful and widely used in offline RL algorithms. Prior work~\citep{nair2020awac} formulates the offline policy optimization problem as:
\begin{equation}\label{equ1}
 \max_{\pi} \mathbb{E}_{s \sim \mathcal{D}_{\mu}}\left[ \mathbb{E}_{a \sim \pi(s)} \left[ Q_\phi(s, a) \right] - \frac{1}{\beta}\mathcal{D}_{\text{KL}} \left( \pi(\cdot | s) \middle\| \mu(\cdot | s) \right) \right], 
\end{equation}
where $Q_\phi(s, a)$ is a neural network approximation of the state-action value functions  $Q^\pi(s, a) := \mathbb{E}_{s_t =s, a_t=a; a_{t+1} \sim \pi}[\sum_{t=0}^{\infty} \gamma^t r(s_t, a_t)]$ under the current policy $\pi$, $\beta$ is a temperature coefficient controlling how far the learned policy deviates from the behavior policy $\mu$. The closed-form solution to this optimization problem (\ref{equ1}) is 
\begin{equation*}
    \pi^*(a \mid s)=\frac{1}{Z(s)} \mu(a \mid s) \exp \left(\beta Q_\phi(s, a)\right),
\end{equation*}
where $Z(s)$ is the partition function. A subsequent challenge is to efficiently distill the optimal policy into a parameterized policy $\pi_\theta$. A common approach is minimizing the KL-divergence between $\pi_\theta$ and $\pi^*$ with either forward or reverse direction \citep{chen2024score}. While the optimal policy may be multimodal, meaning it has multiple equivalent policy mode distributions, it is not necessary to express every policy mode explicitly during execution. Therefore, reverse KL is suitable for leveraging its mode-seeking property to capture one feasible mode with a simple parameterized policy, such as a Gaussian or deterministic policy.

\begin{lemma}[Behavior-Regularized Policy Optimization (BRPO) \citep{wu2019behavior}]\label{lemma1}
    In policy-based offline RL, given an optimal policy $\pi^*$ and a parameterized policy $\pi_\theta$, the policy regularization learning objective with reverse KL-divergence can be written as, 
    \begin{align}
    \min_{\theta} & \mathbb{E}_{s \sim \mathcal{D}_{\mu}}  \underbrace{D_{\text{KL}} \left[ \pi_\theta(\cdot | s) \middle\| \pi^*(\cdot | s) \right]}_{\text{Reverse KL}} \Leftrightarrow \\
 \nonumber   & \max_{\theta} \underbrace{\mathbb{E}_{s \sim \mathcal{D}_{\mu}, a \sim \pi_\theta} Q_\phi(s, a) - \frac{1}{\beta}D_{\text{KL}} \left( \pi_\theta(\cdot | s) \middle\| \mu(\cdot | s) \right)}_{\text{Behavior-Regularized Policy Optimization}}.
\end{align}

\end{lemma}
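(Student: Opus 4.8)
The plan is to start from the definition of the reverse KL divergence and substitute the closed-form optimal policy $\pi^*(a\mid s) = \frac{1}{Z(s)}\mu(a\mid s)\exp(\beta Q_\phi(s,a))$ stated just above the lemma. Writing $D_{\text{KL}}\!\left[\pi_\theta(\cdot\mid s)\,\middle\|\,\pi^*(\cdot\mid s)\right] = \mathbb{E}_{a\sim\pi_\theta}\!\left[\log\pi_\theta(a\mid s) - \log\pi^*(a\mid s)\right]$ and expanding $\log\pi^*(a\mid s) = \log\mu(a\mid s) + \beta\, Q_\phi(s,a) - \log Z(s)$, the integrand splits into three pieces: the log-ratio $\log\!\big(\pi_\theta(a\mid s)/\mu(a\mid s)\big)$, the term $-\beta\, Q_\phi(s,a)$, and the constant $+\log Z(s)$.

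Next I would collect terms. The log-ratio piece integrates against $\pi_\theta(\cdot\mid s)$ to give $D_{\text{KL}}\!\left[\pi_\theta(\cdot\mid s)\,\middle\|\,\mu(\cdot\mid s)\right]$; the second piece gives $-\beta\,\mathbb{E}_{a\sim\pi_\theta}\!\left[Q_\phi(s,a)\right]$; and since $\log Z(s)$ does not depend on $a$, it leaves the expectation as an additive constant that, crucially, does not depend on $\theta$ (it depends only on $\mu$, $\beta$, and $Q_\phi$). Hence, for every fixed $s$, $D_{\text{KL}}\!\left[\pi_\theta\,\middle\|\,\pi^*\right] = D_{\text{KL}}\!\left[\pi_\theta\,\middle\|\,\mu\right] - \beta\,\mathbb{E}_{a\sim\pi_\theta}\!\left[Q_\phi(s,a)\right] + c(s)$, with $c(s)$ independent of $\theta$.

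Then I would take the expectation over $s\sim\mathcal{D}_\mu$ and argue that minimizing the left-hand side over $\theta$ is unaffected by dropping the $\theta$-independent term $\mathbb{E}_{s\sim\mathcal{D}_\mu}[c(s)]$ and by scaling the objective by the positive constant $1/\beta$, which converts the $\min$ into a $\max$ and reverses the sign. This yields exactly $\max_\theta \mathbb{E}_{s\sim\mathcal{D}_\mu,\, a\sim\pi_\theta} Q_\phi(s,a) - \tfrac{1}{\beta} D_{\text{KL}}\!\left(\pi_\theta(\cdot\mid s)\,\middle\|\,\mu(\cdot\mid s)\right)$, i.e.\ the Behavior-Regularized Policy Optimization objective on the right-hand side, completing the equivalence.

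The computation is essentially routine; the only points I expect to require care are (i) checking that $\pi^*$ as written is a genuine normalized conditional density, i.e.\ that $Z(s)=\int \mu(a\mid s)\exp(\beta Q_\phi(s,a))\,da<\infty$ so that $\log Z(s)$ is well defined (this is where one implicitly invokes boundedness of $Q_\phi$ or integrability of $\mu$), and (ii) reading the ``$\Leftrightarrow$'' as equality of the $\arg\min$/$\arg\max$ sets rather than of objective values, since the two objectives differ by the affine reparametrization derived above. These are the only places a skeptical reader is likely to push back.
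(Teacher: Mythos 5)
Your derivation is correct and is exactly the standard argument behind this lemma: substitute the closed-form $\pi^*(a\mid s)=\frac{1}{Z(s)}\mu(a\mid s)\exp(\beta Q_\phi(s,a))$ into the reverse KL, split off $D_{\text{KL}}[\pi_\theta\|\mu]$, the $-\beta\,\mathbb{E}_{a\sim\pi_\theta}[Q_\phi]$ term, and the $\theta$-independent $\log Z(s)$, then drop the constant and rescale by $1/\beta$. The paper itself does not supply a proof of this lemma (it is imported from the BRPO/SRPO literature), so there is nothing to diverge from; your two caveats --- normalizability of $Z(s)$ and reading ``$\Leftrightarrow$'' as equality of optimizers rather than of objective values --- are the right ones to flag.
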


\begin{figure*}[t]
  \centering
  \begin{subfigure}[t]{0.24\textwidth}
    \includegraphics[width=\linewidth]{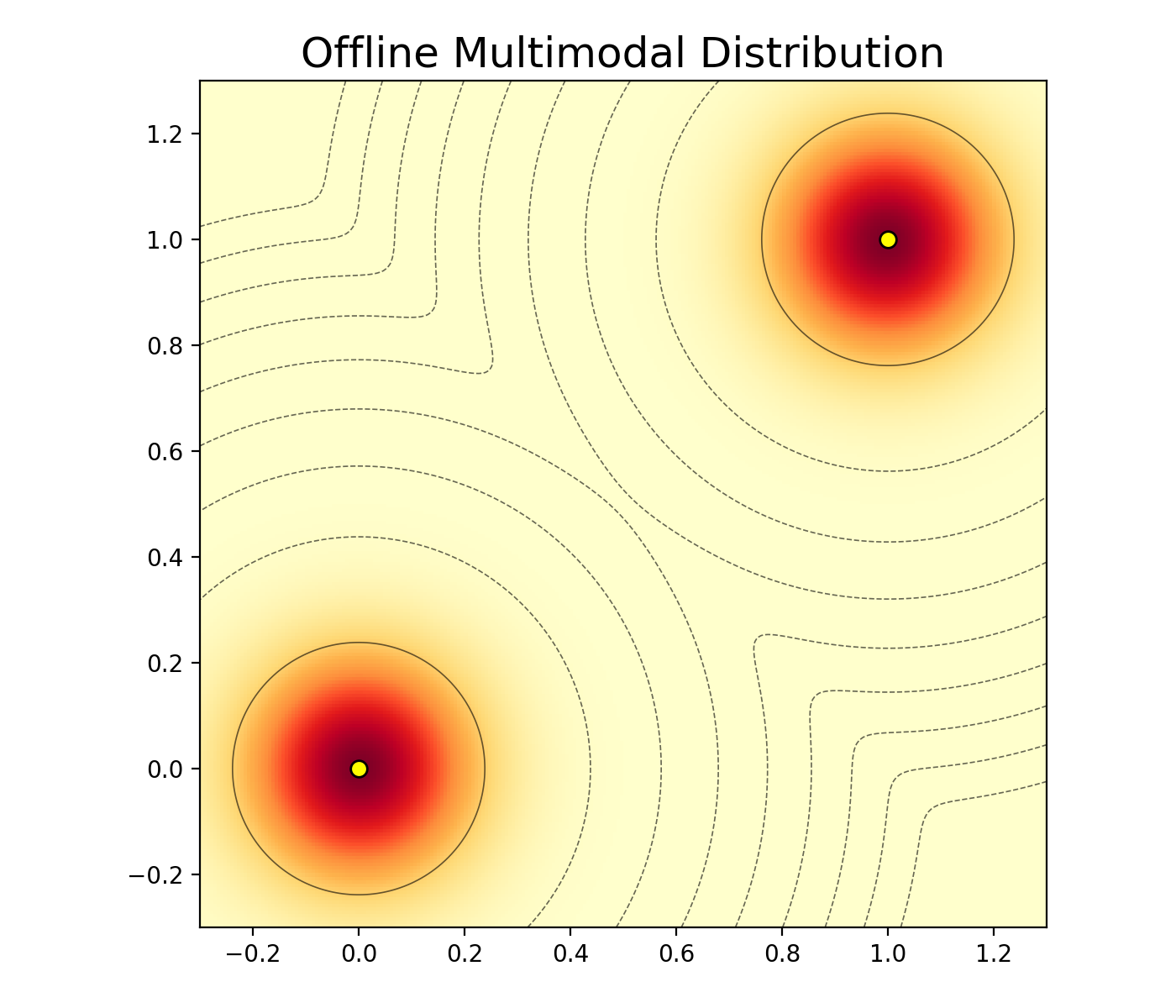}
    \caption{}
    \label{fig:score-org}
  \end{subfigure}
  \hfill
  \begin{subfigure}[t]{0.24\textwidth}
    \includegraphics[width=\linewidth]{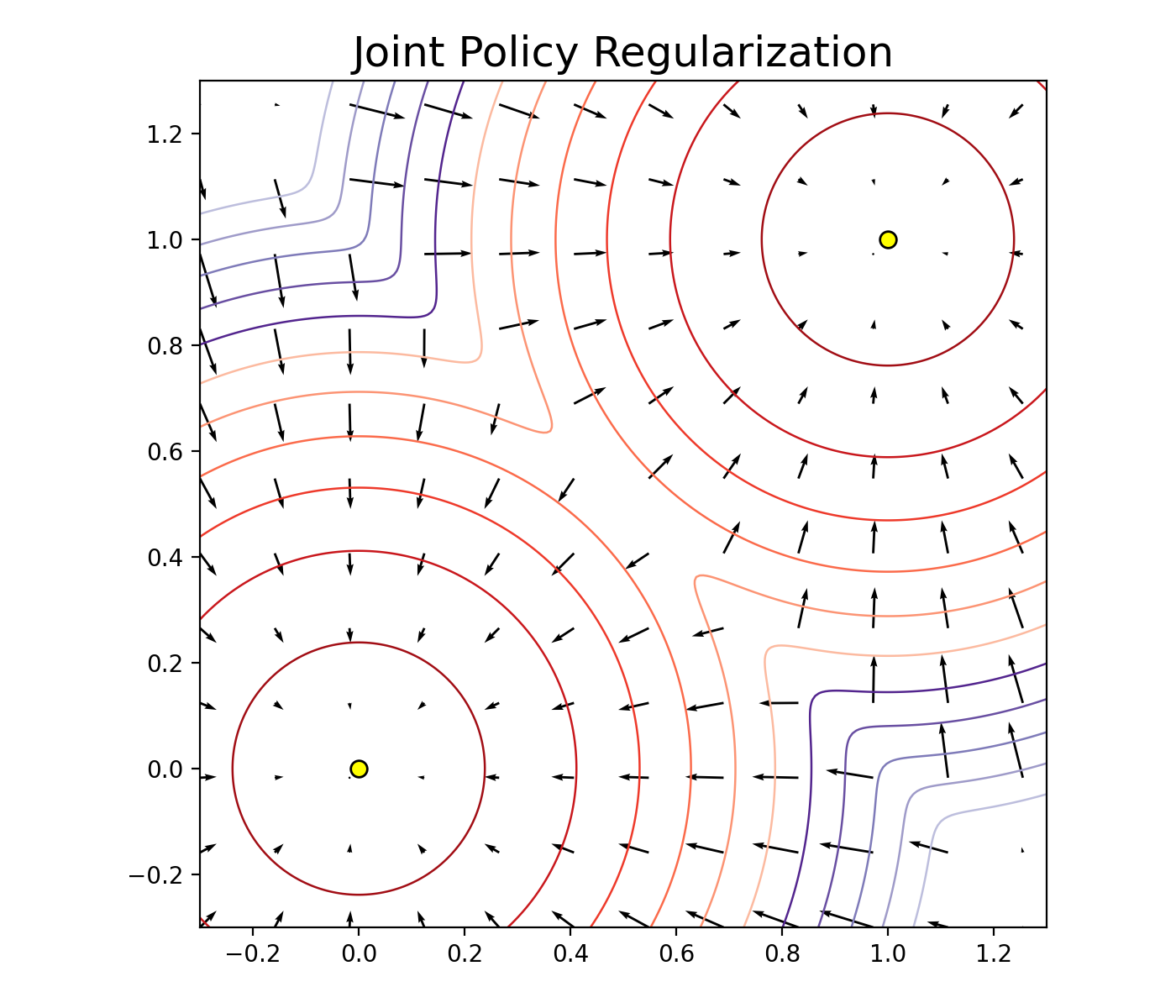}
    \caption{}
    \label{fig:score-jal}
  \end{subfigure}
  \hfill
  \begin{subfigure}[t]{0.24\textwidth}
    \includegraphics[width=\linewidth]{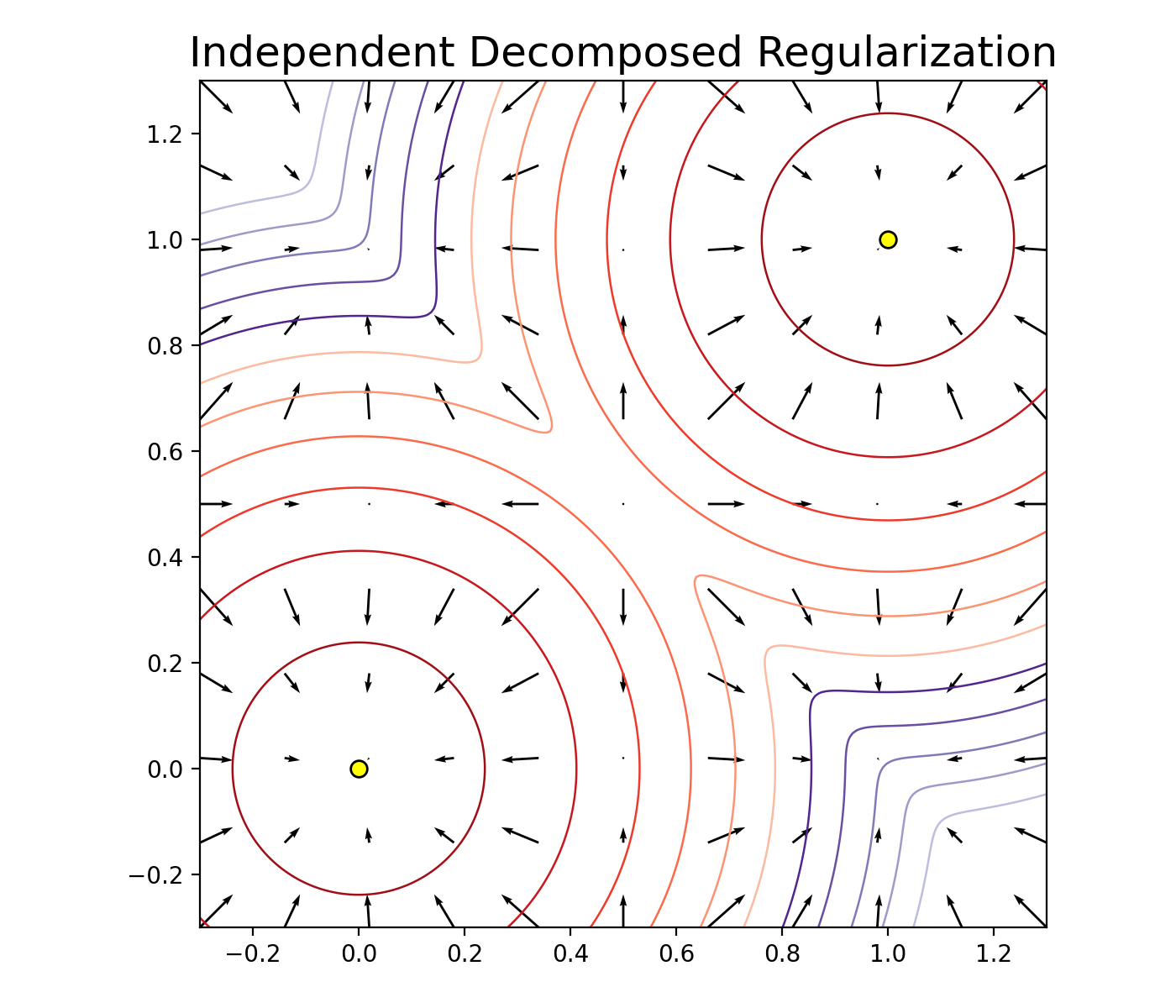}
    \caption{}
    \label{fig:score-ind}
  \end{subfigure}
  \begin{subfigure}[t]{0.24\textwidth}
    \includegraphics[width=\linewidth]{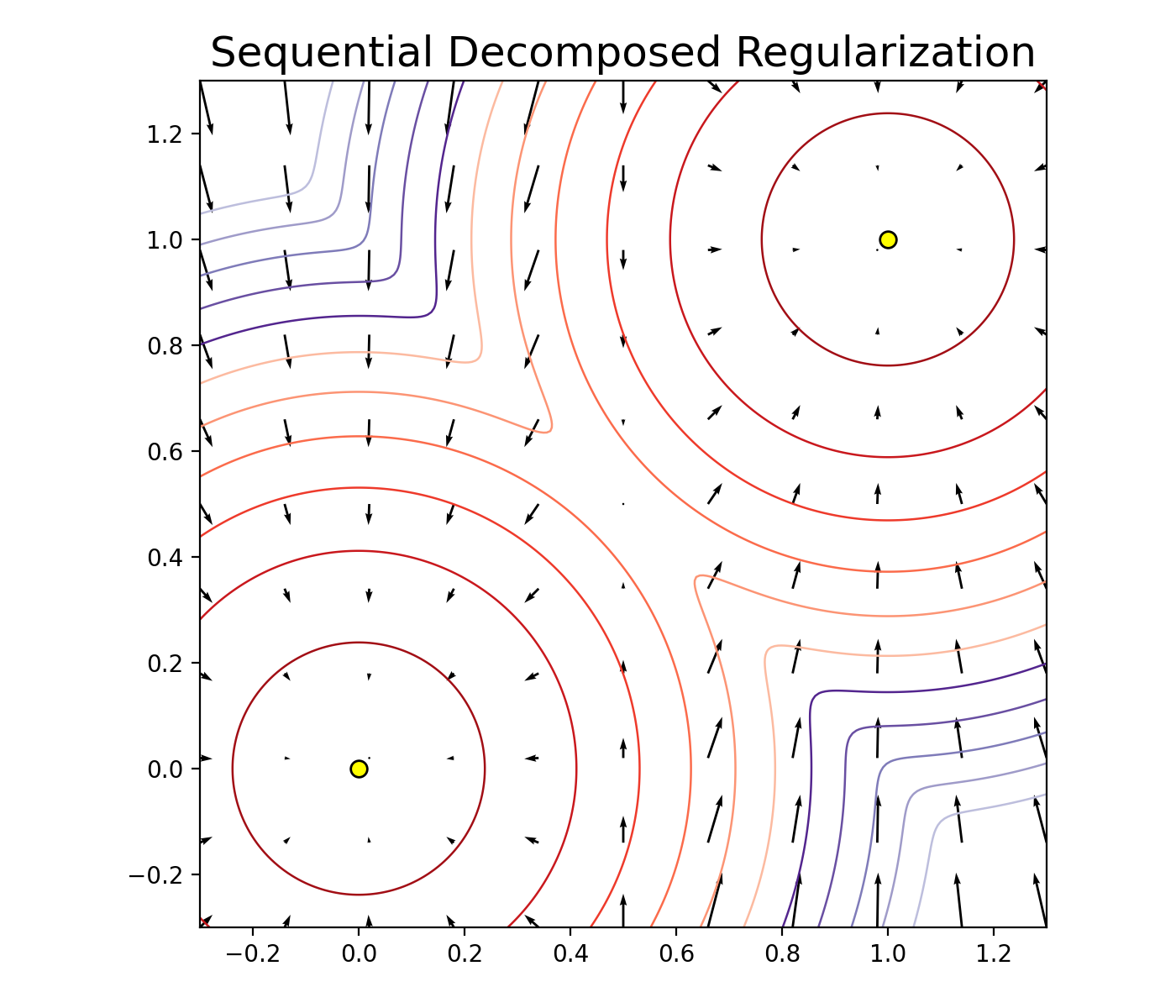}
    \caption{}
    \label{fig:score-seq}
  \end{subfigure}
  \caption{From left to right are (a) the original multimodal data distribution; (b) the canonical direction for joint action; (c) the biased direction induced by Combinatorial Mode Shift; (d) the sequential decomposition proposed by OMSD to better align the estimated direction with the canonical direction.}
  \label{fig:score}
\end{figure*}

\section{Methodology}\label{sec3}

\subsection{Joint Behavior Policy Factorization Mismatch in Offline MARL}\label{sec:3.1}

 The multimodality of joint behavior policy distributions in offline MARL arises from several key factors. First, many cooperative  games admit multiple joint policies with similar quality, which is the notorious multiple Nash equilibrium problem. This yields datasets with diverse but equally effective behaviors, complicating policy learning.  Second, in large-scale multi-agent systems, especially with homogeneous agents, data collection often anonymizes agent identities \citep{franzmeyer2024select}. Even under a single joint policy, agent trajectories become indistinguishable due to agent interchangeability, introducing inherent symmetry and multimodality.  Furthermore, offline datasets are often constructed by mixing demonstrations from various expert and suboptimal strategies due to the high cost of data collection, further increasing behavioral diversity.

Despite this evidence, a common pitfall in offline policy-based methods is the policy factorization assumption, which posits that the joint behavior policy can be factorized as $\mu(\boldsymbol{a}|s)=\prod_{i=1}^n \mu_i(a_i|s)$. For example, AlberDICE \citep[Eq.~4]{matsunaga2023alberdice} implements an occupancy measure penalty using a factorized model $d^D(s, a_i) \pi_{-i}^D(\boldsymbol{a}_{-i}|s, a_i)$, where $-i$ represents all agents except agent $i$, thereby regularizing each agent based on its own marginal behavior and effectively assuming conditional independence. Similarly, DOM2 \citep{li2023beyond} trains independent diffusion models for each agent based on local behavioral data, which presupposes that joint behavior can be recovered from marginal distributions. While such factorized regularization is well-motivated and effective in online settings with consistent exploration and joint update adaptation, it will lead to miscoordination of and a significant distribution shift in offline domains where the behavior policy is multimodal and strongly coupled. To formalize this issue, we analyze a stylized scenario and formulate it as a combinatorial mode mixing proposition. The complete proof is deferred to Appendix~\ref{app_proper3}.

\begin{proposition}[\textbf{Combinatorial Mode Shift (CMS)}]\label{theorem1}
Consider a fully cooperative $n$-player game with a single state and continuous action space $\mathcal{A} = [0, 1]^n$. Let $\pi^*$ be the optimal joint policy with two equally weighted optimal modes: $\mathbf{a}_1 = (1, ..., 1)$ and $\mathbf{a}_2 = (0, ..., 0)$. Let $\hat{\pi}$ be a factorized approximation of $\pi^*$ such that $\hat{\pi}(\mathbf{a}) = \prod_{i=1}^n \hat{\pi}_i(a_i)$, where each $\hat{\pi}_i$ is learned independently. Then we have each $\hat{\pi}_i$ converges to $\text{Uniform}(\{0, 1\})$. The reconstruction of joint policy $\hat{\pi}$ exhibits $2^n$ modes, each with probability $2^{-n}$. The total variation distance between $\pi^*$ and $\hat{\pi}$ is:
   \begin{equation*}
       \delta_{TV}(\pi^*, \hat{\pi}) = 1 - 2^{1-n}
   \end{equation*}
As $n \to \infty$, $\delta_{TV}(\pi^*, \hat{\pi}) \to 1$, indicating a severe distribution shift.
\end{proposition}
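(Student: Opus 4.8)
The plan is to reduce the problem to a discrete setting and then compute the total variation distance directly. First I would argue that, although the action space is $[0,1]^n$, the optimal joint policy $\pi^*$ places all its mass on the two atoms $\mathbf{a}_1 = (1,\dots,1)$ and $\mathbf{a}_2 = (0,\dots,0)$, so it is the uniform mixture $\pi^* = \tfrac12\delta_{\mathbf{a}_1} + \tfrac12\delta_{\mathbf{a}_2}$. Since each $\hat\pi_i$ is fit independently to the $i$-th marginal of the data generated by $\pi^*$, I would compute that marginal: under $\pi^*$, coordinate $a_i$ equals $1$ with probability $\tfrac12$ and $0$ with probability $\tfrac12$, hence the marginal is $\mathrm{Uniform}(\{0,1\})$, and an independent learner (minimizing, say, forward-KL / maximizing log-likelihood against that marginal) converges to exactly this distribution. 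This gives $\hat\pi_i = \tfrac12\delta_0 + \tfrac12\delta_1$.

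Next I would form the product $\hat\pi(\mathbf{a}) = \prod_{i=1}^n \hat\pi_i(a_i)$, which is the uniform distribution over the $2^n$ binary vertices $\{0,1\}^n$, each receiving mass $2^{-n}$. With both $\pi^*$ and $\hat\pi$ supported on $\{0,1\}^n$, the total variation distance becomes a finite sum: $\delta_{TV}(\pi^*,\hat\pi) = \tfrac12\sum_{\mathbf{v}\in\{0,1\}^n}\lvert \pi^*(\mathbf{v}) - \hat\pi(\mathbf{v})\rvert$. Splitting the sum over the two special vertices $\mathbf{a}_1,\mathbf{a}_2$ (where $\pi^* = \tfrac12$, $\hat\pi = 2^{-n}$) and the remaining $2^n - 2$ vertices (where $\pi^* = 0$, $\hat\pi = 2^{-n}$), I get $\delta_{TV} = \tfrac12\big[\,2\cdot(\tfrac12 - 2^{-n}) + (2^n-2)\cdot 2^{-n}\,\big]$, which simplifies to $1 - 2^{1-n}$. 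The limit $n\to\infty$ then follows immediately.

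There is no genuinely hard computational step here; the one point that needs care — and that I would state explicitly rather than gloss over — is the claim that each independently learned marginal $\hat\pi_i$ actually converges to $\mathrm{Uniform}(\{0,1\})$. This is where an assumption about the per-agent learning objective enters: if the agents minimize forward-KL (equivalently maximize data log-likelihood) the marginal is recovered exactly, whereas a mode-seeking reverse-KL objective with a unimodal parametric family would instead collapse to one of the two atoms, and the proposition's conclusion would not hold. So the main "obstacle" is really a modeling clarification: the proposition is implicitly about methods that fit the behavior marginals faithfully (as IGO-based behavior-cloning regularizers do), and I would make that hypothesis precise at the start of the proof. Given that, the rest is the elementary TV computation above, and the exponential blow-up of spurious modes ($2^n$ modes each of mass $2^{-n}$) is an immediate corollary of the product structure.
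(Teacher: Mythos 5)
Your proposal is correct and follows essentially the same route as the paper's proof: identify each marginal as $\mathrm{Uniform}(\{0,1\})$, form the product with $2^n$ modes of mass $2^{-n}$, and evaluate the total variation sum over the two true modes and the $2^n-2$ spurious ones to get $1-2^{1-n}$. Your explicit caveat that the convergence of $\hat{\pi}_i$ to the uniform marginal presupposes a moment/likelihood-matching (forward-KL) objective rather than a mode-seeking one is a useful clarification that the paper's proof leaves implicit.
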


This result is intended as a minimal illustration of the CMS pathology rather than a general convergence statement for arbitrary continuous environments. Nevertheless, it reveals a structural failure: even though the expert policy $\pi^*$ is low-entropy and well-coordinated, the factorized approximation $\hat{\pi}$ spreads its support over exponentially many incoherent joint actions. Such a combinatorial mode shift arises because each agent's behavior policy $\mu_i$ is forced to match its own marginal, ignoring inter-agent coordination. Consequently, each agent regresses to an average over modes in its own action space, resulting in an artificial mode mismatch: the high-probability joint actions under $\hat{\pi}$ may not correspond to any trajectory in the dataset.

During offline policy update, marginal behavior regularization can therefore become a misleading constraint:
$\mathcal{D}_{\mathrm{KL}}\!\left(\pi_{\theta_i}(\cdot|s)\,\middle\|\,\mu_i(\cdot|s)\right)$
may steer individual policies toward marginally plausible but jointly incoherent actions. The recovered joint policies lose alignment with any real mode in the datasets, leading to low-efficiency exploration in areas with low data coverage regions. Specifically for the BRPO algorithm, we can summarize the biased regular coordination caused by CMS into combinatorial mode shift.

\begin{corollary}[Joint Policy Distribution Shift]\label{prop:cms} Let $\mu(\mathbf{a}|s)$ be a joint behavior distribution with $K$ coordinated modes over $n$ agents. When each agent regularizes to its own marginal $\mu_i(a_i|s)$ and the joint policy is factorized as $\prod_{i=1}^n \pi_i(a_i|s)$, the resulting policy exhibits probability mass on $K^n$ joint actions. As $n$ grows, the total variation distance $\delta_{TV}(\mu, \prod_i \pi_i) \to 1$, indicating a severe distribution shift from the data distribution. \end{corollary}

This pitfall holds whether the underlying BRPO variant is fully independent or uses a centralized critic with the CTDE framework: as long as the regularization is decomposed over agent marginals, policy updates can drift toward spurious high-density configurations unrepresentative of any valid global coordinated behavior in the data. We use a simple two-Gaussian mixture in Fig.~\ref{fig:score} to visualize how different policy decomposition choices induce different regularization directions. From an optimization perspective, this failure can be viewed as a structural objective mismatch: marginal regularization constrains the learned policy toward $\prod_i \mu_i(a_i|s)$ rather than the true joint behavior distribution $\mu(\boldsymbol{a}|s)$. Appendix~\ref{app:objective_mismatch} provides a more detailed discussion of this mismatch and how sequential decomposition removes this factorization error at the behavior-policy level.

\begin{figure}[t]
    \centering
    \includegraphics[width=0.95\linewidth]{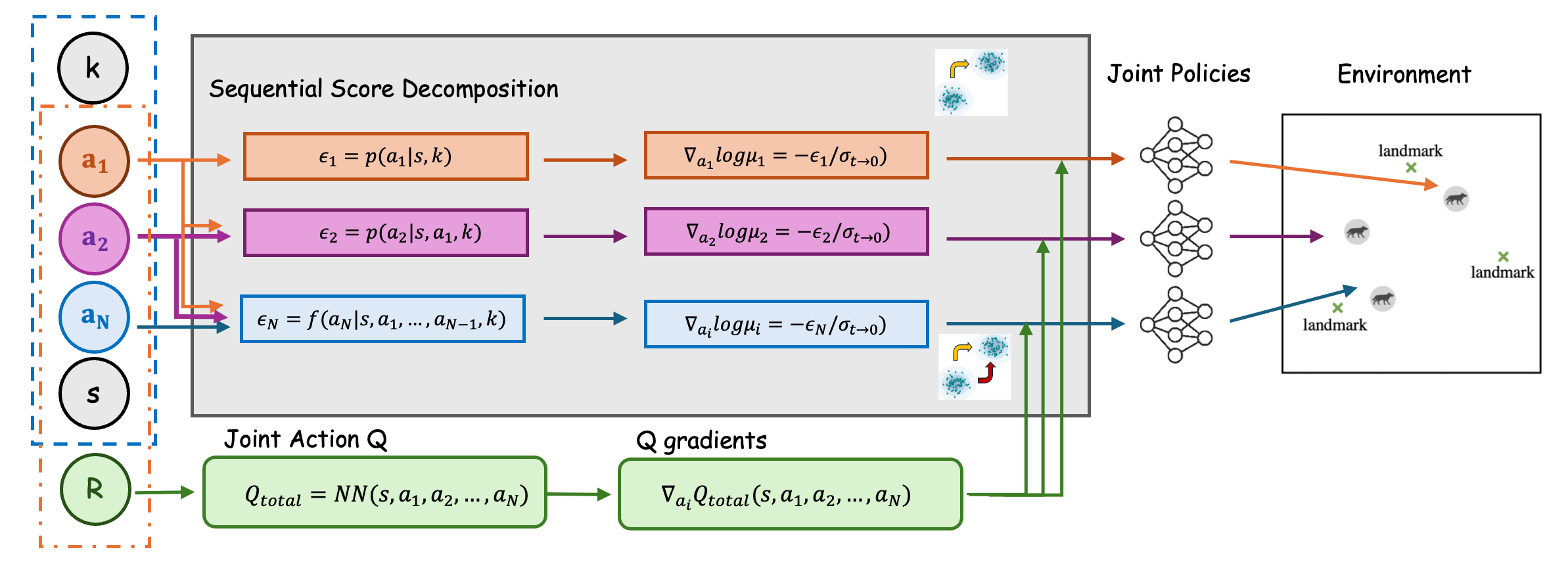}
    \caption{Illustration of OMSD: (Top Row) Training sequential diffusion models for each agent to distill score regularization, (Bottom Row) Plugin the sequential score models with joint action Q-gradient.}
    \label{fig:omsd_illu}
\end{figure}

\subsection{Sequential Score Decomposition of Joint Behavior Policy}\label{sec:3.2}

To address these limitations, we propose a novel policy learning framework named \textbf{Offline MARL with Sequential Score Decomposition} (OMSD). Figure~\ref{fig:omsd_illu} summarizes the OMSD workflow, where joint offline data is used to train a global $Q^{tot}(s,\boldsymbol{a})$ and agent-wise conditional diffusion score models. Our method is designed to provide an unbiased reference for coordination regularization and decentralized policy updates in offline learning where joint behavior distributions $\mu(\boldsymbol{a}|s)$ are often complex and highly entangled. Here, ``unbiased'' refers to using the exact chain-rule decomposition of the behavior policy, rather than independent marginal factorization.

Inspired by coordinate descent and rollout update \citep{wang2023order}, we address this issue via a \textit{sequential decomposition} of the joint behavior policy. Specifically, we model the behavior distribution as:
\begin{equation*}
    \boldsymbol{\mu}(\boldsymbol{a}|s) = \Pi_{i=1}^n \hat{\mu}_i(a_i|s, a_{<i}),
\end{equation*} 
where $a_{<i}$ denotes the joint actions of all preceding agents, i.e., $a_{<i} = (a_1, \ldots, a_{i-1})$, with each $a_j$ sampled from the corresponding policy $\pi_j(a_j|s)$ for $j = 1, \ldots, i-1$.  This sequential modeling allows each agent to learn its behavior not in isolation but conditionally on earlier agents, capturing inter-agent dependencies without requiring full joint modeling. This structure helps keep individual policy constraints aligned with the joint behavior distribution, reducing the risk of OOD joint policies.

Following the BRPO framework~\citep{chen2024score}, we formulate policy-based offline MARL under the CTDE paradigm. The goal is to learn decentralized policies $\pi_\theta(\boldsymbol{a}|s)=\prod_{j=1}^n \pi_{\theta_j}(a_j|s)$ that maximize the joint value while remaining close to the sequentially decomposed behavior distribution. For the coordinate-wise update of agent $i$, we write:
{\small
\begin{align}\label{OMSD_loss_total}
 & \mathcal{L}^i
 =  \max_{\theta_i}
 \mathbb{E}_{s \sim \mathcal{D}^\mu, \boldsymbol{a} \sim \pi_\theta(\cdot \mid s)}
 Q^{tot}(s, \boldsymbol{a}) \\
 & \nonumber  -
 \frac{1}{\beta}
 D_{\mathrm{KL}}\left[
 \pi_{\theta_i}(\cdot \mid s)\pi_{\theta_{-i}}(\cdot \mid s)
 \,\middle\|\,
 \mu_i(\cdot \mid s,a_{<i})\mu_{-i}(\cdot \mid s,a_{<i})
 \right], 
\end{align}}
where $Q^{tot}(s, \boldsymbol{a})$ represents the joint state-action value estimation, 
$\mu_{-i}(\boldsymbol{a}_{-i}\mid s,a_{<i})=\prod_{j<i}\mu_j(a_j\mid s,a_{<j})\prod_{j>i}\mu_j(a_j\mid s,a_{<j})$ denotes the remaining chain-rule factors of the behavior policy. Equivalently, the behavior reference is the full sequential decomposition
$\mu(\boldsymbol{a}|s)=\prod_{j=1}^n \mu_j(a_j|s,a_{<j})$.

When updating $\theta_i$, actions sampled from other agents are treated as fixed and gradients are not propagated through these sampled actions. Moreover, OMSD uses a local coordinate-wise regularization step: the suffix behavior factors in $\mu_{-i}$ are treated as fixed references for this update, so the policy-gradient regularizer for agent $i$ only uses the conditional score $\nabla_{a_i}\log \mu_i(a_i\mid s,a_{<i})$. Under this update rule, we obtain:
\begin{align}\label{omsd_grad}
 \nabla_{\theta_i} & \mathcal{L}^i = \mathbb{E} \left[ \nabla_{a_i} Q^{tot}(s, \boldsymbol{a}) \big|_{\boldsymbol{a}=\pi_{\theta}(s)} \right. \\
\nonumber & \left. + \frac{1}{\beta} \nabla_{a_i} \log \mu_i(a_i \mid s, a_{<i}) \big|_{a_i=\pi_{\theta_i}(s)} \right] \nabla_{\theta_i} \pi_{\theta_i}(a_i|s),
\end{align}
This gradient update allows each agent to balance between maximizing expected return and adhering to its own conditional behavior policy, conditioned on the updated actions of its prefix agents. Such bottom-up sequential guidance helps reduce distributional shift by encouraging later agents to choose actions compatible with the prefix context. The sequential conditioning is used during policy optimization, while execution remains decentralized.

\subsection{Practical Algorithm}

The policy update gradient in Eq.~\eqref{omsd_grad} consists of a centralized Q-gradient and a gradient of an unknown logarithm probability distribution. We first adopt centralized offline IQL to learn a joint value function, and then pretrain a conditional diffusion model for each agent, where $-\hat \epsilon_i/\sigma_t$ approximates the conditional score $\nabla_{a_i}\log \mu_i(a_i|s,a_{<i})$ at low noise levels. Each agent's score model is trained using only the dataset, and the pretraining is fully parallelizable across agents, making this stage amenable to larger agent teams when parallel resources are available.

Inspired by SRPO \citep{chen2024score}, instead of explicitly modeling the behavior policy distribution $\mu_i(a_i|s,a_{<i})$, we can distill approximations to agent-wise score functions $\nabla_{a_i} \log \mu_i(a_i | s, a_{<i})$ from pretrained diffusion models as gradient regularization into policy update at low noise levels ($t \to 0$), efficiently providing score approximations without requiring sampling actions through a costly denoising process. This transforms policy decomposition into direction-aware regularization, effectively controlling update deviation and encouraging high-value yet conservative exploration. Formally, based on the regularized objective in~\eqref{OMSD_loss_total}, the practical policy gradient becomes: 
\begin{align}\label{practical_obj}
 & \nabla_{\theta_i} \mathcal{L}_{OMSD}^i(\theta_i)  =  \mathbb{E}[  \nabla_{a_i} Q_\phi(s, \boldsymbol{a}) \\
\nonumber & +\frac{1}{\beta} \underbrace{\left.\nabla_{a_i}\log \mu_i(a_i \mid s, a_{<i})\right|_{a_i=\pi_{\theta_i}(s),\, a_{<i}=a_{<i}^{\mathrm{new}}}}_{\approx-\hat{\epsilon}_i^*\left(a_i \mid s,a_{<i}^{\mathrm{new}}, t\right)/\sigma_t,\; t \rightarrow 0}] \nabla_{\theta_i} \pi_{\theta_i}(s).
\end{align}

To compute the regularization score $\nabla_{a_i} \log \mu_i(a_i | s, a_{<i})$ for $\pi_i^t$, OMSD adopts a sequential update scheme during policy update, where agent $i$ conditions on prefix actions $a_{<i}^{\mathrm{new}}$ sampled from the most recently updated policies $\{\pi_j^{\mathrm{new}}\}_{j<i}$ within the same iteration. Here, $a_{<i}^{\mathrm{new}}$ indicates that, for each agent $i$, the prefix actions are generated by the current versions of agents $1$ to $i-1$ after their latest updates in this round. This sequential conditioning is only applied during the policy optimization process to enable coordinated learning, while all agents can still act concurrently and independently during execution. This mechanism encourages the score regularization directions to remain compatible with in-distribution modes of the dataset. To reduce variance in these prefixes and stabilize score estimation, we use deterministic DiLac policies, which preserve expressiveness while avoiding noise amplification in continuous control tasks. Note that the sequential structure is only required during policy update, which provides flexibility for concurrent decentralized execution and parallel diffusion models pretraining. The pseudo code is available in Appendix~\ref{pseudocode}. For more details, refer to Appendix~\ref{app:practical}.

\section{Experiments and Results}
In this section, we evaluate the proposed method OMSD on a bandit example and the challenging high-dimensional continuous control multi-agent testbeds (\texttt{MPE}) \citep{lowe2017multi} and \texttt{MaMuJoCo} \citep{peng2021facmac}. We aim to address the following questions: (i) Can OMSD learn high-quality coordinated policies from sub-optimal datasets with multimodality distribution? (ii) How do policy factorization methods, e.g., Independent Factorization and Sequential Score Decomposition, influence the policy update? (iii) Can OMSD effectively avoid OOD distribution shift problems?

\textbf{Environments.} In the bandit example, we design a 2-agent fully cooperative task where the reward function is $r_i = a_1*a_2$ for $i=1,2$. The optimal rewards are achieved with joint actions $[-1, -1]$ and $[1, 1]$. \texttt{MPE} include 3 tasks requiring agents cooperation to conver landmarks or catch the pretrained prey opponent in a 2D environment. In \texttt{MaMuJoCo}, each part of a robot is modeled as an independent agent and learn optimal motions through cooperating with each other. See Appendix~\ref{app:exp_detail} for further details.

\begin{table}[t]
\centering
\caption{Evaluation rewards after convergence for the toy example.}\label{table:bandit}
\resizebox{\columnwidth}{!}{%
\begin{tabular}{ccccc}
    \toprule
     & BRPO-IND & BRPO-JAL & BRPO-CTDE & OMSD (Ours) \\
    \midrule
     & 0$\pm$1 & 1$\pm$0 & 0$\pm$1 & 1$\pm$0 \\
    \bottomrule
\end{tabular}%
}
\end{table}

\textbf{Datasets.} For bandit problem, we generate an action dataset by randomly sampling 1,000,000 times from a 2-Gaussian mixed model with mean values $\mu_{0} = [0.8, 0.8], \mu_{1} = [-0.8, -0.8]$ and variance $\sigma_{0} = \sigma_{1} = 0.3$. Considering the inconsistencies in datasets and baselines in previous research, as noted by  \cite{formanek2024dispelling}, we select three of the most well-evaluated benchmarks, the \texttt{MPE} datasets provided by OMAR \cite{pan2022plan},  and two \texttt{MaMuJoCo} datasets provided by OG-MARL \cite{formanek2023off} and OMIGA \cite{wang2023diffusion}. Each dataset contains datasets of various qualities, ranging from expert to random. 


\textbf{Baselines.} In the bandit setting, to clearly compare the learning dynamics of different policy decomposition under multimodal datasets, we extend the standard BRPO algorithm to a multi-agent version, including BRPO-JAL (joint action learning), BRPO-IND (independent learning), and BRPO-CTDE. Detailed algorithmic descriptions are provided in the Appendix \ref{app:theorem}. For high-dimensional tasks, we benchmark against state-of-the-art offline MARL methods, including independent learning approaches (BC, MATD3+BC, MA-ICQ, OMAR \citep{pan2022plan}), CTDE value decomposition methods (MA-CQL \citep{jiang2021offline} and CFCQL \citep{shao2023counterfactual}), and diffusion-based techniques (MADiff \citep{zhu2024madiff} and DoF \citep{li2025dof}).

\begin{table*}[t]
\centering
\caption{The average normalized score on offline MARL tasks with OMAR datasets. Shaded columns represent our method. Results are reported as mean $\pm$ standard deviation over 5 seeds. Bold and underline denote the best and second-best mean respectively.}
\resizebox{1\textwidth}{!}{\begin{tabular}{c|c|c|cccccc|ccc}
    \toprule
    \textbf{Testbed} & \textbf{Task} & \textbf{Dataset} & \textbf{BC} & \textbf{MA-ICQ} & \textbf{MA-CQL} & \textbf{MA-TD3+BC} & \textbf{OMAR} & \textbf{CFCQL} &  \textbf{MADiff-D}  & \textbf{DoF-P} & \textbf{OMSD}\\
    \midrule
    \multirow{10}{*}{MPE} & \multirow{3}{*}{Cooperative Navigation} 
    & Expert & 35.0 $\pm$ 2.6 & 104.0 $\pm$ 3.4 & 98.2 $\pm$ 5.2 & 108.3 $\pm$ 3.3 & \underline{114.9 $\pm$ 2.6} & 112 $\pm$ 4.0 & 95.0 $\pm$ 11.9 & \textbf{126.3 $\pm$ 3.1} & \cellcolor{blue!10} 102.3 $\pm$ 3.1 \textcolor{teal}{(-22.1\%)}\\
    
    & & Medium & 31.6 $\pm$ 4.8 & 29.3 $\pm$ 5.5 & 34.1 $\pm$ 7.2 & 29.3 $\pm$ 4.8 & 47.9 $\pm$ 18.9 & \underline{65.0 $\pm$ 10.2} & 64.9 $\pm$ 17.2 & 60.5 $\pm$ 8.5 & \cellcolor{blue!10} \textbf{70.1 $\pm$ 3.1} \textcolor{red}{(+7.8\%)}\\
    
    & & Random & -0.5 $\pm$ 3.2 & 6.3 $\pm$ 3.5 & 24.0 $\pm$ 9.8 & 9.8 $\pm$ 4.9 & 34.3 $\pm$ 5.3 & \underline{62.2 $\pm$ 8.1} & 6.9 $\pm$ 6.9 & 34.5 $\pm$ 5.4 & \cellcolor{blue!10} \textbf{69.8 $\pm$ 10.3} \textcolor{red}{(+12.1\%)} \\
    \cmidrule{2-12}
    & \multirow{3}{*}{Predator Prey} 
    & Expert & 40.0 $\pm$ 9.6 & 113.0 $\pm$ 14.4 & 93.9 $\pm$ 14.0 & 115.2 $\pm$ 12.5 & 116.2 $\pm$ 19.8 & 118.2 $\pm$ 13.1 & \underline{120.9 $\pm$ 32.6} & 120.1 $\pm$ 6.3 & \cellcolor{blue!10} \textbf{161.4 $\pm$ 9.4} \textcolor{red}{(+33.5\%)}\\
    
    & & Medium & 22.5 $\pm$ 1.8 & 63.3 $\pm$ 20.0 & 61.7 $\pm$ 23.1 & 65.1 $\pm$ 29.5 & 66.7 $\pm$ 23.2 & 68.5$\pm$21.8 & 77.2 $\pm$ 23.3 & \underline{83.9 $\pm$ 9.6} & \cellcolor{blue!10} \textbf{137.1 $\pm$ 14.1} \textcolor{red}{(+63.0\%)}\\
    
    & & Random & 1.2 $\pm$ 0.8 & 2.2 $\pm$ 2.6 & 5.0 $\pm$ 8.2 & 5.7 $\pm$ 3.5 & 11.1 $\pm$ 2.8 & \underline{78.5$\pm$15.6} & 3.2 $\pm$ 8.9 & 14.8 $\pm$ 3.2 & \cellcolor{blue!10} \textbf{133.9 $\pm$ 16.5} \textcolor{red}{(+70.6\%)}\\
    \cmidrule{2-12}
    & \multirow{3}{*}{World} 
    & Expert & 33.0 $\pm$ 9.9 & 109.5 $\pm$ 22.8 & 71.9 $\pm$ 28.1 & 110.3 $\pm$ 21.3 & 110.4 $\pm$ 25.7 & 119.7 $\pm$ 26.4 & 122.6 $\pm$ 32.2 & \underline{138.4 $\pm$ 20.1} & \cellcolor{blue!10} \textbf{163.9 $\pm$ 24.1} \textcolor{red}{(+18.4\%)}\\
    
    & & Medium & 25.3 $\pm$ 2.0 & 71.9 $\pm$ 20.0 & 58.6 $\pm$ 11.2 & 73.4 $\pm$ 9.3 & 74.6 $\pm$ 11.5 & 93.8 $\pm$ 31.8 & \underline{123.5 $\pm$ 10.1} & 86.4 $\pm$ 10.6 & \cellcolor{blue!10} \textbf{160.3 $\pm$ 9.2} \textcolor{red}{(+29.8\%)} \\
    
    & & Random & -2.4 $\pm$ 0.5 & 1.0 $\pm$ 3.2 & 0.6 $\pm$ 2.0 & 2.8 $\pm$ 5.5 & 5.9 $\pm$ 5.2 & \underline{68 $\pm$ 20.8} & 2.0 $\pm$ 6.7 & 15.1 $\pm$ 3.0 & \cellcolor{blue!10} \textbf{141.1 $\pm$ 13.0} \textcolor{red}{(+107.5\%)}\\
    \cmidrule{2-12}
    & Average Score 
    &  & 20.6 $\pm$ 3.9 & 55.6 $\pm$ 10.6 & 49.8 $\pm$ 12.1 & 57.8 $\pm$ 10.5 & 64.7 $\pm$ 12.8 & \underline{87.3 $\pm$ 16.9} & 68.5 $\pm$ 16.5 & 75.6 $\pm$ 7.8 & \cellcolor{blue!10} \textbf{126.7 $\pm$ 11.4} \textcolor{red}{(+33.2\%)}\\
    
    \midrule
    \multirow{9}{*}{\texttt{MaMuJoCo} (210)} 
    & \multirow{3}{*}{2-HalfCheetah} & Good & 6846 $\pm$ 574 & - & - & 7025 $\pm$ 439 & 1434 $\pm$ 1903 & - & \underline{8246 $\pm$ 765} & - & \cellcolor{blue!10} \textbf{8619 $\pm$ 418} \textcolor{red}{(+4.5\%)} \\
    & & Medium & 1627 $\pm$ 187 & - & - & \underline{2561 $\pm$ 82} & 1892 $\pm$ 220 & - & 2207 $\pm$ 51 & - & \cellcolor{blue!10} \textbf{2660 $\pm$ 125} \textcolor{red}{(+3.9\%)} \\
    & & Poor & 465 $\pm$ 59 & - & - & 736 $\pm$ 72 & 384 $\pm$ 420 & - & \underline{759 $\pm$ 40} & - & \cellcolor{blue!10} \textbf{866 $\pm$ 78} \textcolor{red}{(+14.1\%)} \\
    \cmidrule{2-12}
    & \multirow{3}{*}{2-Ant} & Good & 2697 $\pm$ 267 & - & - & \underline{2922 $\pm$ 194} & 464 $\pm$ 469 & - & \textbf{2946 $\pm$ 172} & - & \cellcolor{blue!10} 2714 $\pm$ 555 \textcolor{teal}{(-7.9\%)} \\
    & & Medium & 1145 $\pm$ 126 & - & - & 744 $\pm$ 283 & 799 $\pm$ 186 & - & \underline{1211 $\pm$ 154} & - & \cellcolor{blue!10} \textbf{1372 $\pm$ 107} \textcolor{red}{(+13.1\%)}\\
    & & Poor & 954 $\pm$ 80 & - & - & \textbf{1256 $\pm$ 122} & 857 $\pm$ 73 & - & 946 $\pm$ 148 & - & \cellcolor{blue!10} \underline{1213 $\pm$ 212} \textcolor{teal}{(-3.5\%)}\\
    \cmidrule{2-12}
    & \multirow{3}{*}{4-Ant} & Good & 2802 $\pm$ 133 & - & - & 2628 $\pm$ 971 & 344 $\pm$ 631 & - & \textbf{3080 $\pm$ 85} & - & \cellcolor{blue!10} \underline{2844 $\pm$ 152} \textcolor{teal}{(-7.7\%)}\\
    & & Medium & 1617 $\pm$ 153 & - & - & \underline{1843 $\pm$ 494} & 929 $\pm$ 349 & - & 1649 $\pm$ 224 & - & \cellcolor{blue!10} \textbf{1942 $\pm$ 293} \textcolor{red}{(+5.3\%)} \\
    & & Poor & 1033 $\pm$ 122 & - & - & 1075 $\pm$ 96 & 518 $\pm$ 112 & - & \underline{1295 $\pm$ 127} & - & \cellcolor{blue!10} \textbf{1477 $\pm$ 192} \textcolor{red}{(+14.1\%)}\\

    \bottomrule
\end{tabular}}
\label{table:mpemujoco}
\end{table*}

\subsection{Bandit Examples}\label{sec:bandit_exp}

 As shown in Table \ref{table:bandit}, OMSD demonstrates performance comparable to joint action learning algorithm BRPO-JAL, outperforming independent learning and naive CTDE methods with the factorization assumption. Clearly, both BRPO-IND and BRPO-CTDE struggle with OOD  joint actions like $[1, -1]$ and $[-1, 1]$. This issue is more pronounced in continuous tasks compared to discrete XOR Matrix Games in \cite{matsunaga2023alberdice}, where behavior policies with limited expressivity often struggle to capture complex multimodal distributions \citep{wang2023diffusion}.

Furthermore, in Fig. \ref{fig:score}, we visualize the policy regularization gradient directions during training by sampling joint actions. Independent factorization methods such as BRPO-IND and BRPO-CTDE exhibit miscoordination among independent regularization, potentially leading to OOD joint actions. Benefiting from sequential conditional score decomposition and centralized critics, OMSD better aligns reward-seeking updates with behavior regularization directions, thereby encouraging convergence toward high-value modes within the dataset distribution. Our results highlight OMSD’s effectiveness in enforcing the policy update within the joint behavior policy distribution and improving coordination. More detailed discussion about BRPO-IND and BRPO-CTDE can be found in Appendix \ref{app:theorem}.

\subsection{High-Dimensional Continuous Control Tasks}

We further evaluate OMSD on more complex continuous-control tasks in the \texttt{MPE} and \texttt{MaMuJoCo} suites. Table~\ref{table:mpemujoco} reports results on the OMAR and OG-MARL benchmarks, including normalized scores on \texttt{MPE} and raw episode returns on \texttt{MaMuJoCo}, while Table~\ref{table:omiga_main} reports additional \texttt{MaMuJoCo} results on the OMIGA datasets. Most baseline results are taken from the corresponding prior papers and standardized benchmark reports, including CFCQL, MADiff, OMIGA, and DoF. Since MADiff-D reports standard errors, we convert its uncertainty values to standard deviations over five seeds for consistency with OMSD. For \texttt{MPE}, normalized scores are computed as $100 \times (S - S_{Random})/(S_{Expert} - S_{Random})$ following \citet{pan2022plan}. The expert and random scores for Cooperative Navigation, Predator Prey, and World are $\{516.8, 159.8\}$, $\{185.6, -4.1\}$, and $\{79.5, -6.8\}$, respectively.

OMSD outperforms existing methods on most tasks, with particularly large gains on medium and random datasets where multimodal behavior distributions are more pronounced. On these datasets, OMSD often approaches the high-return episodes observed in the offline datasets (Appendix~\ref{app:datasets}), suggesting that sequential score regularization helps identify higher-quality behavior modes while remaining close to dataset-supported regions. On the few tasks where OMSD performs worse than the strongest baseline, we find that policy improvement is mainly limited by the pretrained centralized critic: although the diffusion model can capture multimodal behavior structure, a weak critic provides insufficient reward-improvement guidance. Additional hyperparameter and pretraining details are provided in Appendix~\ref{app:exp_detail}.

To further compare OMSD with diffusion-based offline MARL approaches, we include MADiff-D~\citep{zhu2024madiff}, a decentralized execution variant that uses diffusion models for trajectory planning, and DoF-P~\citep{li2025dof}, which uses a diffusion actor with factorized noise. We focus on MADiff-D rather than MADiff-C in the main comparison because OMSD targets decentralized execution, while MADiff-C is a centralized generation-based planner. Although MADiff-C can avoid independent-factorization CMS through centralized modeling, its generation-based planning paradigm can be sensitive to compounding sampling errors on low-quality datasets~\citep{zhu2024madiff}. In contrast, OMSD uses diffusion models only as conditional score estimators for policy regularization, avoiding iterative diffusion sampling during execution. Across these diffusion-based baselines, OMSD achieves stronger performance on most tasks, especially in scenarios requiring coordinated behavior. We attribute this advantage to sequential conditional score regularization, which captures inter-agent dependencies and directly shapes policy-gradient directions through behavior-aware regularization.

\begin{table*}[h]
\centering
\caption{Experiment results on the   \texttt{MaMuJoCo} environments with OMIGA \citep{wang2023IGL} datasets. }
\resizebox{1\textwidth}{!}{%
\begin{tabular}{c|c|cccccc}
\toprule
\textbf{Task} & \textbf{Dataset} & \textbf{BCQMA} & \textbf{CQLMA} & \textbf{ICQ} & \textbf{OMAR} & \textbf{OMIGA} & \textbf{OMSD (ours)} \\
\midrule
\multirow{4}{*}{6-HalfCheetah}
& Expert         & 2992.71$\pm$629.65 & 1189.54$\pm$1034.49 & 2955.94$\pm$459.19 & -206.73$\pm$161.12	 & \underline{3383.61$\pm$552.67} & \cellcolor{blue!10} \textbf{5545$\pm$156} \textcolor{red}{(+64\%)} \\
& Medium-Expert  & \underline{3543.70$\pm$780.89} & 1194.23$\pm$1081.06 & 2833.99$\pm$420.32 & -253.84$\pm$63.94 & 2948.46$\pm$518.89 & \cellcolor{blue!10} \textbf{5237$\pm$46} \textcolor{red}{(+48\%)} \\
& Medium-Replay  & -333.64$\pm$152.06 & 1998.67$\pm$693.92 & 1922.42$\pm$612.87	 & -235.42$\pm$154.89 & \underline{2504.70$\pm$83.47} & \cellcolor{blue!10} \textbf{4582$\pm$52} \textcolor{red}{(+83\%)}\\
& Medium         & 2590.47$\pm$1110.35 & 1011.35$\pm$1016.94 & 2549.27$\pm$96.34 & -265.68$\pm$146.98	 & \underline{3608.13$\pm$237.37}	 & \cellcolor{blue!10} \textbf{4695$\pm$62} \textcolor{red}{(+30\%)}\\
\cmidrule{1-8}
\multirow{3}{*}{3-Hopper}
& Expert         & 77.85$\pm$58.04 &  159.14$\pm$ 313.83 & 754.74$\pm$ 806.28 & 2.36$\pm$ 1.46 & \underline{859.63$\pm$709.47} & \cellcolor{blue!10} \textbf{ 3595 $\pm$ 66} \textcolor{red}{(+329\%)} \\
& Medium-Expert  & 54.31$\pm$23.66 &  64.82$\pm$123.31 &  355.44$\pm$373.86 & 1.44$\pm$0.86 & \underline{709.00$\pm$595.66} & \cellcolor{blue!10} \textbf{ 3568 $\pm$ 45 }\textcolor{red}{(+403\%)} \\
& Medium         & 44.58$\pm$20.62 & 401.27$\pm$199.88 &  501.79$\pm$14.03 &  21.34$\pm$24.90 & \underline{1189.26$\pm$ 544.30} & \cellcolor{blue!10} \textbf{ 3360 $\pm$ 276} \textcolor{red}{(+183\%)} \\
\cmidrule{1-8}
\multirow{4}{*}{2-Ant}
& Expert         &  1317.73$\pm$286.28 &  1042.39$\pm$2021.65 & 2050.00$\pm$11.86 &  312.54$\pm$297.48 & \underline{2055.46$\pm$1.58} & \cellcolor{blue!10} \textbf{2191 $\pm$ 46 } \textcolor{red}{(+6.6\%)} \\
& Medium-Expert  & 1020.89$\pm$242.74 & 800.22$\pm$1621.52 &  1590.18$\pm$85.61 & -2992.80$\pm$ 6.95 & \underline{1720.33$\pm$110.63} & \cellcolor{blue!10} \textbf{ 2002 $\pm$ 124 } \textcolor{red}{(+16.4\%)}\\
& Medium-Replay  & 950.77$\pm$48.76 & 234.62$\pm$1618.28 & \underline{1016.68$\pm$53.51} & -2014.20$\pm$844.68 & \textbf{1105.13$\pm$88.87} & \cellcolor{blue!10}  1009 $\pm$ 43\textcolor{teal}{(-8.7\%)} \\
& Medium         & 1059.60$\pm$91.22 &  533.90$\pm$1766.42 & 1412.41$\pm$10.93 & -1710.04$\pm$1588.98 & \underline{1418.44$\pm$5.36} & \cellcolor{blue!10} \textbf{ 1619 $\pm$ 77} \textcolor{red}{(+14.2\%)} \\
\cmidrule{1-8}
& Average & 1210.82$\pm$313.12 & 784.56$\pm$1044.66 & 1631.17$\pm$267.71 & -667.37$\pm$299.29 & \underline{1954.74$\pm$313.48} &\cellcolor{blue!10} \textbf{3400$\pm$90} \textcolor{red}{(+73.9\%)} \\
\bottomrule
\end{tabular}}
\label{table:omiga_main}
\end{table*}

\begin{figure*}[t!]
  \centering
  \begin{subfigure}[b]{0.31\linewidth}
    \centering
    \includegraphics[width=\linewidth]{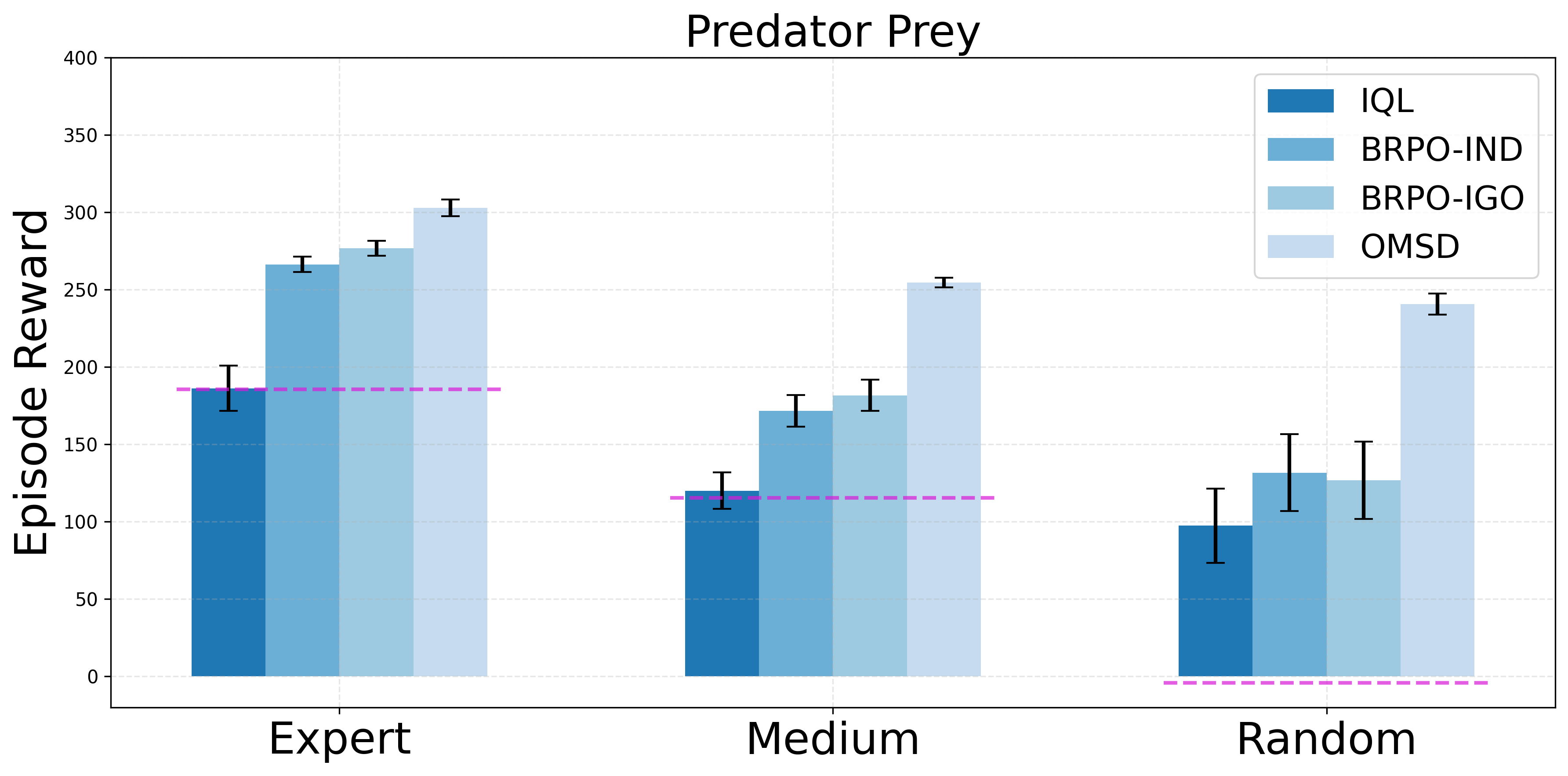}
    \caption{}
    \label{fig:compare_iql_1}
  \end{subfigure}
  \hfill
  \begin{subfigure}[b]{0.31\linewidth}
    \centering
    \includegraphics[width=\linewidth]{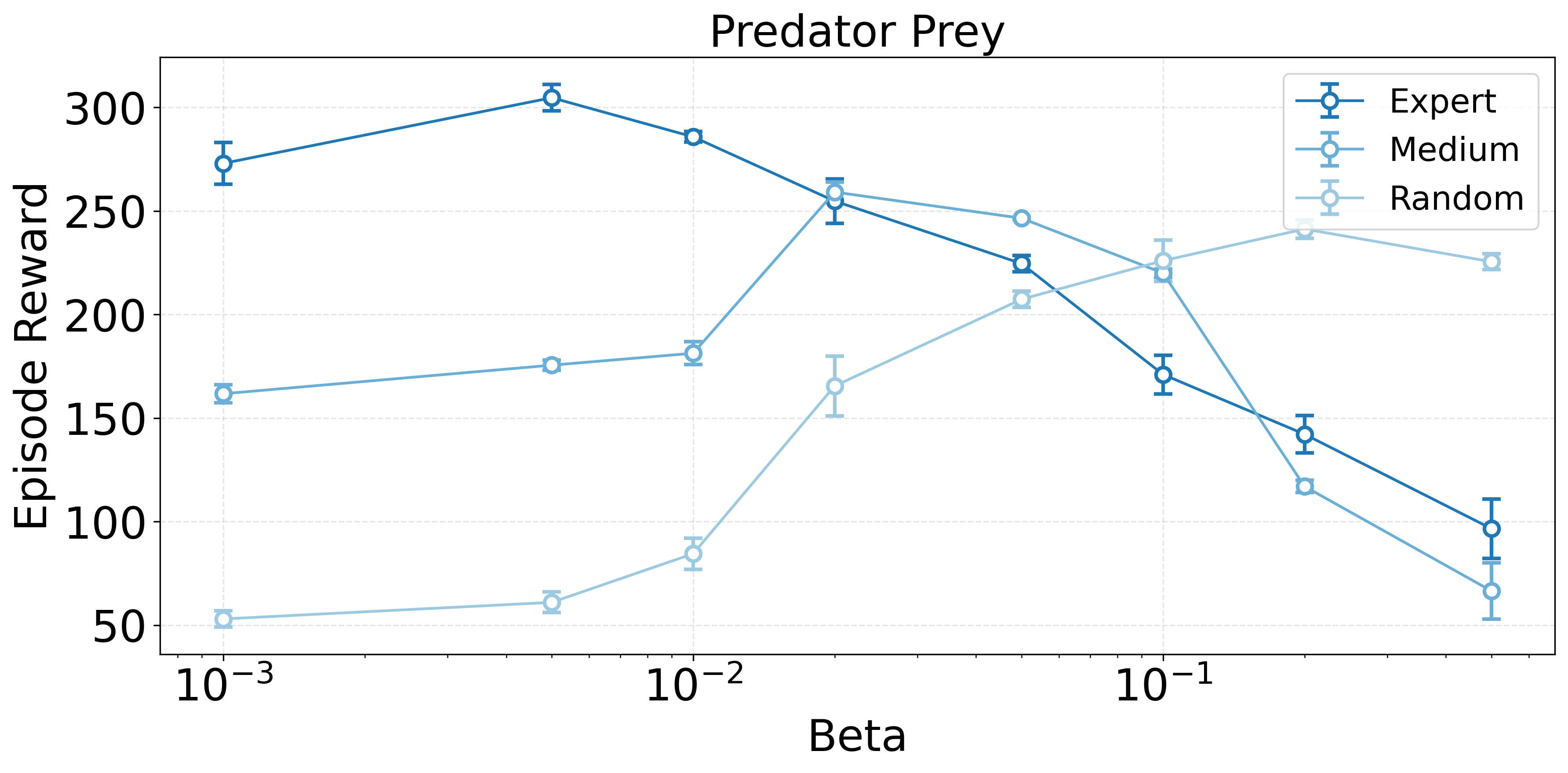}
    \caption{}
    \label{fig:compare_iql_2}
  \end{subfigure}
  \hfill
  \begin{subfigure}[b]{0.31\linewidth}
    \centering
    \includegraphics[width=\linewidth]{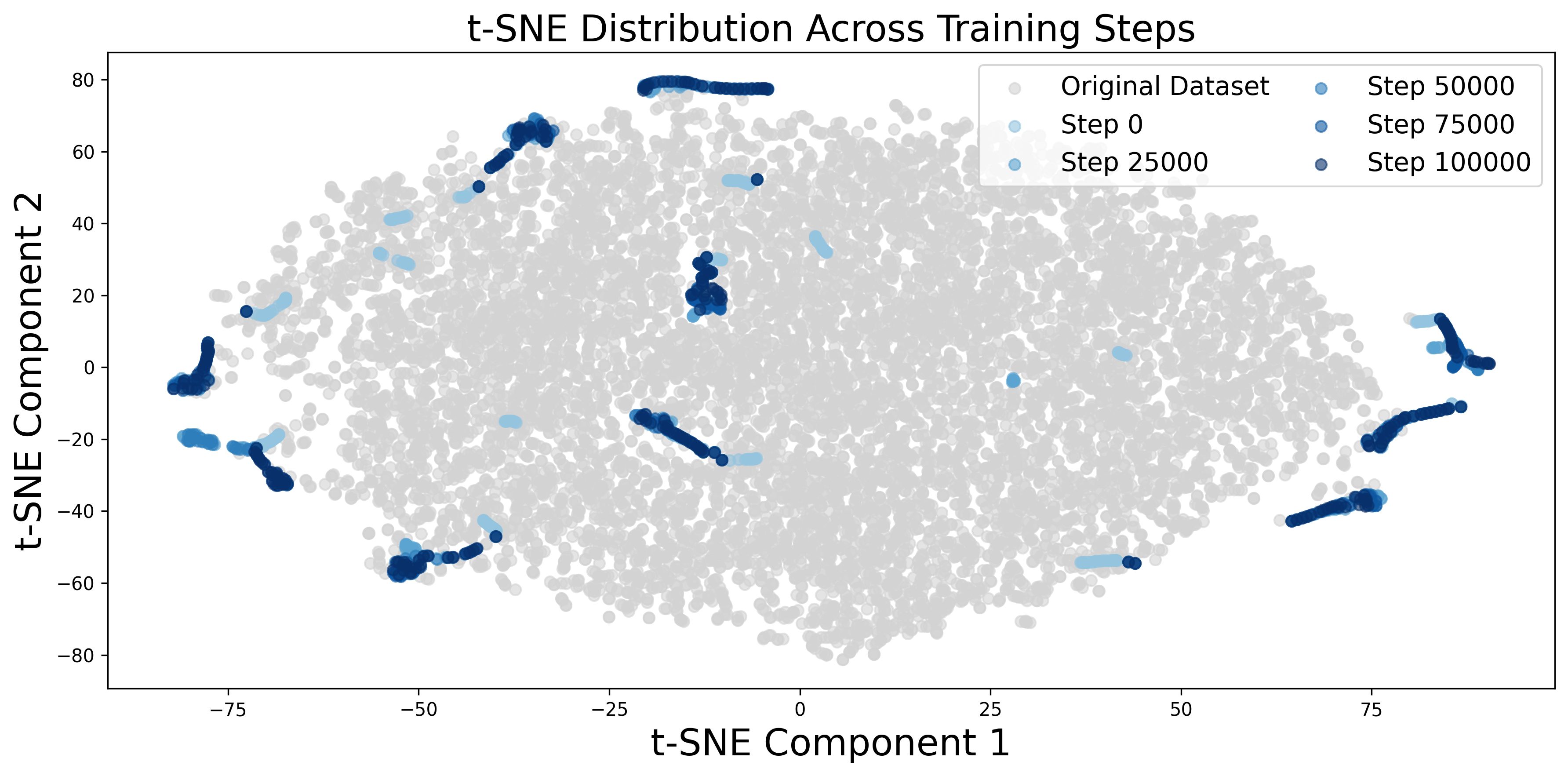}
    \caption{}
    \label{fig:compare_iql_3}
  \end{subfigure}
  \caption{(a) Comparison of pre-trained IQL and post-trained algorithms. (b) Regularization term $\beta$ for OMSD performance. (c) t-SNE Visualization of policy evolution during OMSD training.}
  \label{fig:compare_iql}
\end{figure*}

To further evaluate generality across dataset sources and larger-agent settings, we report additional \texttt{MaMuJoCo} results trained on the OMIGA datasets in Table~\ref{table:omiga_main}. OMSD achieves an average improvement of 73.9\% over the strongest reported baseline, with especially large gains on mixed datasets such as Medium-Expert and Medium-Replay. These results further support the benefit of modeling multimodal joint behavior distributions with coordinated conditional score regularization. From a scalability perspective, OMSD decouples score-model pretraining across agents: each conditional score model can be trained independently once the offline dataset is fixed. The sequential dependency is only used during policy optimization through prefix actions, and does not impose sequential execution at test time. This makes OMSD amenable to larger-agent continuous-control settings, as reflected by the 6-agent HalfCheetah experiments. Additional \texttt{MaMuJoCo} results are provided in Appendix~\ref{additional_mamujoco}.

\subsection{Ablation Study}

\textbf{Does Score Decomposition Method Matter?} To investigate the impact of our proposed sequential score decomposition mechanism, we conduct a series of ablation studies. For a fair comparison, we compare  OMSD against BRPO-IND and BRPO-CTDE as described in Sec. \ref{app:ind_ctde_brpo}. As shown in Fig. \ref{fig:compare_iql_1}, OMSD consistently outperforms both the pretrained IQL and factorization methods, as well as the overall dataset quality. The average episode reward across datasets is indicated by a purple dashed line. The notable improvement over the pretrained IQL highlights OMSD’s ability to effectively combine global critic signals with policy constraints, enabling more reliable offline policy improvement. In contrast, the performance gap between OMSD and BRPO-CTDE illustrates that inappropriate score decomposition can lead to poorly coordinated joint policies that suffer from OOD actions, ultimately degrading overall performance. The dotted lines in the figure indicate the average and maximized absolute return of the training datasets. 

To demonstrate our method's insensitivity to update order, we conducted randomized ordering experiments on the OMIGA Hopper-v2 datasets. Experimental results show that, with the same pre-training parameters and OMSD training parameter settings, changing the update order does not significantly impact performance, suggesting that the sequential order mainly serves as a training-time coordination mechanism rather than a strong inductive bias. Additional results are provided in the Appendix \ref{app:abl_1} and \ref{app:abl_4}. We further compare diffusion score estimators with simpler conditional density estimators, including GMMs and normalizing flows, in Appendix~\ref{app:density_estimator_ablation}.

\textbf{Hyperparameters.}  
Since policy-based offline methods are sensitive to the degree of behavior regularization, we conduct a systematic study on the influence of the regularization coefficient $\beta$ as shown in Fig.~\ref{fig:compare_iql_2}. Specifically, we sweep $\beta$ over the set $\{0.001, 0.005, 0.01, 0.02, 0.05, 0.1, 0.3, 0.5\}$. Our results show that the optimal value of $\beta$ depends strongly on the quality of the dataset. Expert-level datasets often benefit from stronger policy constraints (e.g., $\beta=0.001$), preserving high-quality behaviors. In contrast, lower-quality datasets such as random favor weaker regularization (e.g., $\beta=0.3$), allowing the policy to deviate from suboptimal demonstrations and encourage more exploratory behavior. The stable performance over a broad range of $\beta$ values also indicates that OMSD is not overly sensitive to the relative weighting between critic guidance and score regularization. For detailed experimental results on additional tasks, please refer to the Appendix \ref{app:abl_2}.

\textbf{How does OMSD stay close to dataset-supported modes?} We observe that OMSD achieves strong performance gains on low-quality datasets, where prior methods often struggle. To investigate this, we visualize the learning policy checkpoints via t-SNE \citep{van2008visualizing} by sampling state-action pairs from the policy and comparing them to the dataset distribution. As shown in Fig. \ref{fig:compare_iql_3}, OMSD captures the underlying multimodal structure and concentrates around high-reward regions within the dataset support. This provides qualitative evidence that OMSD can exploit the critic as a reward landmark while remaining close to dataset-supported regions, which enables stable policy improvement.

\section{Related Works}

\textbf{Offline MARL.} Early research in offline MARL mainly made efforts to extend the pessimistic principles from offline single-agent RL with independent learning paradigm. For example, MAICQ \citep{yang2021believe} and MABCQ \citep{jiang2021offline} extended the pessimistic value estimation such as CQL to multi-agent and discuss the extrapolation error under exponential increasing dimension of joint actions space problem. Furthermore, OMAR \citep{pan2022plan} dealt with the local optima with zero-th order optimization. Motivated by this, CFCQL \citep{shao2023counterfactual} further improved OMAR with counterfactual value estimation to avoid over-pessimistic value estimation.    Recently, MACCA \citep{wang2023macca} and OMIGA \citep{wang2023IGL} has incorporated causal credit assignment technique and the IGM principle into the offline value decomposition process to enhance the credit assignment. In SIT \cite{tian2023learning}, authors recognized the data-imbalance problem and handle it with reliable credit assignment technique. On the other hand, AlberDICE \citep{matsunaga2023alberdice} and MOMA-PPO \citep{barde2023model} recognized and addressed OOD joint action coordination problems with alternative best response and world model based planning. Our method aligns in this direction and try to model complex behavior policies with diffusion models. BRUD  \cite{tilbury2024coordination} discusses the failure of policy updates caused by different data points under offline MADDPG-style algorithm. The prioritised dataset sampling mechanism is proposed to ensure that the sampled data in the current batch is close to the  distribution of the updated policy. Although this paper considers the impact of data points on policy learning under offline MARL, MADDPG-type modeling still ignores the multimodal characteristics of the joint behavior policy distribution. Besides, there are also some works following the trajectory generation route, such as MAT \citep{wen2022multi}, MADT \citep{meng2021offline}, and MADTKD \citep{tseng2022offline}. 
These methods are beyond our scope.  

\textbf{Diffusion Models in RL.}  Recently, motivated by the great advantage of diffusion models, RL researchers turn to seek the possibilities of introducing diffusion models into RL area. Previous works can be typically divided into three topics: serving as planner, serving as policy, and serving for data augmentation. Our method mainly falls in the second topic, where simple density estimators can suffer from poor mode coverage under multimodal data distributions. Diff-QL \citep{wang2023diffusion} and SfBC \citep{chen2022offline} used diffusion model to represent the behavior policy and generate a batch of candidate actions with diffusion models, then use resampling to choose the executive actions. These methods suffer the inherent drawback of slow inference process of diffusion models. For this reason, some works tried to accelerate the sampling process of diffusion actor. EDP \citep{kang2024efficient} and consistency-AC \citep{ding2023consistency} leveraged the advanced diffusion models to accelerate the action sampling in RL tasks.  Diff-DICE \cite{mao2024diffusion} investigated guiding and selecting paradigm in diffusion-based RL and avoid OOD actions by proposing a guide-then-select mechanism. Recently, there are few works such as MADiff \citep{zhu2024madiff} and DoF \citep{li2025dof}, which take diffusion models as a centralized planner or actors. DoF \citep{li2025dof} introduces a novel diffusion-based factorization framework that explicitly models multi-agent interactions, representing significant progress in this domain. Similarly, DOM2 \citep{li2023beyond} adopts diffusion models as a data augmentation tool to synthesize interaction-aware trajectories, improving cooperative behavior on shifted environments. While these works span diverse methodologies, our approach aligns with efforts to address OOD joint action challenges and complex behavior policies by leveraging advanced diffusion-based mechanisms.

\section{Conclusion}
In this paper, we study the key challenge of multimodal joint behavior policies in offline MARL and propose the sequential score decomposition algorithm OMSD with diffusion models. To our knowledge, OMSD is the first policy decomposition-based offline MARL algorithm explicitly addresses multimodal behavior policies, leveraging the decomposed score functions distilled from diffusion models to regularize the policy update gradients. Experiment results demonstrate the superiority of our methods OMSD and the effectiveness of policy improvement with coordinate action selection. One future work aims to develop more precise and optimal policy decomposition methods to enhance the ability of policy-based offline MARL methods.

\section*{Acknowledgements}
Baoxiang Wang and Dan Qiao are partially supported by the National Natural Science Foundation of China (No.~72394361) and the Shenzhen Science and Technology Program (Nos.~JCYJ20250604141218024 and JCYJ20250604141032005). 
Wenhao Li is supported by the NSFC (No.~62406270) and the STCSM Shanghai Rising-Star Program (No.~24YF2748800). 
The authors would like to express their sincere gratitude to Yue Lin, Han Wang, Ang Li, and Jiawei Xu from the Chinese University of Hong Kong, Shenzhen (CUHKSZ) for their insightful discussions and constant support. We also thank the anonymous reviewers for their constructive comments and suggestions.

\section*{Impact Statement}
This work advances offline multi-agent reinforcement learning (MARL) by addressing the challenge of coordination-aware decomposition of multimodal joint action behavior distributions. Our methods improve coordination and decision-making in multi-agent systems, with potential applications in robotics, autonomous vehicles, and collaborative AI systems. By enabling more effective offline learning, our approach reduces the need for risky online exploration in safety-critical domains.

\nocite{langley00}

\bibliography{example_paper}
\bibliographystyle{icml2026}

\newpage
\appendix
\onecolumn

\section{Limitation and Future Work}
Our current work focuses on continuous control offline MARL tasks, and we have not yet validated OMSD on discrete or hybrid action spaces. OMSD also introduces additional computation for pretraining conditional diffusion score models, although this stage is decoupled across agents and can be parallelized. Our experiments cover up to six-agent continuous-control tasks, leaving the evaluation on larger-scale teams to future work. Finally, OMSD depends on a pretrained centralized critic for reward-improvement directions, so poor critic estimates under very low-quality data may limit policy improvement.

\section{OMSD Pseudo Code}\label{pseudocode}

Here we provide the pseudo-code of our algorithm OMSD.

\begin{minipage}{\linewidth}
\begin{algorithm}[H]
   \caption{OMSD Algorithm}
   \label{alg:omsd}
\begin{algorithmic}[1]
   \STATE \textbf{Input}: Offline dataset $\mathcal{D}^{\mu}$
   \STATE // \textbf{Critic Pretraining}
   \FOR{critic training step}
       \STATE Train centralized joint critic $Q^{tot}$
   \ENDFOR
   \STATE // \textbf{Score Pretraining (Parallelizable)}
   \FORALL{agent $i = 1, \ldots, n$\ \textbf{in parallel}}
       \STATE Pretrain conditional diffusion score model $\hat\epsilon_i$ on $\mathcal{D}^{\mu}$
   \ENDFOR
   \STATE // \textbf{Policy Optimization (Sequential Update)}
   \FOR{policy gradient step}
       \FOR{agent $i = 1, \ldots, n$ (in order)}
           \STATE Sample prefix actions $a_{<i}$ using latest policies $\{\pi_j\}_{j < i}$
           \STATE Update $\theta_i \leftarrow \theta_i + \alpha \nabla_{\theta_i} \mathcal{L}_{OMSD}^i(\theta_i)$ (Eq.~\ref{practical_obj})
       \ENDFOR
   \ENDFOR
\end{algorithmic}
\end{algorithm}
\end{minipage}

\section{Additional Experiments on \texttt{MaMuJoCo}}\label{additional_mamujoco}

Experimental results on Table \ref{table:omar_mamujoco_only} are trained on the 2-agent Halfcheetah dataset provided by OMAR \cite{pan2022plan}. In this experiment, OMSD achieves the best performance in three scenarios across four experiment settings. The most significant improvement is observed on the Medium-Replay dataset, highlighting the challenge posed by the severe multimodal distribution of joint behavior policies on mixed-quality datasets to offline MARL algorithms, which can be effectively captured and handled by our methods. Poor performance on the random-quality dataset is attributed to the difficulty of learning the centralized critic on this dataset. Furthermore, since the behavioral policies on the poor dataset are the worst, the policy regularization learned by the diffusion model struggles to provide stable policy constraints and performance improvements. This suggests that our approach may benefit from combining it with better critics from more robust value-based offline MARL training methods.

\begin{table*}[h]
\centering
\caption{Experiment results on the   \texttt{MaMuJoCo} environments with OMAR \citep{pan2022plan} datasets.}
\resizebox{1\textwidth}{!}{%
\begin{tabular}{c|c|ccccc|c}
\toprule
 \textbf{Task} & \textbf{Dataset} & \textbf{MA-ICQ} & \textbf{MA-CQL} & \textbf{MA-TD3+BC} & \textbf{OMAR} & \textbf{CFCQL}  & \textbf{OMSD}\\
\midrule
\multirow{4}{*}{2-HalfCheetah} 
& Expert  & 110.6 $\pm$ 3.3  &  50.1$\pm$20.1  &  114.4 $\pm$ 3.8 &  113.5$\pm$4.3  &  \underline{118.5 $\pm$ 4.9}  & \cellcolor{blue!10} \textbf{119.0 $\pm$ 1.3} \textcolor{red}{(+0.4\%)} \\
 & Medium & 73.6 $\pm$ 5.0  &  51.5$\pm$26.7  & 75.5$\pm$3.7  &  80.4$\pm$10.2 & \underline{80.5$\pm$9.6}  &  \cellcolor{blue!10} \textbf{81.4 $\pm$ 7.2} \textcolor{red}{(+1.2\%)} \\
 & Med-Replay   &  35.6$\pm$2.7 &  37.0$\pm$7.1  &  27.1$\pm$5.5 &  57.7$\pm$5.1 &  \underline{59.5 $\pm$ 8.2}   & \cellcolor{blue!10} \textbf{78.9 $\pm$4.4} \textcolor{red}{(+32.6\%)} \\
 & Random & 7.4$\pm$0.0  &  5.3$\pm$0.5  & 7.4$\pm$0.0  &  13.5$\pm$7.0 & \textbf{39.7$\pm$4.0}    & \cellcolor{blue!10} 15.6$\pm$4.2 \textcolor{teal}{(-60.7\%)}  \\
\bottomrule
\end{tabular}}
\label{table:omar_mamujoco_only}
\end{table*}

\begin{figure}[t]
    \centering
    \includegraphics[width=0.8\linewidth]{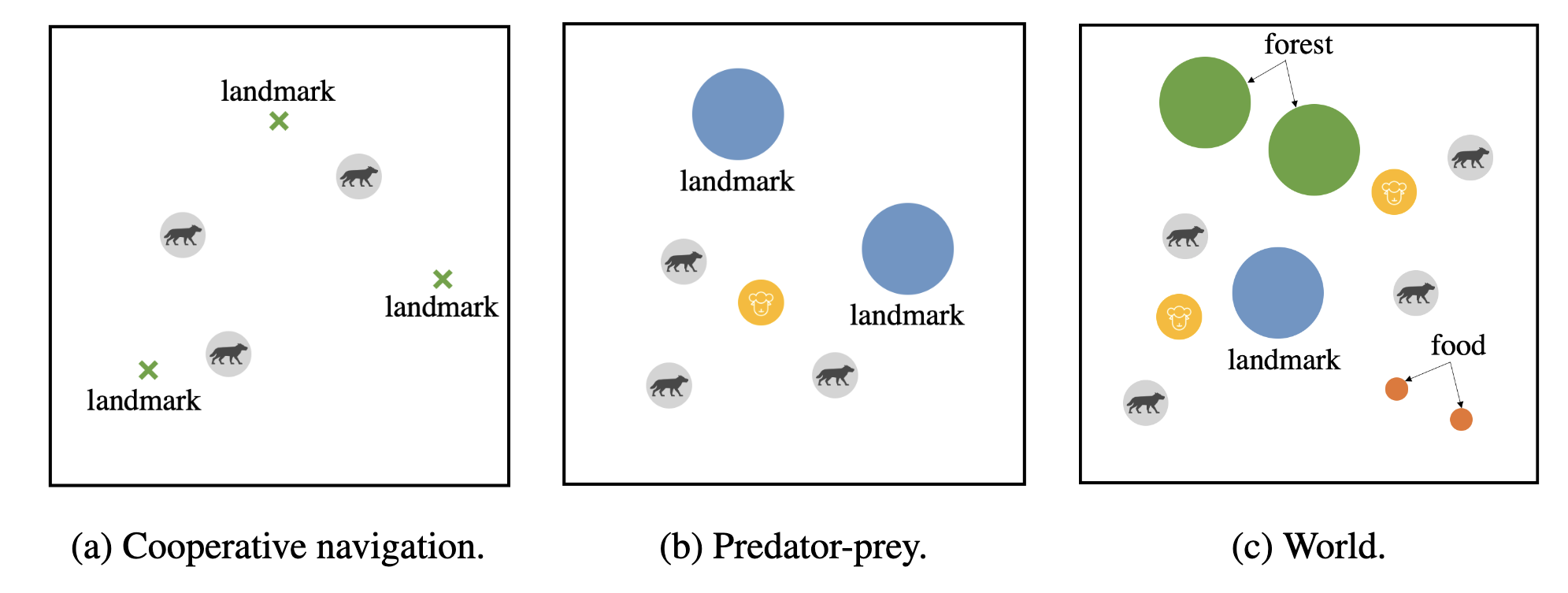}
    \caption{\texttt{MPE} environments. \cite{pan2022plan}}
    \label{fig:env_mpe}
\end{figure}

\begin{figure}[t]
    \centering
    \includegraphics[width=0.9\linewidth]{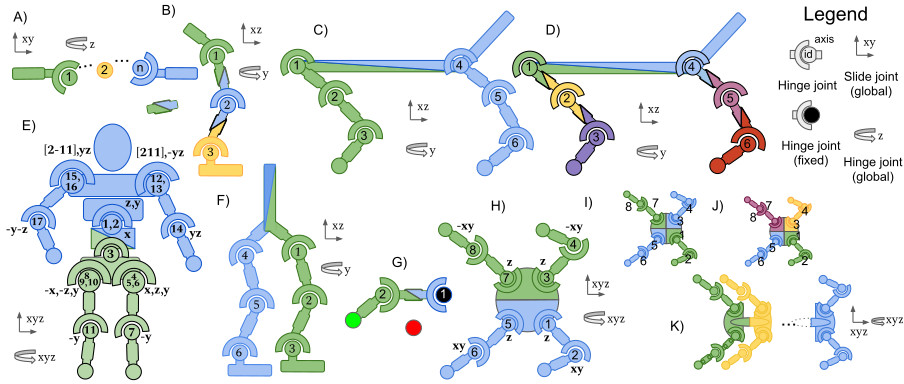}
    \caption{\texttt{MaMuJoCo} environments. \citep{peng2021facmac}}
    \label{fig:env_mamujoco}
\end{figure}

\section{Experimental Details}\label{app:exp_detail}

In this section, we highlight the most important implementation details for the OMSD and baselines. More details can be found in our open-source code.

\subsection{Environment Details}\label{app:env_detail}

We use the open-source implementations of multi-agent particle environments\footnote{\url{https://github.com/openai/multiagent-particle-envs}} \cite{lowe2017multi} and \texttt{MaMuJoCo}\footnote{\url{https://github.com/schroederdewitt/multiagent_mujoco}} \cite{peng2021facmac}. 
Fig. \ref{fig:env_mpe} and Fig. \ref{fig:env_mamujoco} illustrate the rendered environments.

In Cooperative Navigation task, 3 learning agents need to cooperatively spread to 3 landmarks, where the common rewards are based on the distances away from landmarks with collusion penalties. In Predator Prey, 3 predators are trained to catch a moving prey, which challenge the predators to surround the prey with high degree of coordination. In world, the original settings involves 4 slower cooperating predators to catch 2 faster preys, where the preys are rewarded by avoiding being captured and eating foods. However, the offline datasets provided by OMAR is trained with 3 slower predators and 1 prey. In 2-agent HalfCheetah task, a halfcheetah with 6 joints need to keep moving forward. The 6 joints are divided into two groups, where each agent controls 3 joints, representing the front legs and the hind legs respectively.

Specifically, we noticed that several commonly used datasets have different settings for \texttt{MaMuJoCo}, which affects the dimension of the observation space. Taking the \texttt{2-agent Halfcheetah} dataset as an example, the OMAR dataset uses \texttt{obsk=0}, disregarding neighbor information, resulting in a state space dimension of \texttt{state\_dim=17} and an observation space dimension of \texttt{obs\_dim=6}. OMIGA customizes an environment wrapper, and the returned observation variables are actually global state variables, with both the state space and observation space dimensions of \texttt{state\_dim=17}. The OG-MARL dataset additionally sets \texttt{obsk=1} to consider neighbor information and \texttt{global categories: \{qvel, qpos\}}, causing the observation space to be expanded to \texttt{obs\_dim=13} dimensions. MADiff, due to its transformer structure, adds one-hot encoding to OG-MARL to represent the agent ID, resulting in an observation dimension of \texttt{obs\_dim=15} dimensions. To ensure fairness, this paper uniformly follows the original dataset collection process settings, removing the one-hot ID from the MADiff dataset to ensure it is independent of agent ID information. Besides, both OMAR\footnote{OMAR datasets: \url{https://github.com/ling-pan/OMAR}} and OMIGA\footnote{OMIGA datasets: \url{https://cloud.tsinghua.edu.cn/d/dcf588d659214a28a777/}} employs \texttt{mujoco 200} and \texttt{mamujoco=0.0.1}, while OG-MARL\footnote{OG-MARL datasets: \url{https://github.com/instadeepai/og-marl}} employs \texttt{mujoco 210} and \texttt{mamujoco=1.1.0}. The different versions of the mujoco suites will also lead to obvious performance differences.

\subsection{Baseline Settings}

In this section, we provide additional details for each of the baseline algorithms. All scores of baselines are derived from the standardized scores reported in the MADiff  \citep{zhu2024madiff} and the DoF  \citep{li2025dof}. Consider that OMSD is developed as a CTDE algorithm for continuous control tasks, we select the decentralized version MADiff-D and DoF-P. The open-sourced implementations of baselines are from  MADDPG \citep{iqbal2019actor}\footnote{MADDPG source code: \url{https://github.com/shariqiqbal2810/maddpg-pytorch}}, OMAR \citep{pan2022plan}\footnote{OMAR source code: \url{https://github.com/ling-pan/OMAR}}, CFCQL  \citep{shao2023counterfactual}\footnote{CFCQL source code: \url{https://github.com/thu-rllab/CFCQL}}, OMIGA \citep{wang2023IGL}\footnote{OMIGA source code: \url{https://github.com/ZhengYinan-AIR/OMIGA}}, MADiff \citep{zhu2024madiff}\footnote{MADiff source code: \url{https://github.com/zbzhu99/madiff}}, and DoF \citep{li2025dof}\footnote{DoF source code: \url{https://github.com/xmu-rl-3dv/DoF}}.

\subsection{Network Architecture}

The hyperparameter and network architecture settings for pre-training primarily follow those of the standard IQL algorithm \cite{kostrikov2021offline} and SRPO algorithm \cite{chen2024score}.

For the centralized critic model, we adapt it from the standard IQL implementation\footnote{IQL source code: \url{https://github.com/ikostrikov/implicit_q_learning}}. This model consists of a deterministic policy network, a state-action value network (Q-net) with double-Q learning for stabilized training, and a state value network (V-net). All networks are structured as 2-layer MLPs with 256 hidden units and ReLU activations. The deterministic policy network is optimized using annealing AdamW with a learning rate of $3 \times 10^{-4}$, while the value networks are trained using Adam with a fixed learning rate of $3 \times 10^{-4}$.

The diffusion behavior model is implemented as a 2-layer U-Net with 512 hidden units. The time embedding dimension is set to 64, and the embedding dimension for concatenated input (state and actions) is 32. The learning rate is $3 \times 10^{-4}$.

The policy model is a Dilac policy represented by a 2-layer MLP with 256 hidden units and ReLU activations. It is trained using the Adam optimizer with a learning rate of $3 \times 10^{-4}$ and a batch size of 512. The training process consists of 1.0 million gradient steps for \texttt{MaMuJoCo} tasks and 0.1 million gradient steps for \texttt{MPE} tasks. The key hyperparameters for OMSD are summarized in Table \ref{appendix:hyperparams}.

\begin{table}[ht]
\centering
\caption{Hyper-Parameters for OMSD}\label{appendix:hyperparams}
\begin{tabular}{llp{2.5cm}}
\hline
\textbf{Algorithm} & \textbf{Hyper-Parameter Name} & \textbf{Value} \\
\hline
All & Batch Size & 512 \\
All & Optimizer & Adam \\
All & Learning Rate  & $3 \times 10^{-4}$ \\
All & Hidden Activation Function & ReLU \\
All & Discount Factors of RL $\gamma$ & 0.99 \\
All & Soft Update Rate of Target Networks $\tau$ & 0.005 \\
All & MPE Episode Length & 25 \\
All & \texttt{MaMuJoCo} Episode Length & 1000 \\
All & Buffer Size & 1e6 \\
All & Reward Scale & 1 \\
Critic \& Diffusion Models & Training Epochs & 200 \\
Critic \& Diffusion Models & Training Steps in Each Epoch & 10000 \\
Critic \& Diffusion Models & Actor Blocks & 2 \\
Critic Models & Q-Network Layers & 2 \\
Diffusion Models & Time Gaussian Projection Dims & 32 \\
Diffusion Models & Time Embedding Dims & 64 \\
Diffusion Models & State-action Embedding Dims & 32 \\
Diffusion Models & Resnet Hidden Dims & 512 \\
Diffusion Models & Dilac Policy Learning Rate & 3e-4 \\
\hline
\end{tabular}
\end{table}

\subsection{Pretrain Critic Models}

In this section, we provide a detailed explanation of the pre-training process for the critic networks. The network structures and parameter settings are consistent with those described in the previous section. We pre-trained two types of critic networks: independent critic networks and joint action learning critic networks. For the independent critic networks, each agent's input consists of the concatenation of its individual dataset's states and actions, with the network learning each agent's behavior independently. In contrast, the joint action learning critic network adopts a centralized approach, where the input comprises the concatenated joint states (observations) and joint actions of all agents, enabling a global perspective for joint decision-making. All pre-trained critics were trained for 200-500 epochs with checkpoints saved every 50 epochs. In subsequent OMSD training, the critic generally loads the checkpoint from the final epoch.

During the optimization process, we made adjustments to various hyperparameters and design choices, uncovering some important insights. First, the temperature and quantile regression coefficient $\tau$ were found to significantly affect the performance of pre-trained IQL. We performed a sweep of $\tau$ values in the range of [0.3, 0.5, 0.7, 0.9] and temperature values in the range of [1, 3, 5, 7, 10] across datasets of different quality and reported the optimal hyperparameters in Tables \ref{appendix:hyperparams-iql-mpe} and \ref{appendix:hyperparams-iql-mujoco}. Second, regarding the clamping of the advantage function, we initially clamped the exponential advantage term $\texttt{exp\_adv}$ at a maximum value of 100. However, we later tried directly restricting the advantage values to the range [-1, 1], which improved training stability in certain cases.

However, in the \texttt{MPE} environment, we encountered some challenges and issues that significantly impacted OMSD's performance. First, in medium replay datasets compared to those of other quality levels, the training speed was approximately 3 times faster than expected. Additionally, the resulting performance failed to learn meaningful signals. We hypothesize this is due to the sample volume of medium replay datasets being significantly lower than that of others, with medium replay containing only 62,500 samples, whereas datasets of other quality levels contain 1,000,000 samples. The poor performance may be influenced by the dataset's characteristics or overfitting during training, which requires further investigation and resolution. Notably, such issues were not observed in datasets from other environments, such as \texttt{MaMuJoCo}.

\subsubsection{MPE Tasks}

Since \texttt{MPE} tasks consist of only 25 steps per episode, significantly fewer than the 1000 steps per episode in \texttt{MaMuJoCo}, we follow the settings of Clean Offline RL \cite{tarasov2023corl} to train IQL algorithms 500 epochs with 1000 update steps per epoch. Below are the hyperparameters for all three \texttt{MPE} tasks:

\begin{table}[ht]
\centering
\caption{IQL Training Hyperparameters in MPE}
\label{appendix:hyperparams-iql-mpe}
\begin{tabular}{@{}lll>{\raggedleft\arraybackslash}p{1.5cm}@{}}
\toprule
\textbf{Environment} & \textbf{Task} & \textbf{Hyper Parameter Name} & \textbf{Value} \\
\midrule
\multirow{2}{*}{Global} & & Training Steps/Epoch & 1000 \\
                        & & Epochs & 500 \\
\midrule
\multirow{6}{*}{Cooperative Navigation} 
    & Expert & temperature & 3.0 \\
    & Expert & $\tau$ & 0.5 \\
    & Medium & temperature & 0.5 \\
    & Medium & $\tau$ & 0.7 \\
    & Random & temperature & 0.5 \\
    & Random & $\tau$ & 0.5 \\
\midrule
\multirow{6}{*}{Predator Prey} 
    & Expert & temperature & 7.0 \\
    & Expert & $\tau$ & 0.7 \\
    & Medium & temperature & 1.0 \\
    & Medium & $\tau$ & 0.5 \\
    & Random & temperature & 5.0 \\
    & Random & $\tau$ & 0.7 \\
\midrule
\multirow{6}{*}{World} 
    & Expert & temperature & 3.0 \\
    & Expert & $\tau$ & 0.5 \\
    & Medium & temperature & 1.0 \\
    & Medium & $\tau$ & 0.9 \\
    & Random & temperature & 7.0 \\
    & Random & $\tau$ & 0.7 \\
\bottomrule
\end{tabular}
\end{table}

\subsubsection{\texttt{MaMuJoCo} Tasks}

The training parameters are aligned with SRPO and have been shown to work effectively. Specifically, for the critic, we use 10,000 steps per epoch for a total of 200 epochs. The quantile regression coefficient $\tau$ is set to 0.9 for maze tasks and 0.7 otherwise, while the temperature $\beta$ is fixed at 10. Additionally, the exponential advantage term "$\texttt{exp\_adv}$" is clamped to a maximum value of 100 to ensure training stability. 

For the \texttt{MaMuJoCo} tasks, the hyperparameters are outlined as follows. Please note that the \texttt{2-HalfCheetah} dataset used in OMAR is collected by the mujoco-200 engine, while the datasets in OMIGA, and OG-MARL \cite{formanek2023off} and MADiff \cite{zhu2024madiff} are collected using mujoco-210. Engine differences may lead to significant differences in training performance. To maintain consistency, the main experiments in this paper all use the latest datasets and a mujoco-210 configuration.

\begin{table}[ht]
\centering
\caption{IQL Training Hyperparameters in \texttt{MaMuJoCo}}
\label{appendix:hyperparams-iql-mujoco}
\renewcommand{\arraystretch}{1}
\begin{tabular}{@{}lllr@{}}
\toprule
\textbf{Environment} & \textbf{Task} & \textbf{Hyper Parameter Name} & \textbf{Value} \\
\midrule
\multirow{2}{*}{Global} & & Training Steps/Epoch & 10000 \\
& & Epochs &  200 \\

\midrule
\multirow{8}{*}{2-HalfCheetah} 
    & Expert         & temperature & 3.0 \\
    & Expert         & $\tau$ & 0.7 \\
    & Medium         & temperature & 3.0 \\
    & Medium         & $\tau$ & 0.7 \\
    & Medium-Replay         & temperature & 3.0 \\
    & Medium-Replay        & $\tau$ & 0.7 \\
    & Random         & temperature & 5.0 \\
    & Random         & $\tau$ & 0.5 \\
\bottomrule
\end{tabular}
\end{table}

\subsection{Pretrain Diffusion Models}

For diffusion models, we follow the SRPO \cite{chen2024score} settings with slight modifications to improve training efficiency. Specifically, we reduce the number of layers from 3 to 2. The noise settings are defined as $t = \texttt{torch.rand}(a.\texttt{shape}[0], \texttt{device}=s.\texttt{device}) \times 0.96 + 0.02$. For the base SRPO framework, we use a hidden dimension of 64, a $\tau$ target network soft update rate of 0.01, a learning rate of 0.01, and the Annealing AdamW optimizer. Denoising is performed with 20 steps, while the denoising DDPM model operates with 5 steps using a beta schedule set to the "vp" strategy.

In this study, we pretrained three types of diffusion models: (1) the independent diffusion model, (2) the joint action learning diffusion model, and (3) the sequential diffusion model. In the independent diffusion model, each agent’s input consists of a concatenation of its individual dataset's state and action. For the joint action learning diffusion model, learning is treated as a centralized process, with inputs comprising the concatenated joint states (observations) and joint actions of all agents. Finally, the sequential diffusion model extends this idea by incorporating the preceding agents' actions as a prefix to the input. Combined with each agent’s own state and action, this adjustment results in task-specific variations in input dimensionality for each agent. The hyperparameters are shown in Tables \ref{appendix:hyperparams-diffusion-mpe} and Table \ref{appendix:hyperparams-diffusion-mujoco}.

\subsubsection{MPE Tasks}

Here are the hyperparameters for all three tasks in \texttt{MPE} environments shown in Table \ref{appendix:hyperparams-diffusion-mpe}.

\begin{table}[ht]
\centering
\caption{Diffusion Models Training Hyperparameters in MPE}
\label{appendix:hyperparams-diffusion-mpe}
\renewcommand{\arraystretch}{1}
\begin{tabular}{@{}lllr@{}}
\toprule
\textbf{Environment} & \textbf{Task} & \textbf{Hyper Parameter Name} & \textbf{Value} \\
\midrule
\multirow{2}{*}{Global} & & Training Steps & 100000 \\
                        & & Annealing Epochs & 10 \\
\midrule
\multirow{3}{*}{Cooperative Navigation} 
    & Expert & $\beta$ & 0.001 \\
    & Medium & $\beta$ & 0.005 \\
    & Random & $\beta$ & 0.05 \\
\midrule
\multirow{3}{*}{Predator Prey} 
    & Expert & $\beta$ & 0.005 \\
    & Medium & $\beta$ & 0.05 \\
    & Random & $\beta$ & 0.5 \\
\midrule
\multirow{3}{*}{World} 
    & Expert & $\beta$ & 0.01 \\
    & Medium & $\beta$ & 0.05 \\
    & Random & $\beta$ & 0.5 \\
\bottomrule
\end{tabular}
\end{table}

\subsubsection{\texttt{MaMuJoCo} Tasks}

Here are the hyperparameters for \texttt{MaMuJoCo} comes from OMAR \cite{pan2022plan} and MADiff \cite{zhu2024madiff} shown in Table \ref{appendix:hyperparams-diffusion-mujoco}.


\begin{table}[ht]
\centering
\caption{Diffusion Models Training Hyperparameters in \texttt{MaMuJoCo}}
\label{appendix:hyperparams-diffusion-mujoco}
\renewcommand{\arraystretch}{1}
\begin{tabular}{@{}lllr@{}}
\toprule
\textbf{Environment} & \textbf{Task} & \textbf{Hyper Parameter Name} & \textbf{Value} \\
\midrule
\multirow{2}{*}{Global} & & Training Steps & 100000 \\
& & Annealing Epochs & 10 \\
\midrule
\multirow{4}{*}{2-HalfCheetah 200} 
    & Expert         & $\beta$ & 0.001 \\
    & Medium         & $\beta$ & 0.005 \\
    & Medium-Replay  & $\beta$ & 0.05  \\
    & Random         & $\beta$ & 0.05  \\
\bottomrule
\end{tabular}
\end{table}

\subsection{Train OMSD Models}

In this subsection, we provide the hyperparameters for training OMSD models.

\subsubsection{MPE Tasks}

Here are the hyperparameters for  all three tasks in \texttt{MPE} environments as shown in Table \ref{appendix:hyperparams-omsd-mpe}.

\begin{table}[ht]
\centering
\caption{OMSD Training Hyperparameters in MPE}
\label{appendix:hyperparams-omsd-mpe}
\renewcommand{\arraystretch}{1}
\begin{tabular}{@{}lllr@{}}
\toprule
\textbf{Environment} & \textbf{Task} & \textbf{Hyper Parameter Name} & \textbf{Value} \\
\midrule
\multirow{2}{*}{Global} & & Training Steps & 100000 \\
                        & & Annealing Epochs & 10 \\
\midrule
\multirow{3}{*}{Cooperative Navigation} 
    & Expert & $\beta$ & 0.001 \\
    & Medium & $\beta$ & 0.005 \\
    & Random & $\beta$ & 0.05 \\
\midrule
\multirow{3}{*}{Predator Prey} 
    & Expert & $\beta$ & 0.005 \\
    & Medium & $\beta$ & 0.05 \\
    & Random & $\beta$ & 0.5 \\
\midrule
\multirow{3}{*}{World} 
    & Expert & $\beta$ & 0.01 \\
    & Medium & $\beta$ & 0.05 \\
    & Random & $\beta$ & 0.5 \\
\bottomrule
\end{tabular}
\end{table}

\subsubsection{\texttt{MaMuJoCo} Tasks}

Here are the hyperparameters for \texttt{MaMuJoCo}. The dataset 2-HalfCheetah 200 comes from OMAR \citep{pan2022plan}, and the dataset 2-HalfCheetah 210 comes from MADiff \citep{zhu2024madiff} as shown in Table \ref{appendix:hyperparams-omsd-mujoco}.

\begin{table}[ht]
\centering
\caption{OMSD Training Hyperparameters in \texttt{MaMuJoCo}}
\label{appendix:hyperparams-omsd-mujoco}
\renewcommand{\arraystretch}{1}
\begin{tabular}{@{}lllr@{}}
\toprule
\textbf{Environment} & \textbf{Task} & \textbf{Hyper Parameters Name} & \textbf{Value} \\
\midrule
\multirow{2}{*}{Global} & & Training Steps &  100000 \\
& & Annealing Epochs & 10 \\
\midrule
\multirow{4}{*}{2-HalfCheetah 200} 
    & Expert         & $\beta$ & 0.001 \\
    & Medium         & $\beta$ & 0.005 \\
    & Medium-Replay  & $\beta$ & 0.05  \\
    & Random         & $\beta$ & 0.05  \\
\bottomrule
\end{tabular}
\end{table}

\subsection{More Ablation Study Results}\label{app:ablation}

\subsubsection{Score Decomposition Methods}\label{app:abl_1}

Here we present more ablation study results of all three \texttt{MPE} tasks in Fig. \ref{fig:app_mpe_dec}, i.e., Cooperative Navigation, Predator Prey, and World. Over multiple quality datasets across various tasks, our methods demonstrates advantages over pre-trained Critic IQL and other policy decomposition methods.

\begin{figure}[ht]
    \centering
    \begin{subfigure}[t]{0.3\textwidth}
        \centering
        \includegraphics[width=\linewidth]{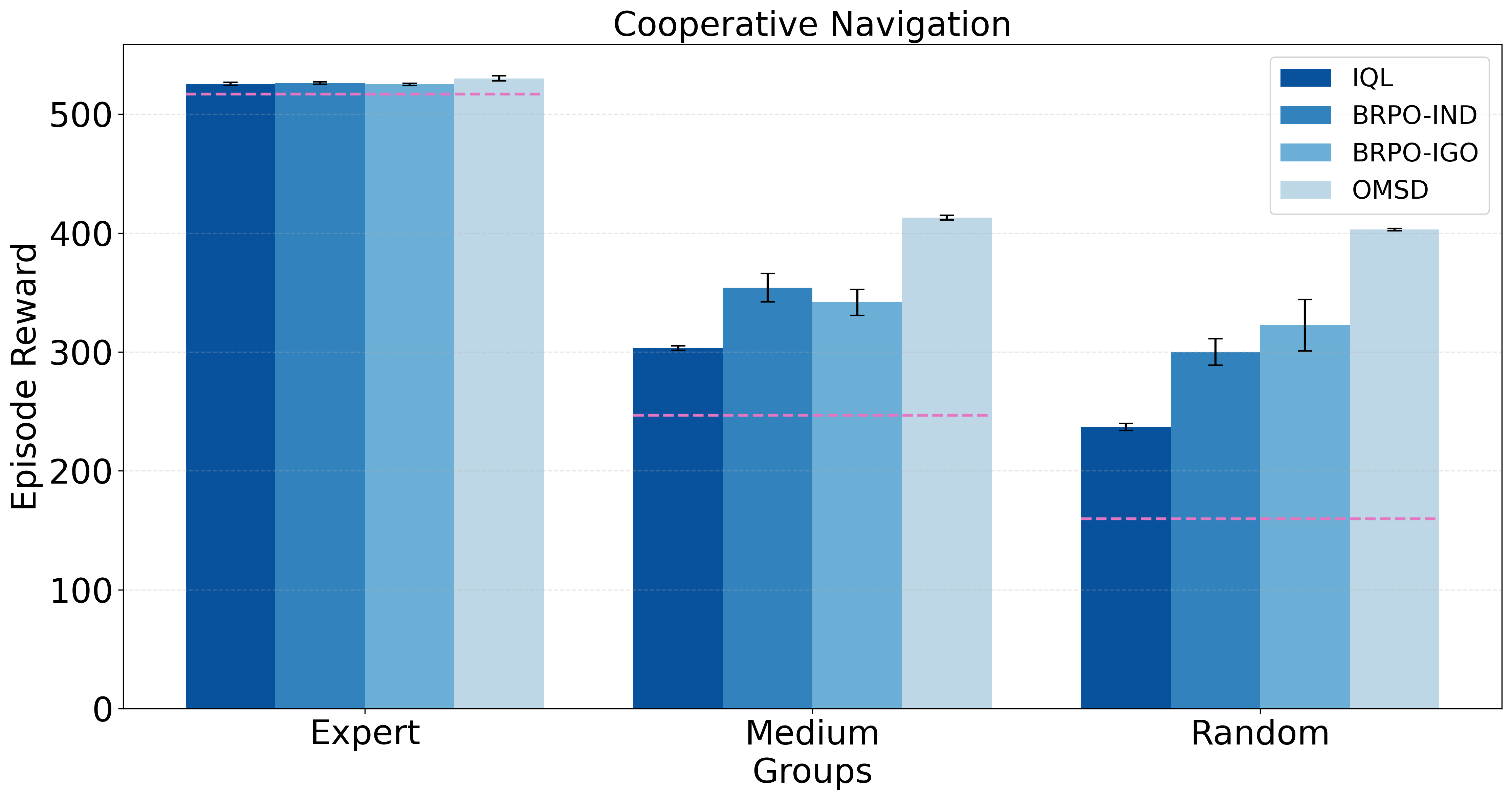}
        \caption{Cooperative Navigation}
    \end{subfigure}
    \hfill
    \begin{subfigure}[t]{0.33\textwidth}
        \centering
        \includegraphics[width=\linewidth]{fig/pp_box_fig.png}
        \caption{Predator Prey}
    \end{subfigure}
    \hfill
    \begin{subfigure}[t]{0.3\textwidth}
        \centering
        \includegraphics[width=\linewidth]{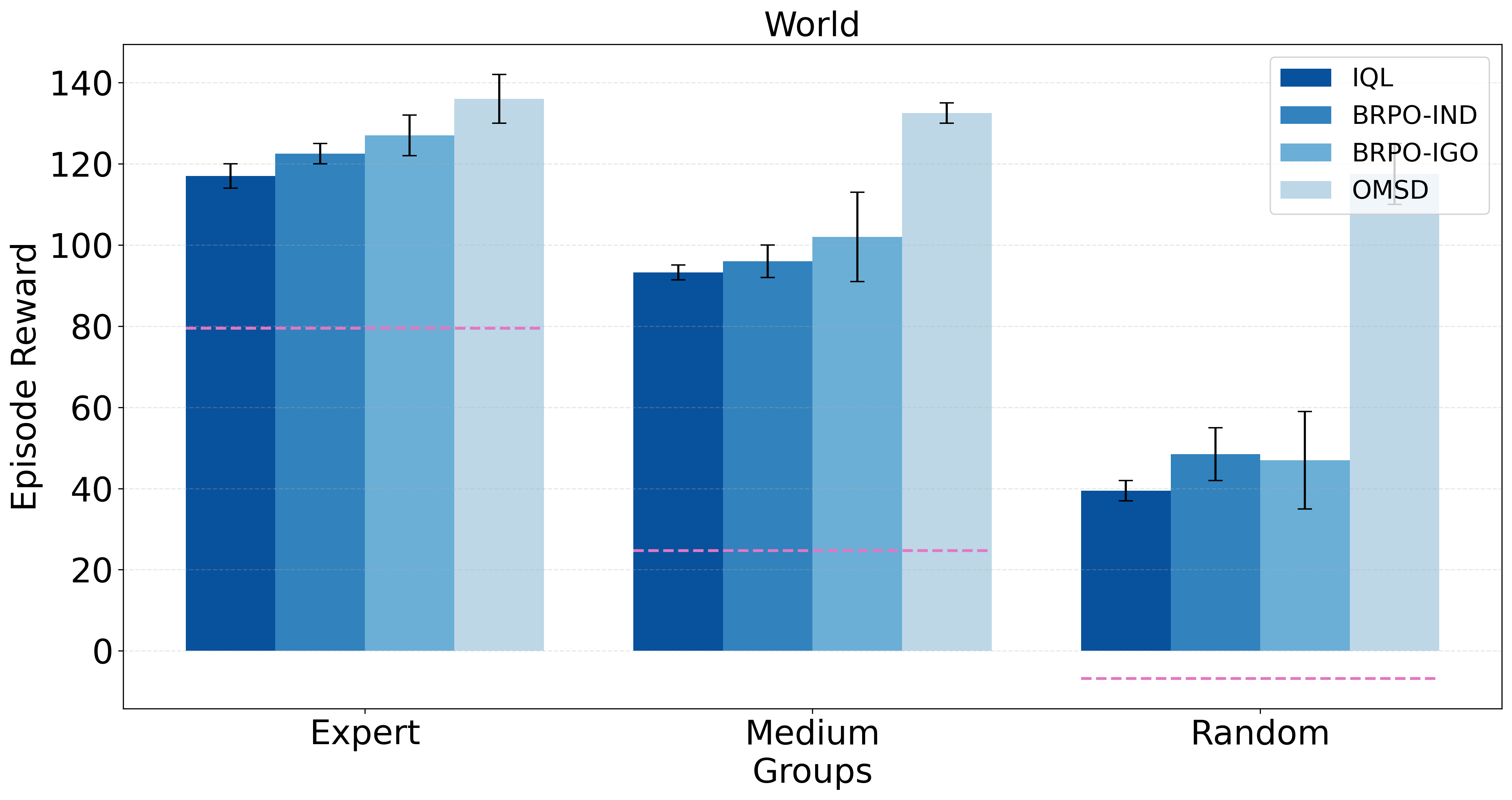}
        \caption{World}
    \end{subfigure}
    \caption{Comparison of Pretrained IQL, BRPO-IND, BRPO-CTDE, and OMSD on  Cooperative Navigation, Predator Prey, and World Tasks.}
    \label{fig:app_mpe_dec}
\end{figure}

\subsubsection{Hyperparams}\label{app:abl_2}

For the temperature coefficient, we sweep over $\beta \in $ \{0.01, 0.02, 0.05, 0.1, 0.3, 0.5\} and observe large variances in appropriate values across different tasks (Fig. \ref{fig:app_abl_beta}). We speculate this might be due to $\beta$ being closely intertwined with the behavior distribution and the variance of the Q-value. These factors might exhibit entirely different characteristics across diverse tasks.

\begin{figure}[t]
    \centering
    \begin{subfigure}[t]{0.31\textwidth}
        \centering
        \includegraphics[width=\linewidth]{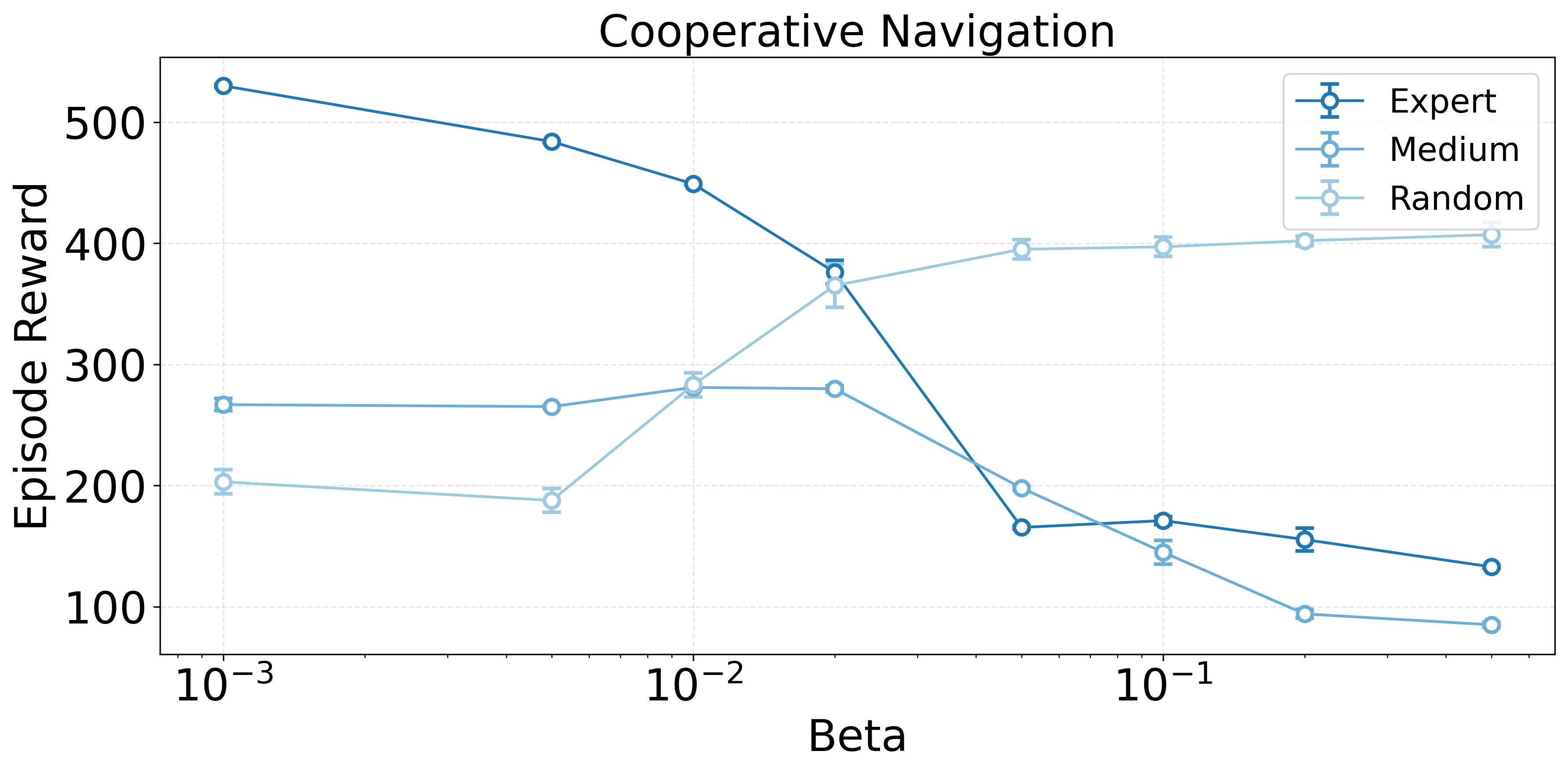}
        \caption{Cooperative Navigation}
    \end{subfigure}
    \hfill
    \begin{subfigure}[t]{0.31\textwidth}
        \centering
        \includegraphics[width=\linewidth]{fig/pp_beta_fig.png}
        \caption{Predator Prey}
    \end{subfigure}
    \hfill
    \begin{subfigure}[t]{0.31\textwidth}
        \centering
        \includegraphics[width=\linewidth]{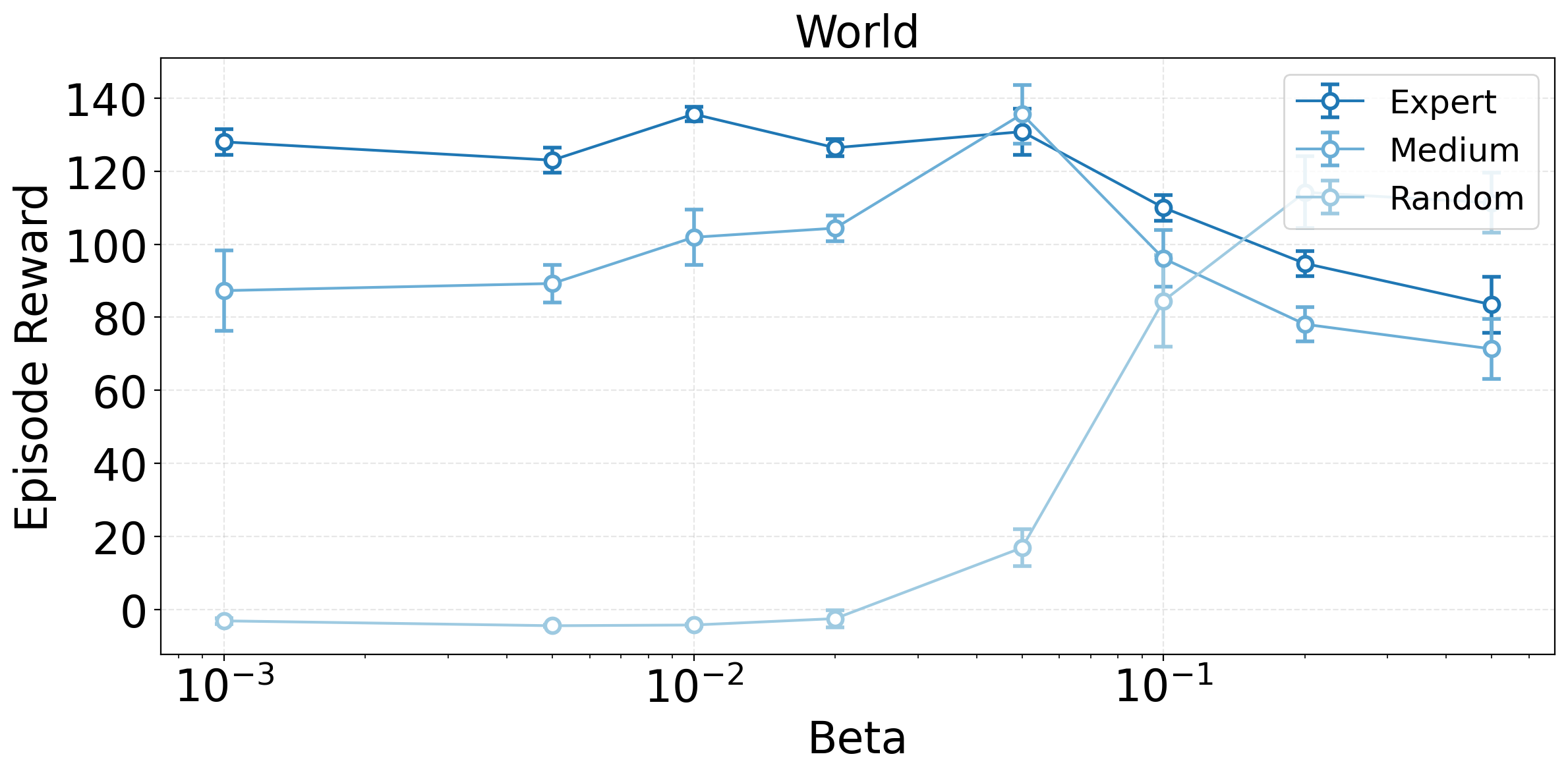}
        \caption{World}
    \end{subfigure}
    \caption{Comparison of regularization term $\beta$ of OMSD on  Cooperative Navigation, Predator Prey, and World Tasks.}
    \label{fig:app_abl_beta}
\end{figure}

\begin{figure}[t]
  \centering
  \begin{subfigure}[b]{0.32\linewidth}
    \centering
    \includegraphics[width=\linewidth]{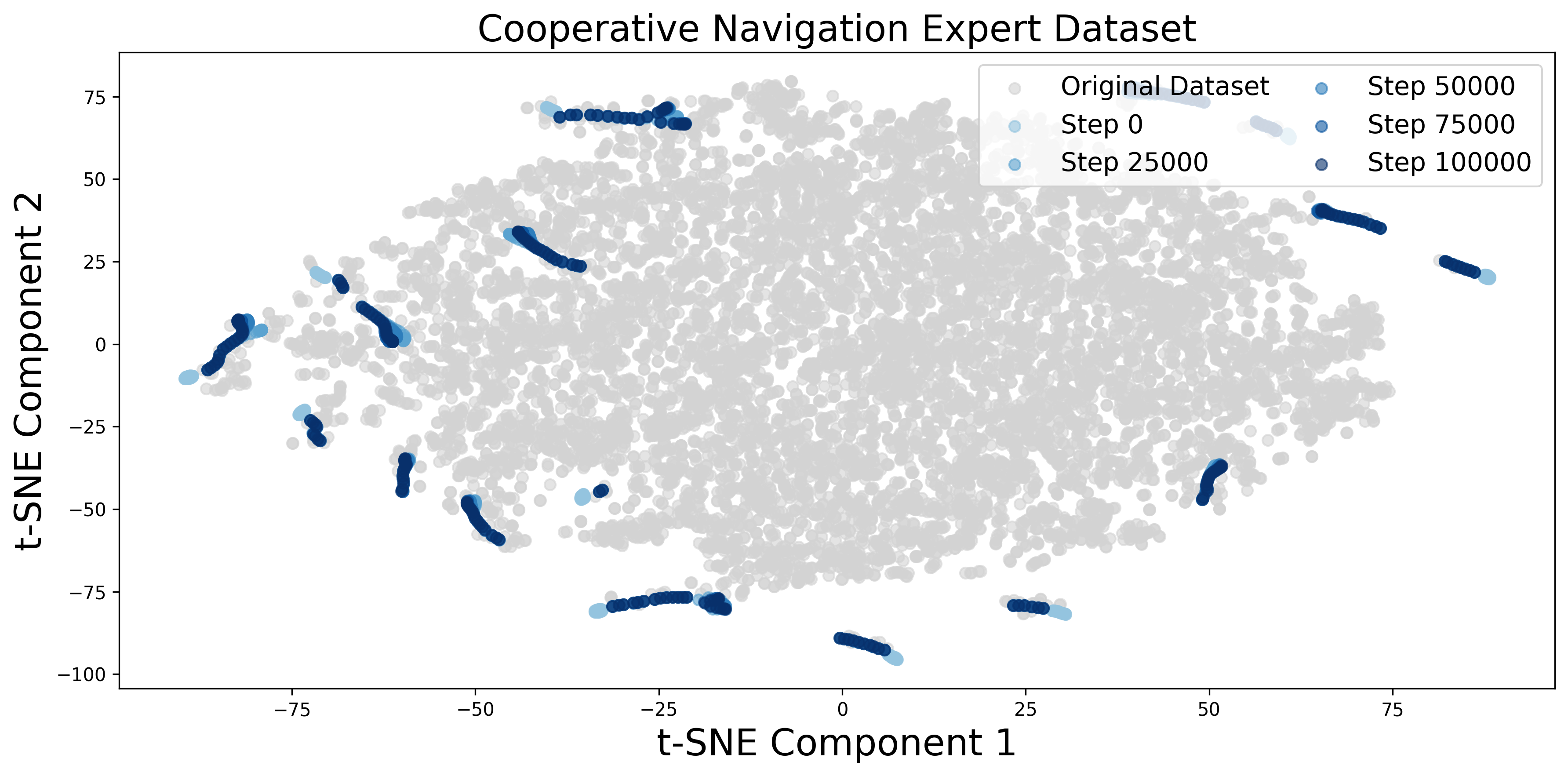}
    \caption{}
  \end{subfigure}
  \begin{subfigure}[b]{0.32\linewidth}
    \centering
    \includegraphics[width=\linewidth]{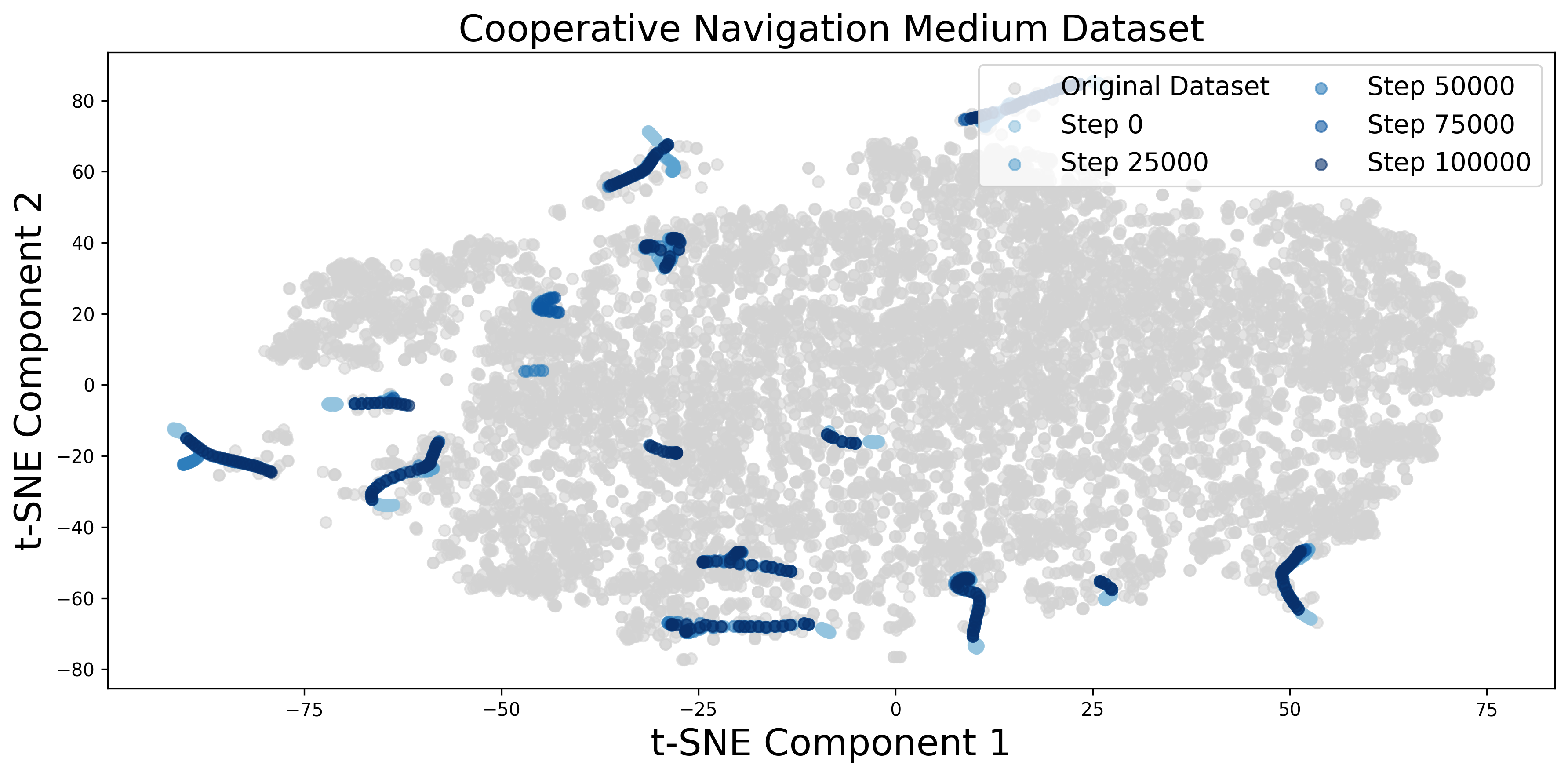}
    \caption{}
  \end{subfigure}
  \begin{subfigure}[b]{0.32\linewidth}
    \centering
    \includegraphics[width=\linewidth]{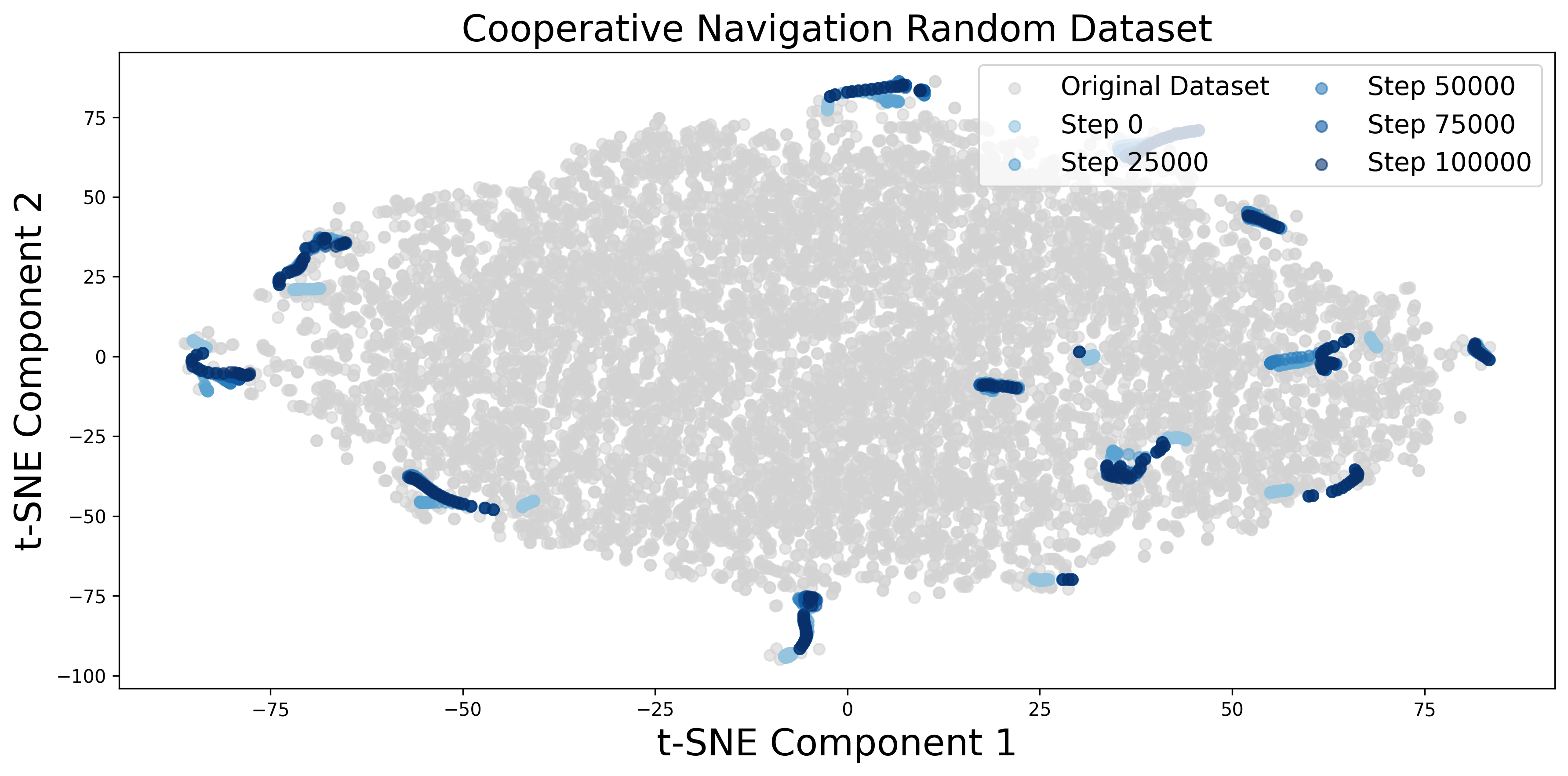}
    \caption{}
  \end{subfigure}

  \begin{subfigure}[b]{0.32\linewidth}
    \centering
    \includegraphics[width=\linewidth]{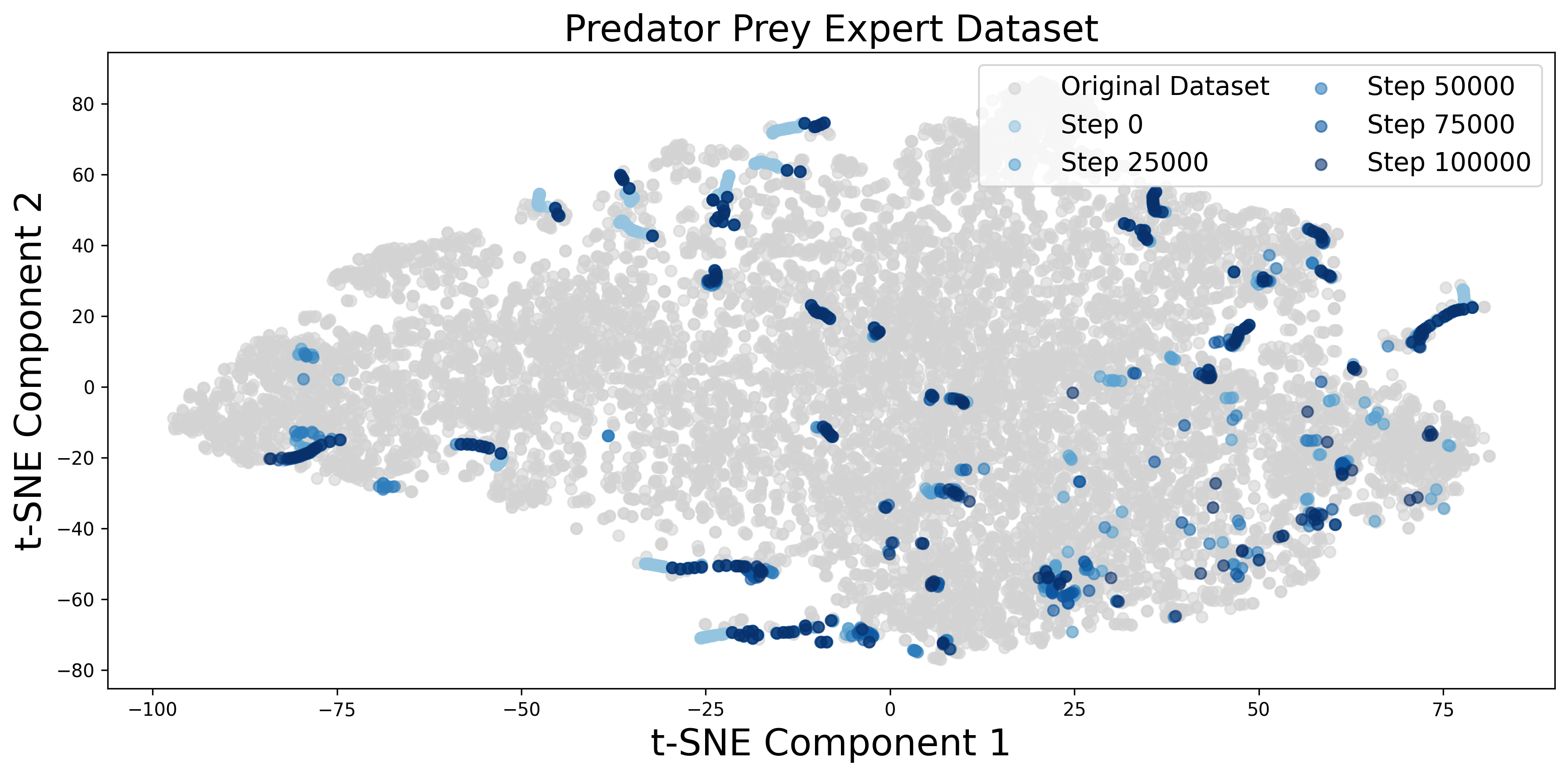}
    \caption{}
  \end{subfigure}
  \begin{subfigure}[b]{0.32\linewidth}
    \centering
    \includegraphics[width=\linewidth]{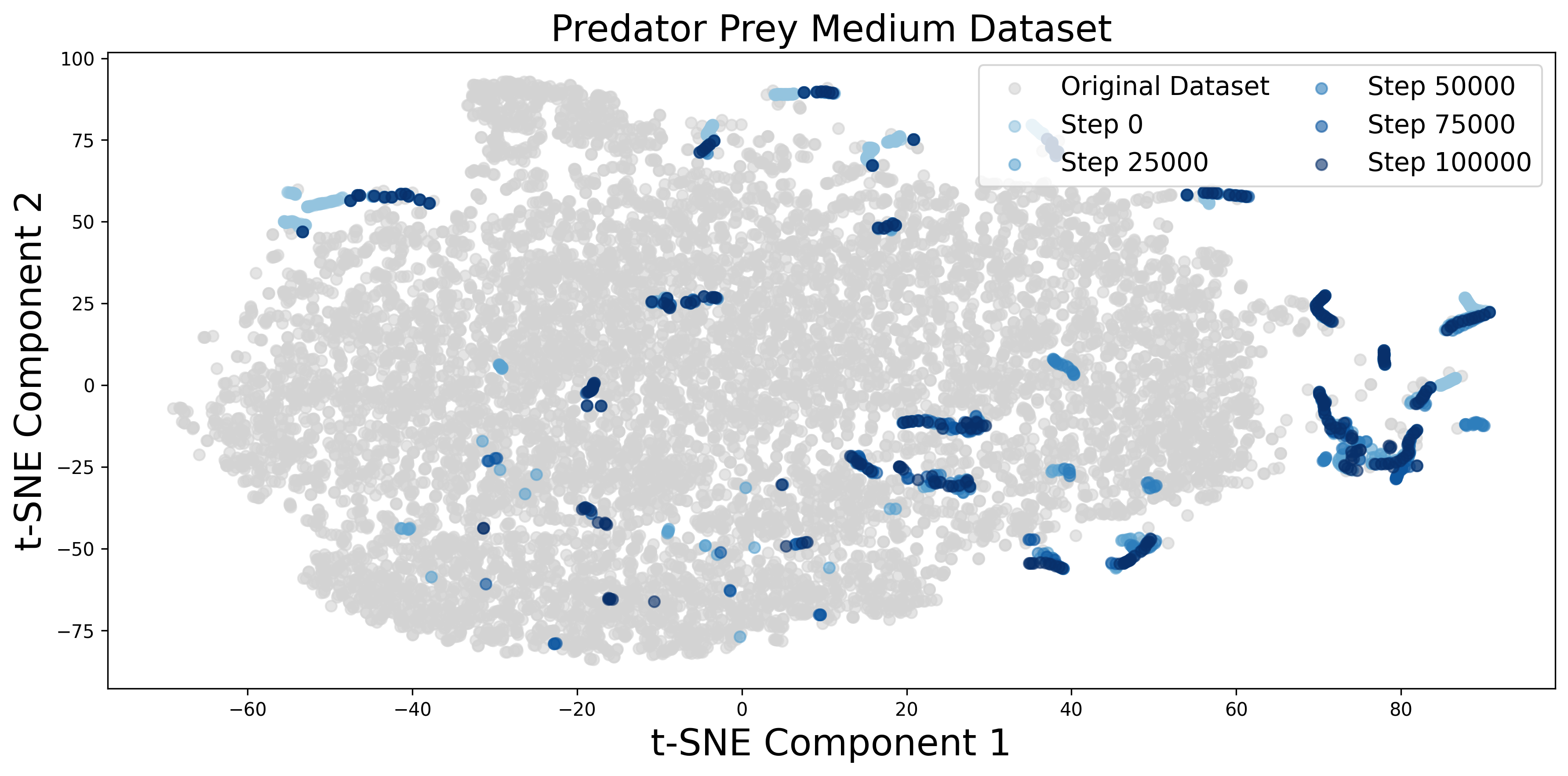}
    \caption{}
  \end{subfigure}
  \begin{subfigure}[b]{0.32\linewidth}
    \centering
    \includegraphics[width=\linewidth]{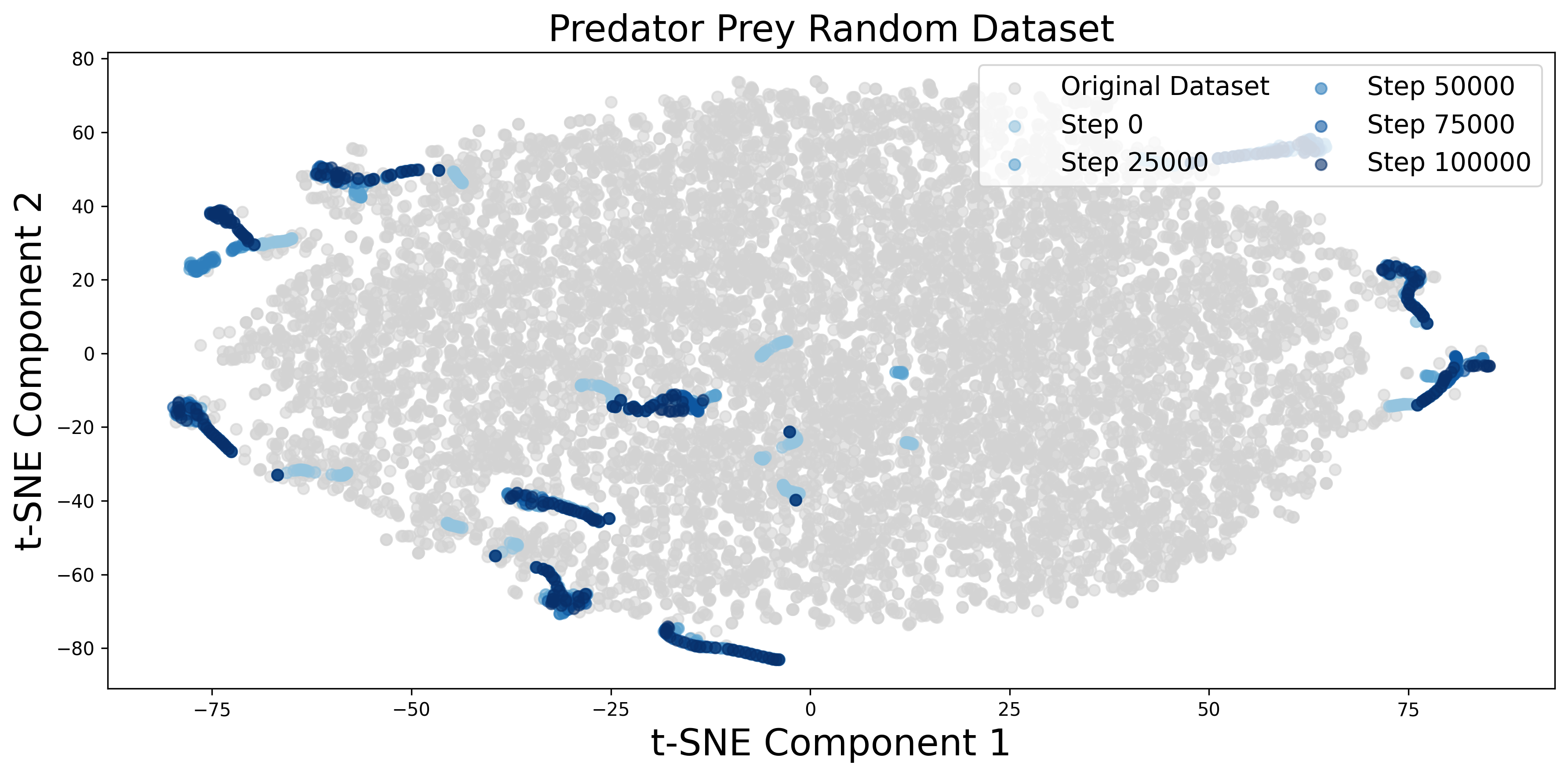}
    \caption{}
  \end{subfigure}
  
  \begin{subfigure}[b]{0.32\linewidth}
    \centering
    \includegraphics[width=\linewidth]{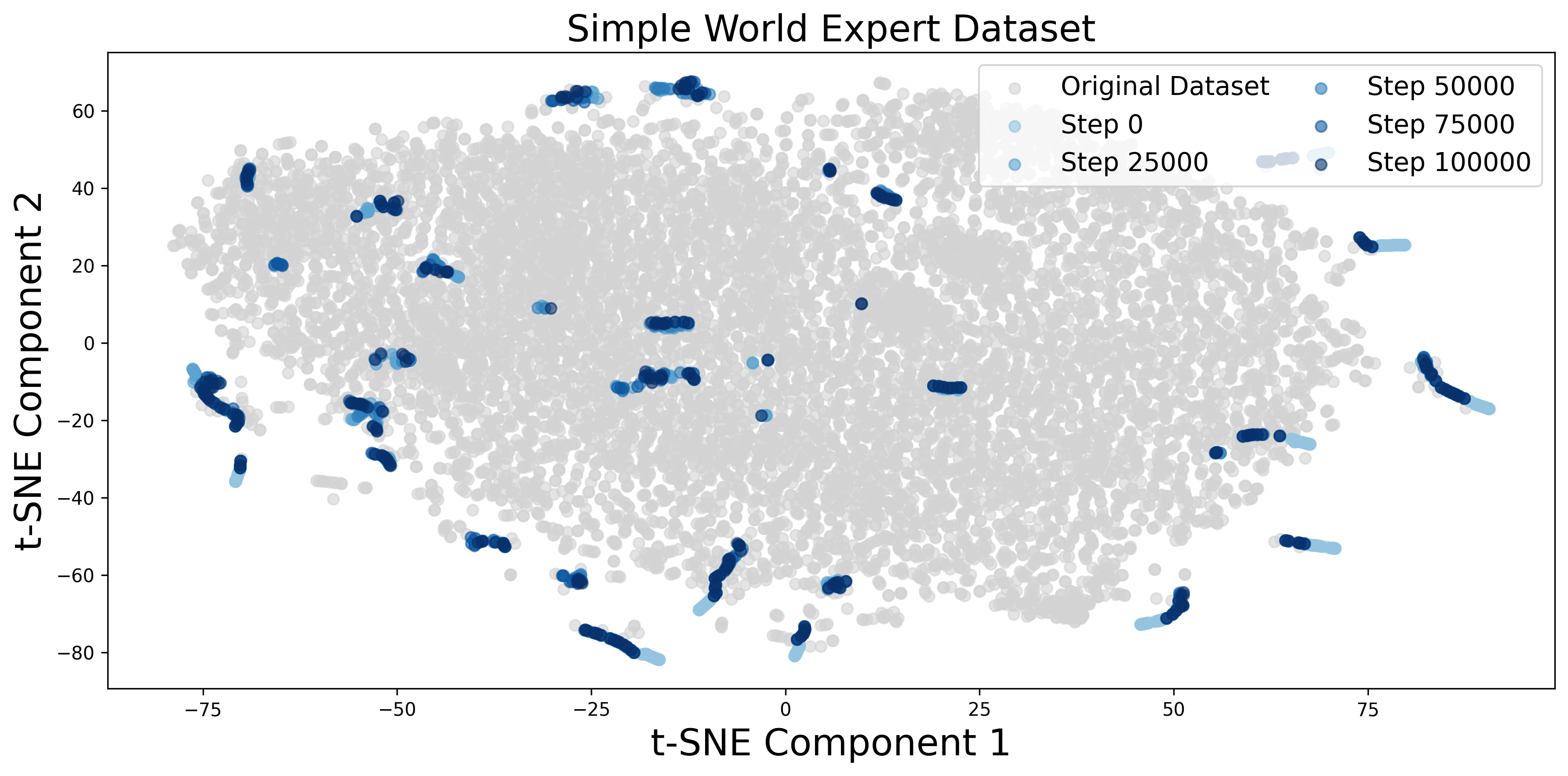}
    \caption{}
  \end{subfigure}
  \begin{subfigure}[b]{0.32\linewidth}
    \centering
    \includegraphics[width=\linewidth]{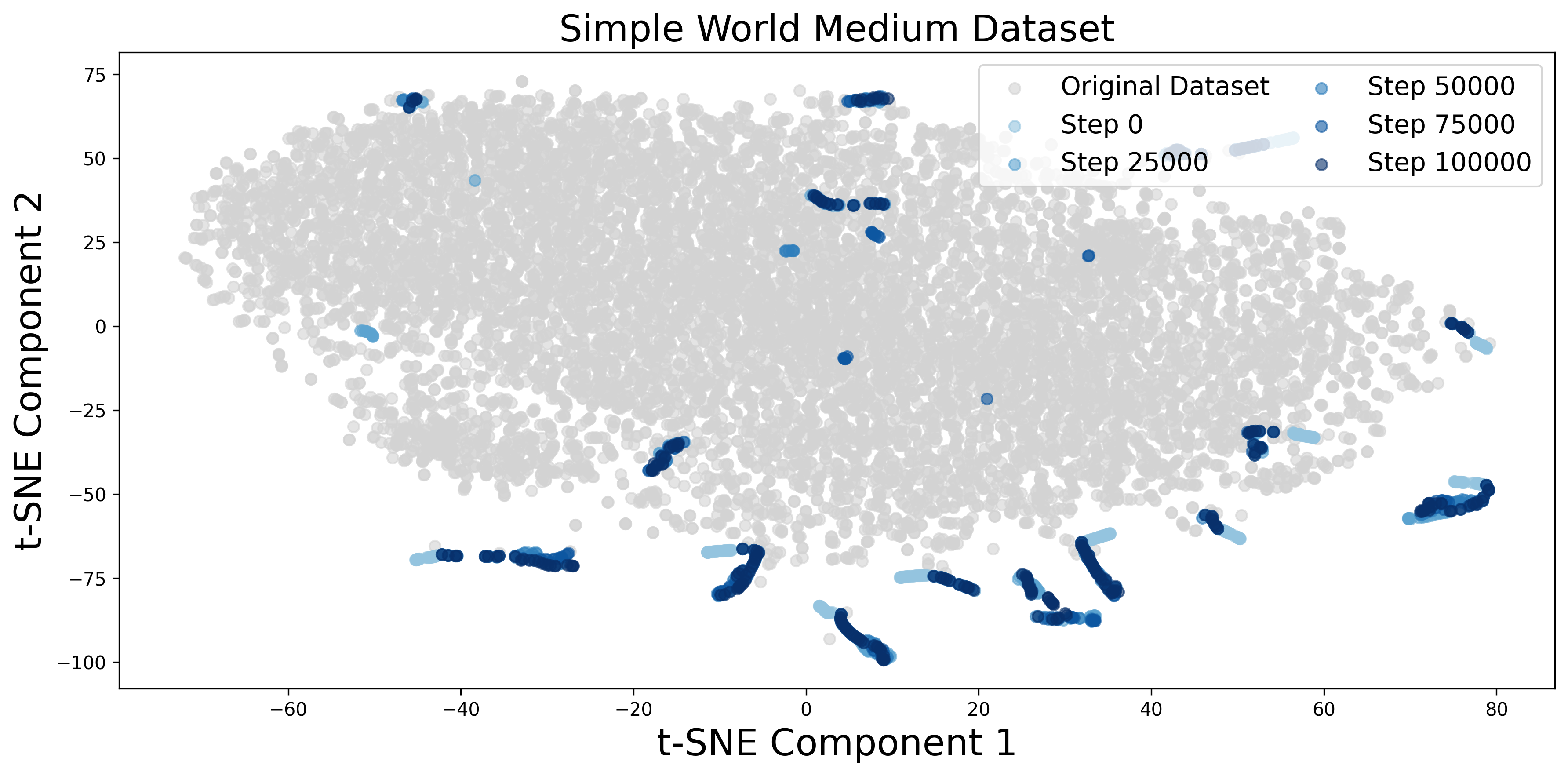}
    \caption{}
  \end{subfigure}
  \begin{subfigure}[b]{0.32\linewidth}
    \centering
    \includegraphics[width=\linewidth]{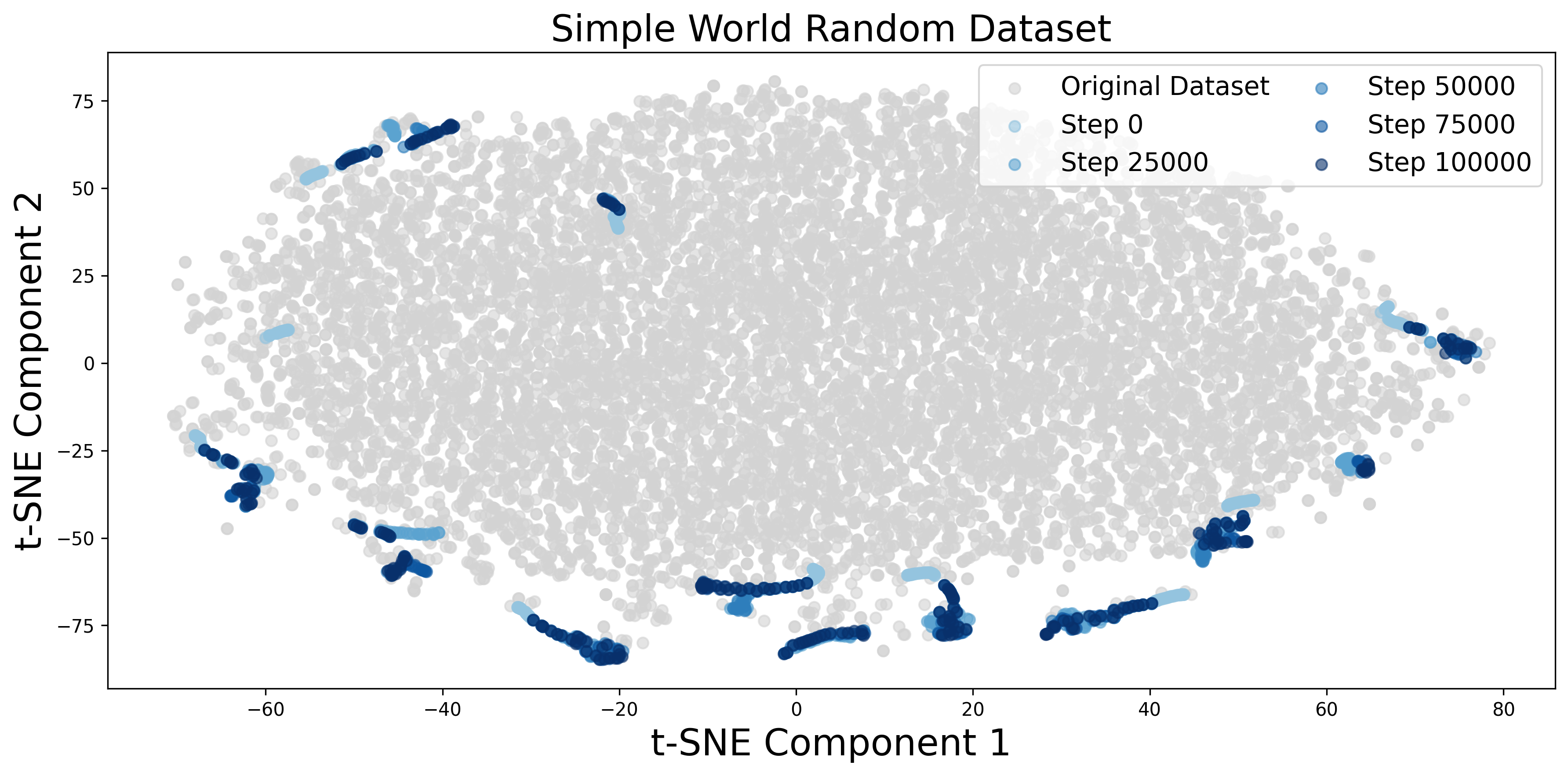}
    \caption{}
  \end{subfigure}
  
  \caption{Full training trajectories of OMSD on \texttt{MPE} tasks.}
  \label{fig:full_omsd_abl3}
\end{figure}

\subsubsection{Visualization of Final Policy}\label{app:abl_3}
In Fig. \ref{fig:full_omsd_abl3}, we illustrate the full learning trajectories of OMSD algorithms on \texttt{MPE} datasets. The gray data points represent the t-SNE \citep{van2008visualizing} distribution of the state-joint action pairs from the original dataset, while the data points transitioning from light blue to dark blue indicate the t-SNE distribution of episode trajectories collected under policies at different training steps, using 10 random seeds. It can be observed that during the policy update process, the distribution remains mostly within the range of the original dataset, effectively avoiding the OOD  problem. This demonstrates that our sequential score decomposition method can encourage that the learning distribution remains in-sample under multimodal offline MARL datasets. Furthermore, as the policy updates, the policy gradually learns and converges to high-reward regions, concentrating within a limited range. This indicates that the joint action critic can effectively provide signals for high-reward regions, guiding policy improvement.

\subsubsection{Sequential Update Orders}\label{app:abl_4}

To demonstrate our method's insensitivity to update order, we conducted randomized ordering experiments on the OMIGA Hopper-v2 datasets. Specifically, the task involved three agents. The standard OMSD training process used the default agent ID order as the pre-trained diffusion model and policy update order to determine prefix actions (0-1-2). In addition, we randomly assigned update orders of 0-2-1 and 2-1-0 as control groups to avoid accidental agent relationship modeling under specific update orders. Experimental results show that, with the same pre-training parameters and OMSD training parameter settings, changing only the update order does not significantly impact performance, strongly demonstrating the robustness of our method for capturing complex multimodal behavior distributions. Furthermore, thanks to our structural design, our algorithm only needs to consider the behavior of preceding agents during training, relying solely on its own local observations during execution without needing action information from others. Compared to sequential action modeling methods such as MAT \citep{wen2022multi}, this method offers greater flexibility and is insensitive to specific agent dependencies.

\begin{figure}[t]
    \centering
    \begin{subfigure}{0.33\textwidth}
    \includegraphics[width=0.8\linewidth]{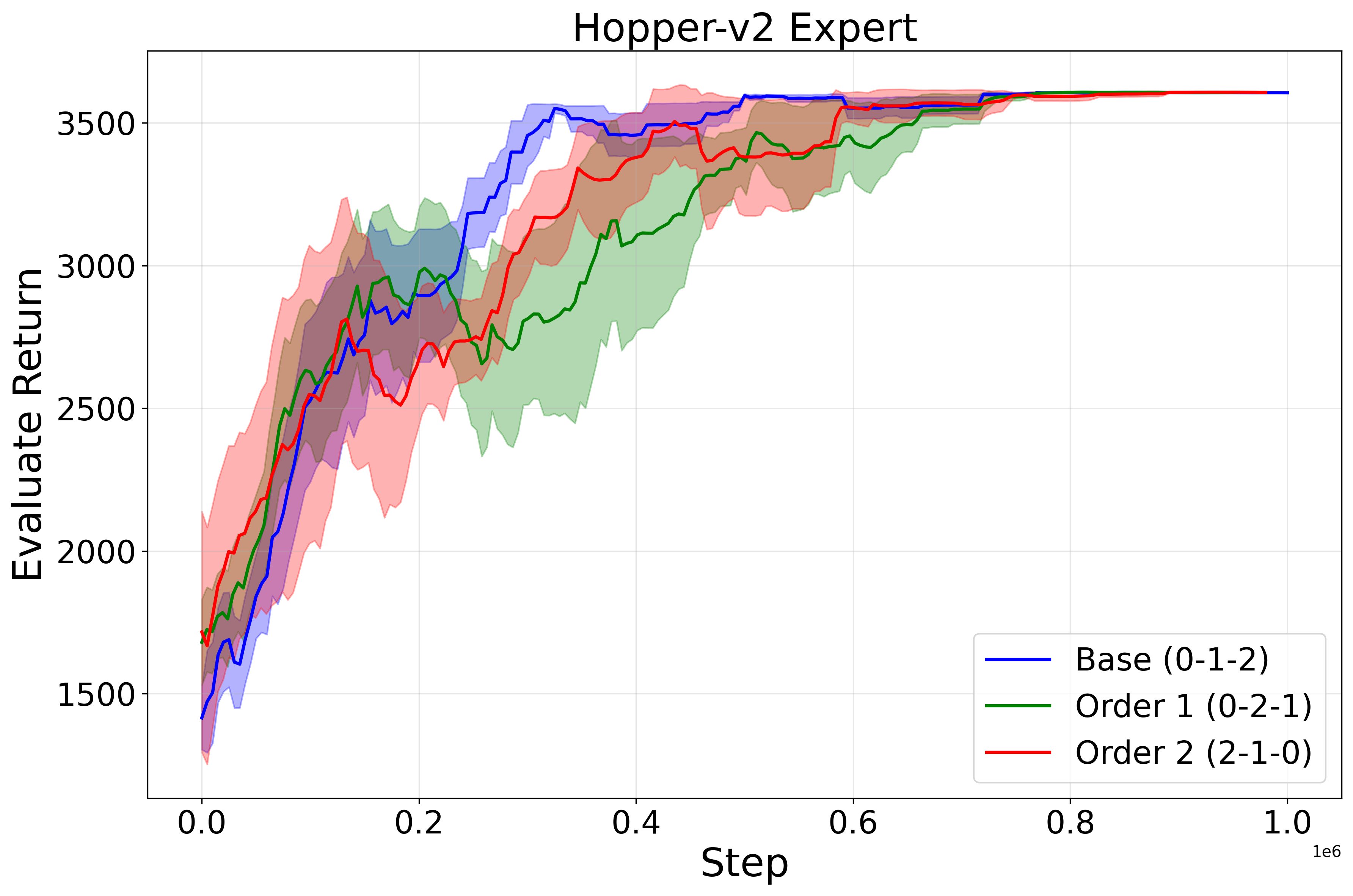}
    \caption{Expert Datasets}
  \end{subfigure}%
  \begin{subfigure}{0.33\textwidth}
\includegraphics[width=0.8\linewidth]{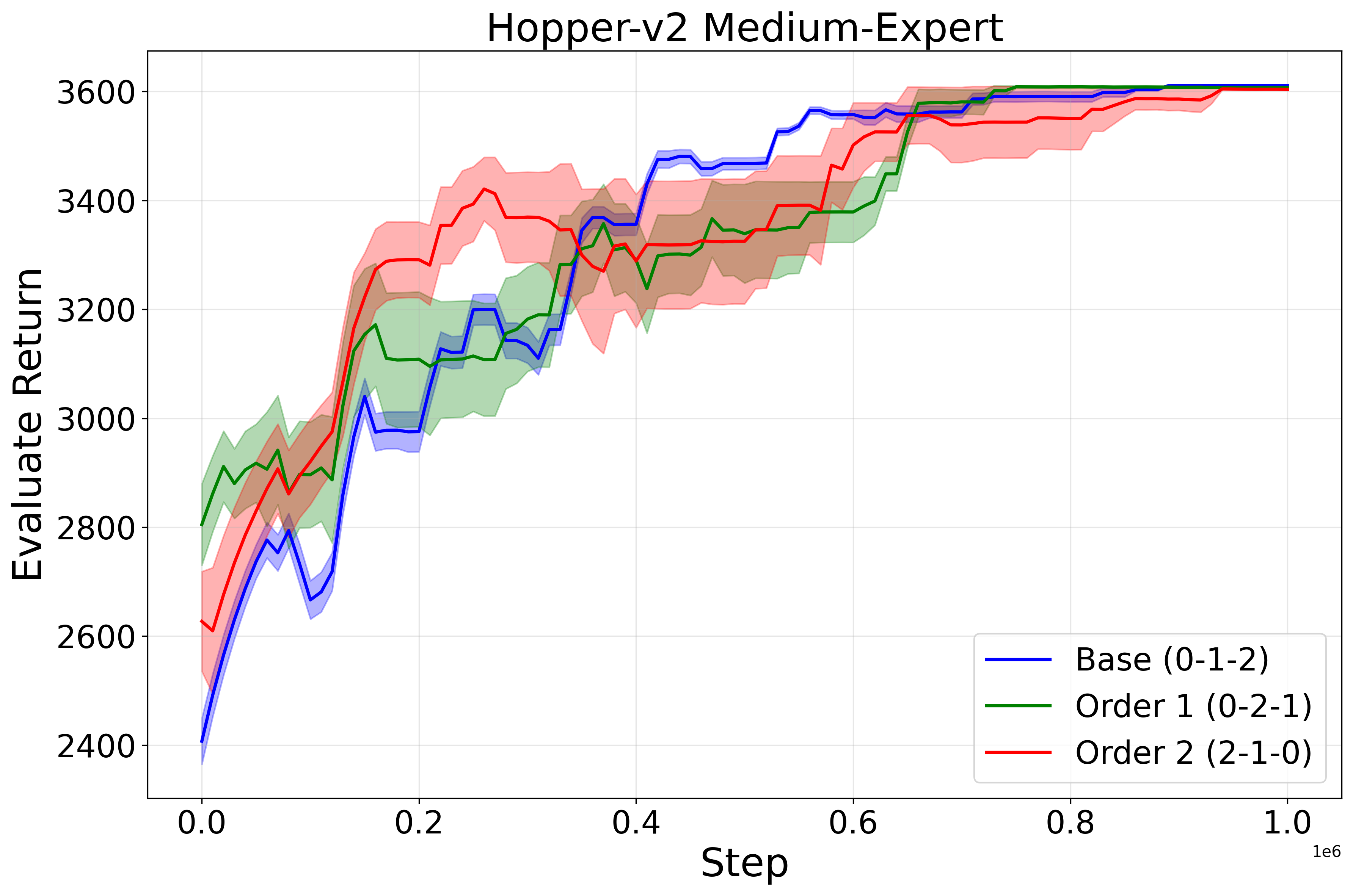}
    \caption{Medium-Expert Datasets}
  \end{subfigure}%
  \begin{subfigure}{0.33\textwidth}
\includegraphics[width=0.8\linewidth]{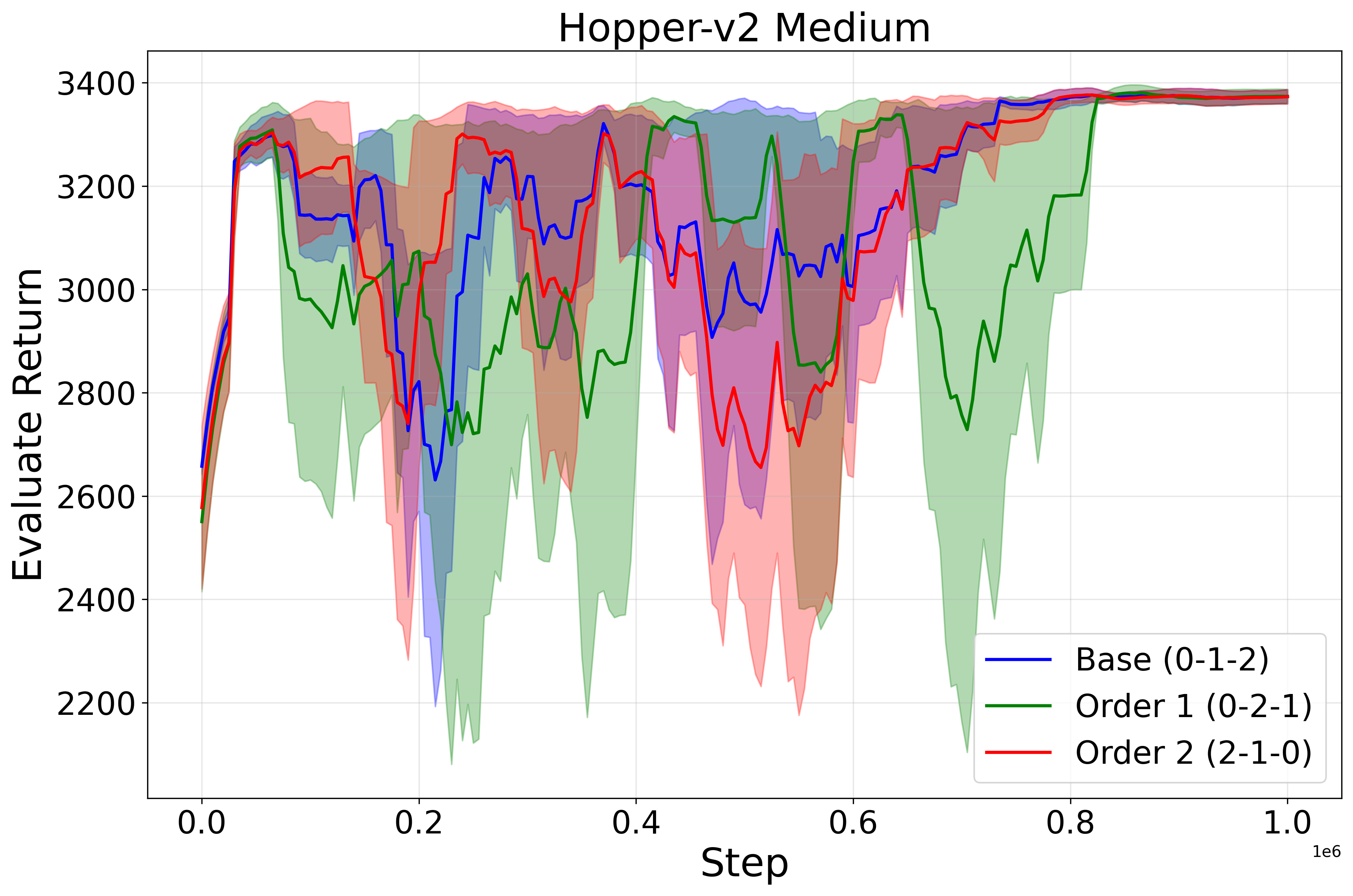}
    \caption{Medium Datasets}
  \end{subfigure}%
    \caption{Ablation experiments on three different random update orders of agents in Hopper-v2.}
    \label{fig:seq_order_abl}
\end{figure}

\subsubsection{Density Estimator Ablation}\label{app:density_estimator_ablation}
To examine whether the performance gain of OMSD mainly comes from the sequential conditional decomposition or from the specific choice of diffusion models, we compare diffusion score estimators with simpler conditional density estimators under the same OMSD training pipeline. Specifically, we replace the conditional diffusion model with Gaussian Mixture Models (GMMs, \citet{bishop2006pattern}) and a Masked Autoregressive Flow (MAF, \citet{papamakarios2017masked}), while keeping the sequential conditioning structure, pretrained IQL critic, policy architecture, and evaluation protocol unchanged. We evaluate these variants on the \texttt{MPE} Predator-Prey task across expert, medium, and random datasets.

For the GMM variants, we fit agent-wise conditional GMMs with 4 and 8 mixture components, where each model uses the same state and prefix-action conditioning variables as OMSD. For the normalizing-flow variant, we use a standard conditional MAF implementation to estimate the same conditional behavior distribution. The hidden dimension is set to 512, matching the representation scale used by our diffusion score models. The learning rate, batch size, and number of training steps are kept the same as those used for diffusion score-model pretraining. All variants share the same pretrained IQL critic, deterministic policy architecture, policy optimization settings, random seeds, and evaluation protocol as OMSD-Diffusion. Only the conditional behavior score estimator is replaced.

\begin{table}[t]
\centering
\caption{Density estimator ablation on \texttt{MPE} Predator-Prey. Results are reported as mean $\pm$ standard deviation over 5 seeds.}
\label{tab:density_estimator_ablation}
\small
\begin{tabular}{c|ccccc}
\toprule
\textbf{Dataset} & \textbf{IQL} & \textbf{OMSD-GMM-4} & \textbf{OMSD-GMM-8} & \textbf{OMSD-MAF} & \textbf{OMSD-Diffusion} \\
\midrule
Expert & 100.2 $\pm$ 17.7 & 68.1 $\pm$ 15.9 & 60.9 $\pm$ 5.4 & 49.4 $\pm$ 13.4 & \textbf{161.4 $\pm$ 9.4} \\
Medium & 65.4 $\pm$ 14.1 & 54.9 $\pm$ 18.6 & 55.3 $\pm$ 17.7 & 53.2 $\pm$ 15.5 & \textbf{137.1 $\pm$ 14.1} \\
Random & 53.6 $\pm$ 21.8 & 88.6 $\pm$ 24.4 & 86.4 $\pm$ 19.5 & 114.8 $\pm$ 14.3 & \textbf{133.9 $\pm$ 16.5} \\
\bottomrule
\end{tabular}
\end{table}

In Table.~\ref{tab:density_estimator_ablation}, the results show that the sequential conditional regularization framework can already improve over IQL on the random dataset even with simpler density estimators, suggesting that coordination-aware behavior regularization is useful beyond a particular model class. However, GMMs and MAF perform poorly on expert and medium datasets, where coordinated behaviors are more structured and multimodal. This indicates that inaccurate conditional density estimation can introduce biased regularization directions and hurt policy improvement. We hypothesize that the weak MAF performance is partly due to the difficulty of mapping a connected base distribution to disconnected or weakly connected coordinated modes through continuous bijections, which can assign artificial probability mass to low-density regions. In contrast, diffusion-based score models capture multimodal behavior through denoising score matching and provide consistently stronger performance across all data qualities. These results support using diffusion models as the default score estimator in OMSD for complex multimodal offline MARL datasets.

\section{Data Quality Visualization of  Offline Datasets}\label{app:datasets}

In this section, we provide more details about the offline datasets \texttt{MPE}, 2-agent \texttt{HalfCheetah} we used in this paper. The data distribution with violin plots are shown in Fig. \ref{fig:violin}. These plots are provided by OG-MARL\footnote{\url{https://github.com/instadeepai/og-marl}} \cite{formanek2023off}.

\begin{figure}[t]
    \centering
    \begin{subfigure}[t]{0.25\textwidth}
    \includegraphics[width=\linewidth]{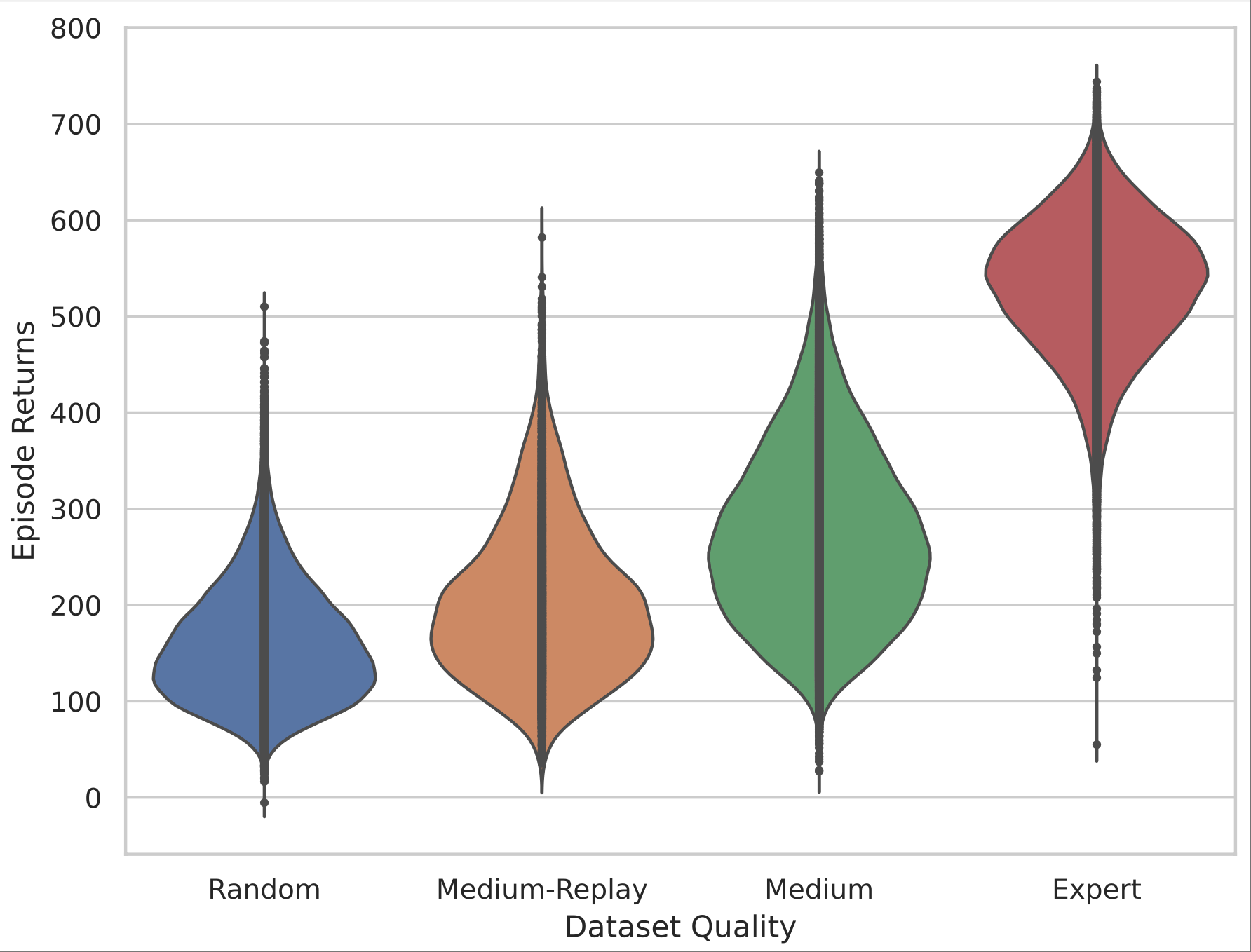}
    \caption{Cooperative Navigation Violin Plot}
  \end{subfigure}%
  \begin{subfigure}[t]{0.25\textwidth}
    \includegraphics[width=\linewidth]{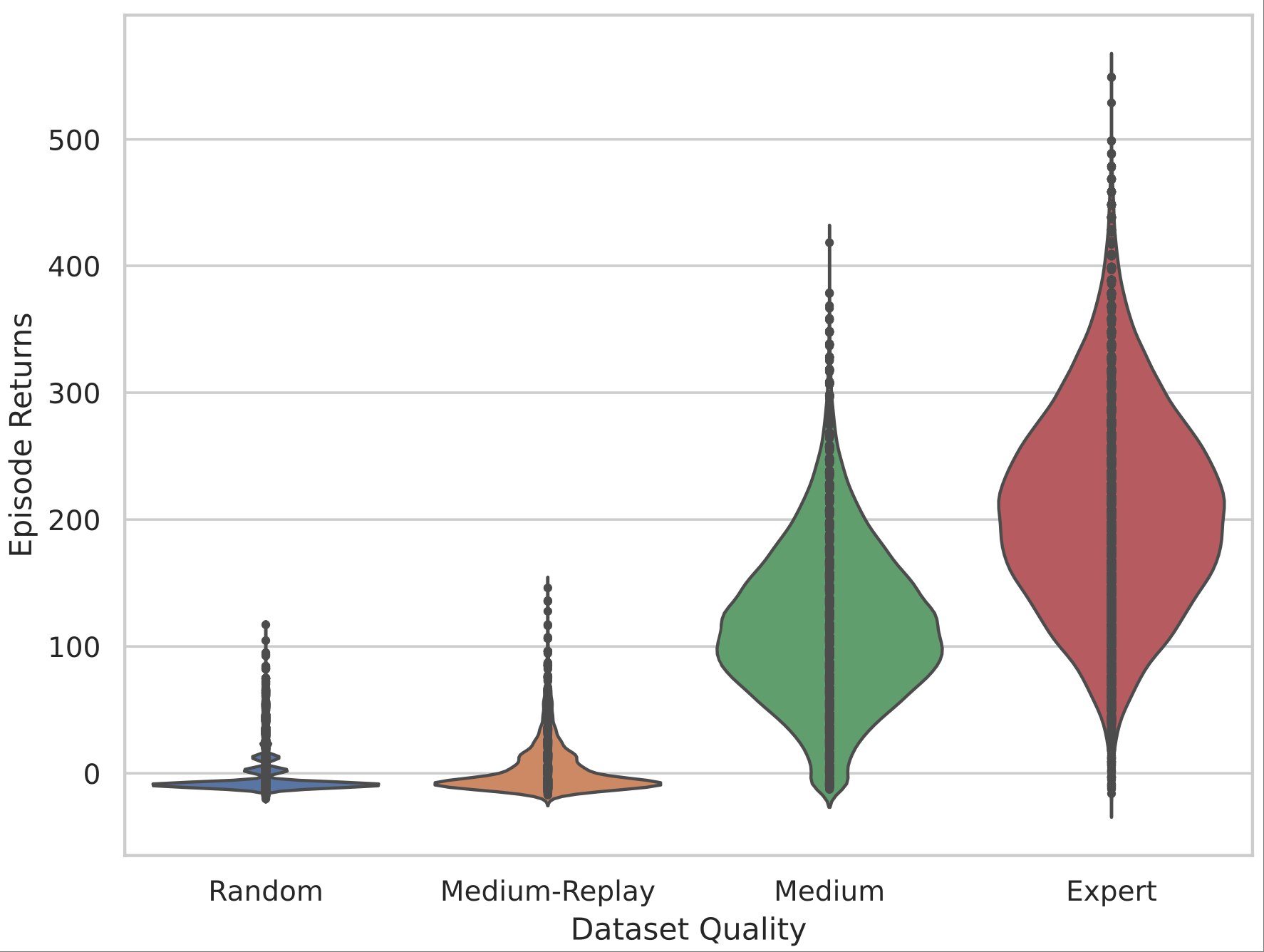}
    \caption{Predator Prey Violin Plot}
  \end{subfigure}%
  \begin{subfigure}[t]{0.25\textwidth}
    \includegraphics[width=\linewidth]{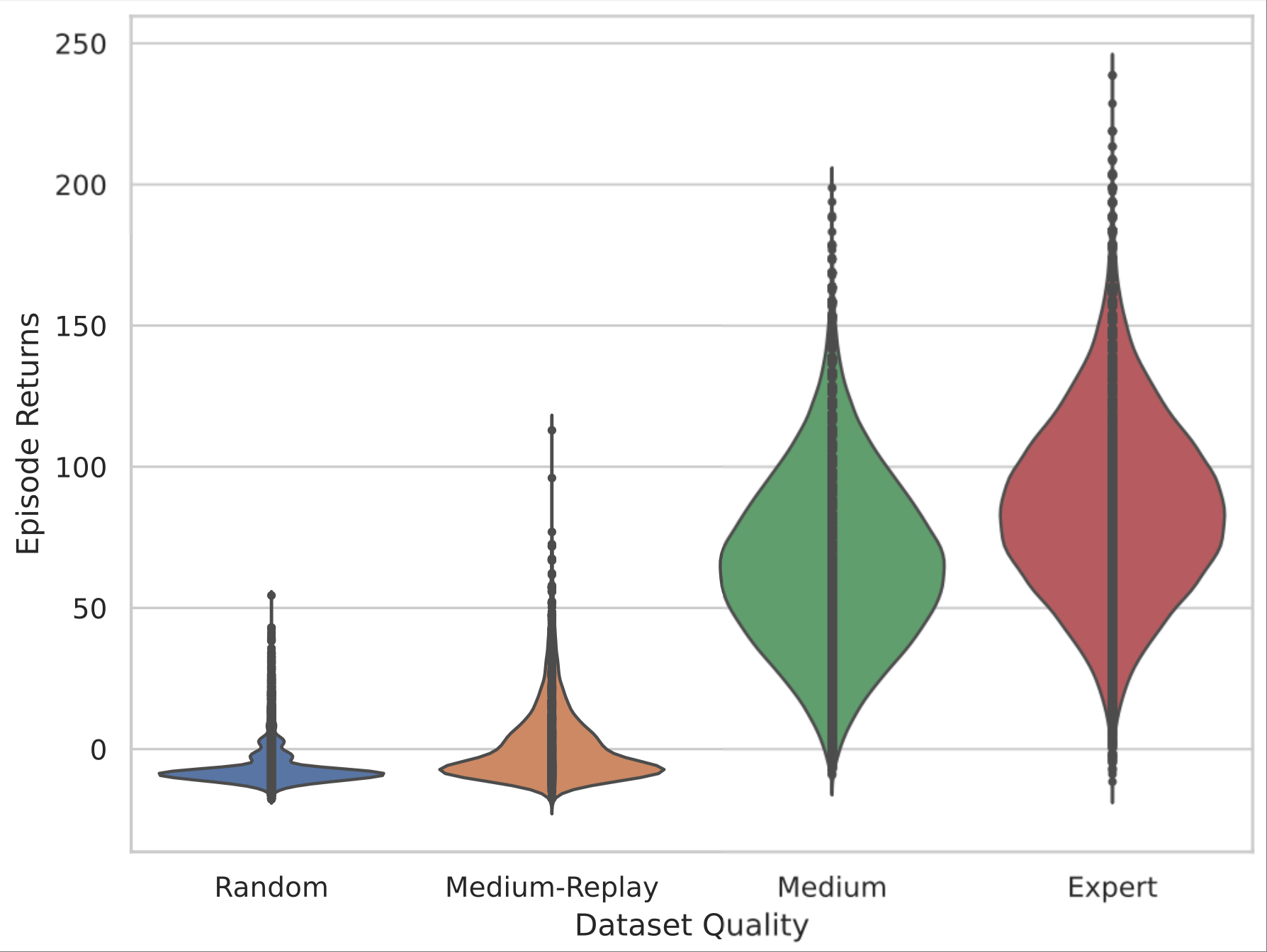}
    \caption{World Violin Plot}
  \end{subfigure}%
  \begin{subfigure}[t]{0.25\textwidth}
      \includegraphics[width=\linewidth]{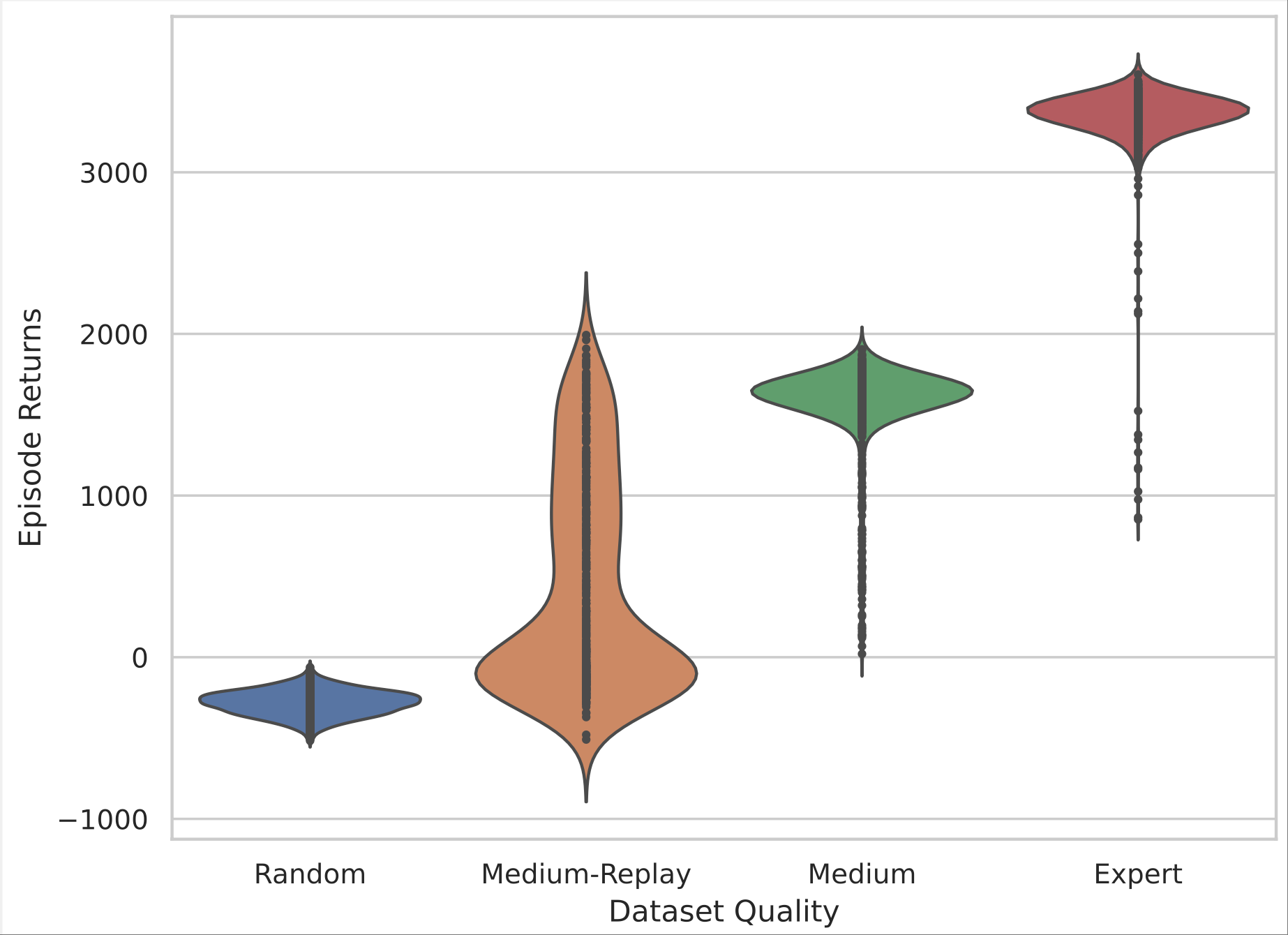}
    \caption{HalfCheetah Violin Plot}
  \end{subfigure}
    \caption{Violin plots of MPE and \texttt{MaMuJoCo} offline datasets.}
    \label{fig:violin}
\end{figure}

\section{Why do Offline Independent Learning and Naive CTDE Frameworks Fail?}\label{app:ind_ctde_brpo}

To further elucidate the impact of multimodal behavioral policies on offline MARL, we selected the standard policy-based offline RL method, BRPO \cite{wu2019behavior}, and extended it to the MARL setting to analyze the failure modes. We focused on two mainstream paradigms: independent learning and CTDE learning.

\subsection{Policy-based Offline MARL with Independent Learning.}\label{app:ind_brpo}

 We begin our analysis with independent BRPO (BRPO-IND), a fundamental case under the independent learning paradigm. Generally, independent learning methods decompose MARL problems into multiple autonomous single-agent RL processes by treating other agents as part of dynamic environments. This is a robust approach widely adopted in both online and offline MARL algorithms that has demonstrated stable performance across many tasks, which assumes that each policy is independently factorizable.  Specifically, in BRPO-IND, each agent independently learns the critic and models individual behavior policy $\mu_i(a_i|s)$ from individual datasets. With Lemma \ref{lemma1}, we  propose the following proposition.

\begin{proposition}\label{prop1}
Consider a fully cooperative game with n agents. Under the independent learning framework, the optimal individual policy of each agent is:
\begin{equation*}
\pi_i^*(a_i \mid s)=\frac{1}{Z(s)} \mu_i(a_i \mid s) \exp \left(\beta_i Q^{i}(s, a_i)\right),
\end{equation*}
where $\mu_i$ and $Q^i$ are individual behavior policy and Q-value function of agent $i$, respectively. With Lemma \ref{lemma1}, the learning objective of BRPO-IND is:
\begin{equation*}\label{ind_loss}
\begin{split}
\mathcal{L}_{Ind} = \max  \sum_{i=1}^n  \mathbb{E}_{s \sim \mathcal{D}_{\mu}, a_i \sim \pi_{\theta_i}} Q^i(s, a_i) - \underbrace{\frac{1}{\beta} D_{\text{KL}} \left[ \pi_{\theta_i} \middle\| \mu_i \right]}_{\text{Ind Behavior Reg}}.
\end{split}
\end{equation*}
\end{proposition}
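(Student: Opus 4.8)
The plan is to obtain the statement as a direct, agent-wise application of the closed-form solution in Eq.~(4) together with Lemma~\ref{lemma1}, with the only genuine content being the \emph{reduction} that makes the single-agent machinery applicable. First I would make the independent-learning reduction precise: under this paradigm agent $i$ treats the other agents as a fixed (stochastic) part of the environment, so agent $i$ faces a bona fide single-agent offline RL problem with behavior policy $\mu_i(\cdot\mid s)$ and value function $Q^i(s,a_i)$. The per-agent instance of the regularized objective~(\ref{equ1}) is then $\max_{\pi_i}\mathbb{E}_{s\sim\mathcal{D}_{\mu}}\big[\mathbb{E}_{a_i\sim\pi_i(s)}Q^i(s,a_i)-\tfrac{1}{\beta_i}D_{\mathrm{KL}}(\pi_i(\cdot\mid s)\,\|\,\mu_i(\cdot\mid s))\big]$.

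Second, I would invoke the closed-form maximizer of Eq.~(4). Pointwise in $s$ the functional $\pi_i\mapsto\mathbb{E}_{a_i\sim\pi_i}Q^i(s,a_i)-\tfrac{1}{\beta_i}D_{\mathrm{KL}}(\pi_i\,\|\,\mu_i)$ is strictly concave over the simplex of distributions on $\mathcal{A}^i$, and the standard Lagrangian / Gibbs-variational argument (equivalently, specializing Eq.~(4) with $\mu\leftarrow\mu_i$, $Q_\phi\leftarrow Q^i$, $\beta\leftarrow\beta_i$) yields the unique optimizer $\pi_i^*(a_i\mid s)=\tfrac{1}{Z(s)}\mu_i(a_i\mid s)\exp(\beta_i Q^i(s,a_i))$, with partition function $Z(s)=\int \mu_i(a_i\mid s)\exp(\beta_i Q^i(s,a_i))\,da_i$, finite whenever $Q^i$ is bounded on $\operatorname{supp}\mu_i(\cdot\mid s)$. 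Since the outer objective separates across states, maximizing state-by-state is equivalent to maximizing under $s\sim\mathcal{D}_{\mu}$, so this establishes the first claim.

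Third, I would apply Lemma~\ref{lemma1} with the substitution $(\pi^*,\pi_\theta,\mu,Q_\phi)\leftarrow(\pi_i^*,\pi_{\theta_i},\mu_i,Q^i)$ to rewrite the reverse-KL distillation target $\min_{\theta_i}\mathbb{E}_{s\sim\mathcal{D}_{\mu}}D_{\mathrm{KL}}[\pi_{\theta_i}(\cdot\mid s)\,\|\,\pi_i^*(\cdot\mid s)]$ in the behavior-regularized form $\max_{\theta_i}\mathbb{E}_{s\sim\mathcal{D}_{\mu},\,a_i\sim\pi_{\theta_i}}Q^i(s,a_i)-\tfrac{1}{\beta}D_{\mathrm{KL}}[\pi_{\theta_i}\,\|\,\mu_i]$. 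Because the independent-learning scheme trains each $\theta_i$ separately and the $n$ objectives share no parameters, the overall training objective is their sum, $\mathcal{L}_{Ind}=\max\sum_{i=1}^n\big(\mathbb{E}_{s\sim\mathcal{D}_{\mu},\,a_i\sim\pi_{\theta_i}}Q^i(s,a_i)-\tfrac{1}{\beta}D_{\mathrm{KL}}[\pi_{\theta_i}\,\|\,\mu_i]\big)$, which is the displayed formula.

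The proof is essentially bookkeeping on top of Eq.~(4) and Lemma~\ref{lemma1}; the main obstacle is not a computation but a conceptual one, namely being explicit that ``independent learning'' is a \emph{modeling assumption} in which the inter-agent coupling and non-stationarity are discarded, so that each $Q^i$ and $\mu_i$ are well-defined single-agent quantities and the single-agent results transfer verbatim. A minor technical point to check is the well-definedness of $Z(s)$ and the legitimacy of exchanging the per-state maximization with the expectation over $s\sim\mathcal{D}_{\mu}$, both of which are immediate from the state-wise separability of the objective.
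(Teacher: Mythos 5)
Your proposal is correct, and for the two claims the proposition literally makes (the Gibbs closed form of $\pi_i^*$ and the summed BRPO-Ind objective) it is essentially the intended argument: specialize Eq.~(4) and Lemma~\ref{lemma1} agent-by-agent under the modeling assumption that the other agents are folded into the environment, then sum because the $\theta_i$ are disjoint. The difference is one of emphasis rather than substance. The paper's proof of Proposition~\ref{prop1} does not actually re-derive the closed form or the objective at all --- it takes $\mathcal{L}_{Ind}$ as the starting point, expands the KL term into a cross-entropy $\mathbb{E}\log\mu_i$ plus an entropy $\mathcal{H}(\pi_i)$, and then spends nearly all of its length computing $\nabla_{\theta_i}\mathcal{L}_{Ind}$ and converting it into the practical, diffusion-based surrogate gradient via the SRPO lemma (Lemma~\ref{chen_lemma}), i.e.\ replacing $\nabla_{a_i}\log\mu_i$ with the distilled score $-\epsilon_i^*/\sigma_t|_{t\to 0}$. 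So your write-up is the more faithful proof of the statement as posed (and is more explicit about why the single-agent machinery transfers, and about the finiteness of $Z(s)$ and the state-wise separability), whereas the paper's proof is really a derivation of the algorithmic gradient that the proposition feeds into. Nothing in your argument conflicts with the paper's; you simply stop where the proposition stops. The only cosmetic point worth flagging is the $\beta_i$ versus $\beta$ mismatch between the closed form and the displayed objective, which you inherit from the paper's own notation rather than introduce.
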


Here, the KL penalty prevents the learned individual policy from diverging significantly from the individual behavior policy.  By taking the gradient of equation $\mathcal{L}_{Ind}$ with respect to each agent's policy parameters, we obtain:
\begin{align}\label{ind_grad}
   \nabla_{\theta _i} \mathcal{L}_{Ind} =   \mathbb{E}_{s \sim \mathcal{D}^\mu} \Bigg[  \nabla_{a_i} Q^i(s, a_i) \big|_{a_i=\pi_{\theta_i}}  + \frac{1}{\beta} \underbrace{\left.\nabla_{a_i} \log \mu_i(a_i \mid s)\right|_{a_i=\pi_{\theta_i}(s)}}_{=-\epsilon_i^*\left(a_t \mid s, t\right) /\left.\sigma_t\right|_{t \rightarrow 0}} \Bigg] \nabla_{\theta_i} \pi_{\theta_i}(a_i|s),
\end{align}
where $\epsilon_i^*\left(a_t \mid s, t\right)$ represents the score function of individual behavior policy $\nabla_{a_i} \mu_i(a_i|s)$~\citep{song2020score}.

\subsection{Policy-based Offline MARL with CTDE Learning.}\label{app:ctde_brpo}

 In the CTDE framework, the centralized training process typically leverages the actions of other agents, global states, and the policies of other agents to learn the optimal joint policy. It can stabilize nonstationary learning process by capture interactive relationships between agents and global information. The executable individual policies are usually distilled through value decomposition or policy decomposition. In policy-based methods, such as FOP \citep{zhang2021fop} and AlberDICE \citep{matsunaga2023alberdice}, the decomposable assumption IGO (Individual-Global-Optimal) $\boldsymbol{\pi}_{\Psi}^* := \pi^{i*}_{\psi^i} \prod_{j=-i} \pi^{j*}_{\psi^j}$ is typically used to extract individual policies from the joint optimal policy. Based on IGO principle and Lemma \ref{lemma1}, we propose the BRPO-CTDE as follows.

\begin{proposition}\label{prop2}
Consider a fully cooperative game with n agents. In centralized learning process, the optimal joint policy is derived as
\begin{equation*}
\pi^*(\boldsymbol{a} \mid s)=\frac{1}{Z(s)} \mu(\boldsymbol{a} \mid s) \exp \left(\beta Q^{tot}(s, \boldsymbol{a})\right),
\end{equation*}
where $\boldsymbol{a}$ represents the joint actions and $Q^{tot}$ represents the global state-action value function. With Lemma \ref{lemma1} and the factorization principle, the learning objective for each agent becomes
\begin{align}\label{CTDEloss}
\nonumber \mathcal{L}_{CTDE}^i  = & \min_{\theta_i} \mathbb{E}_{s \sim \mathcal{D}_{\mu}} D_{\mathrm{KL}}[\pi_{\theta^i}(\cdot \mid s) \pi_{\theta^{-i}}(\cdot \mid s) || \pi^*(\cdot \mid s) ] \\
\nonumber = & \max_{\theta_i} \mathbb{E}_{s \sim \mathcal{D}^\mu, \boldsymbol{a} \sim \pi_\theta(\cdot \mid s)} Q^{tot}(s, \boldsymbol{a})  - \frac{1}{\beta} \underbrace{D_{\mathrm{KL}}\left[\pi_{\theta^i}(\cdot \mid s) \pi_{\theta^{-i}}(\cdot \mid s) \| \mu(\boldsymbol{a} \mid s)\right]}_{\text{Joint Behavior Reg}}.
\end{align}
\end{proposition}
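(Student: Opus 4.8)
The plan is to derive Proposition~\ref{prop2} by a direct specialization of Lemma~\ref{lemma1} (BRPO) to the centralized joint policy, followed by substitution of the IGO factorization. First I would set up the centralized optimization problem exactly as in Eq.~\eqref{equ1}, but with the single-agent objects replaced by their joint counterparts: the state-action value $Q_\phi(s,a)$ becomes the global critic $Q^{tot}(s,\boldsymbol{a})$, the policy $\pi(\cdot\mid s)$ becomes the joint policy $\boldsymbol{\pi}(\cdot\mid s)$, and the behavior policy $\mu(\cdot\mid s)$ becomes the joint behavior policy $\boldsymbol{\mu}(\boldsymbol{a}\mid s)$. The closed-form minimizer of the KL-regularized return is then obtained verbatim from the single-agent derivation (the argument is pointwise in $s$ and does not care about the internal structure of the action space), yielding $\pi^*(\boldsymbol{a}\mid s) = \tfrac{1}{Z(s)}\mu(\boldsymbol{a}\mid s)\exp(\beta Q^{tot}(s,\boldsymbol{a}))$, which is the first displayed equation of the proposition.

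Next I would invoke Lemma~\ref{lemma1} in its joint form: distilling this optimal $\pi^*$ into a parameterized joint policy via reverse KL is equivalent to the behavior-regularized objective $\max \mathbb{E}_{s\sim\mathcal{D}_\mu,\boldsymbol{a}\sim\boldsymbol{\pi}_\theta} Q^{tot}(s,\boldsymbol{a}) - \tfrac{1}{\beta} D_{\mathrm{KL}}[\boldsymbol{\pi}_\theta(\cdot\mid s)\,\|\,\mu(\boldsymbol{a}\mid s)]$. To turn this into a per-agent objective I would then impose the IGO assumption, which decomposes the parameterized joint policy as a product $\boldsymbol{\pi}_\theta(\cdot\mid s) = \pi_{\theta^i}(\cdot\mid s)\prod_{j\neq i}\pi_{\theta^j}(\cdot\mid s)$. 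Substituting this product form into both the expectation and the reverse-KL regularizer, and holding the parameters $\theta^{-i}$ of the other agents fixed while optimizing $\theta^i$, produces the stated objective $\mathcal{L}_{CTDE}^i$ with the "Joint Behavior Reg" term $D_{\mathrm{KL}}[\pi_{\theta^i}\pi_{\theta^{-i}}\|\mu(\boldsymbol{a}\mid s)]$. The equivalence between the reverse-KL minimization against $\pi^*$ and the behavior-regularized form is exactly the content of Lemma~\ref{lemma1}, reused with the joint-policy substitution, so no new computation is needed there.

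The one genuinely delicate point — and the place I expect a reader to push back — is the legitimacy of the IGO substitution when the \emph{true} joint behavior policy $\mu(\boldsymbol{a}\mid s)$ is not itself factorizable (which is precisely the pathology Proposition~\ref{theorem1} in Section~\ref{sec:3.1} establishes). The derivation is still formally valid: IGO is imposed only on the \emph{learned} policy $\boldsymbol{\pi}_\theta$, not on $\mu$, so the objective $\mathcal{L}_{CTDE}^i$ is well-defined regardless. What the derivation does \emph{not} claim is that optimizing this objective recovers a sensible policy; indeed, since $\mu$ may be infactorizable, the reverse-KL term can drive $\pi_{\theta^i}\pi_{\theta^{-i}}$ toward spurious high-density-but-incoherent joint actions, which is exactly the failure mode the surrounding text is building toward. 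So the proof itself is short and mechanical — essentially "apply Lemma~\ref{lemma1} with the joint-policy substitution, then plug in IGO" — but I would take care to phrase the statement so that it presents $\mathcal{L}_{CTDE}^i$ as the objective that IGO-based methods \emph{use}, not as an objective with any optimality guarantee, thereby setting up the negative result that motivates OMSD.
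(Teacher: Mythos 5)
Your proposal is correct and follows essentially the same route as the paper's own proof: specialize the KL-regularized BRPO objective and its closed-form solution to the joint action space, invoke Lemma~\ref{lemma1} for the reverse-KL equivalence, and then substitute the IGO product factorization of the \emph{learned} joint policy while keeping $\mu(\boldsymbol{a}\mid s)$ joint in the regularizer. Your closing caveat is also consistent with the paper, which in its proof goes on to note that the practical BRPO-IGO baseline additionally (and admittedly improperly) factorizes $\mu$ itself, precisely because the joint score $\nabla_{a_i}\log\mu(\boldsymbol{a}\mid s)$ cannot be distilled agent-wise --- the failure mode you anticipate.
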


Compared to BRPO-IND, BRPO-CTDE minimizes the KL divergence between the learned joint policy $\Pi_i^n \pi_i(a_i|s)$ and the joint behavior policy distribution $\mu(\boldsymbol{a}|s)$ on each agent's policy update. Then we can derive the gradient of equation $\mathcal{L}_{CTDE}^i$ with respect to each agent's policy parameters as:
\begin{equation}\label{grad_ctde}
\begin{aligned}
& \nabla_{\theta_i}  \mathcal{L}_{CTDE}^i =  \mathbb{E}_{s \sim \mathcal{D}^\mu, a^{-i} \sim \pi_{\theta_{-i}}} \left[ 
\nabla_{a_i} Q^{tot}(s, \boldsymbol{a}) \big|_{a=\pi_{\theta}(\cdot|\boldsymbol{s})} \right. \left. + \frac{1}{\beta} \nabla_{a_i} \log \mu(\boldsymbol{a} \mid s) \big|_{a_i=\pi_{\theta_i}(s)} \right] \nabla_{\theta_i} \pi_{\theta_i}(a_i|s).
\end{aligned}
\end{equation}

Equations (\ref{ind_grad}) and (\ref{grad_ctde}) reveal that the gradients in offline policy-based MARL consist of Q-value gradients and behavior policy regularization terms. However, this structure poses significant challenges for joint policy updates. 

First, an obvious problem arises in the coordination of Q-value gradients. In offline MARL, the absence of online data collection severely limits the ability to adjust policies by exploring new experiences. This issue further exacerbates the misalignment coordination of individual Q-value gradients in MARL and may lead to suboptimal gradient directions \citep{kuba2022trust, pan2022plan}. 

Admittedly, the CTDE frameworks can slightly alleviate the Q-value gradients coordination problem by directly providing local gradients of the joint Q-function to each agent. However, the individual regularization terms are also challenging due to the multimodal property of the joint behavior policy $\mu(\boldsymbol{a}|s)$. With IGO assumption, the individual behavior regularization term in CTDE  becomes a biased score function as 
\begin{align*}
    \nabla_{a_i} \log \mu(a \mid s) &= \nabla_{a_i} \pi(a|s) \nabla_{a} \log \mu(\boldsymbol{a} \mid s) \\
    & \neq \nabla_{a_i} \log \mu(a^i \mid s),
\end{align*}
where $\nabla_{\pi} \log \mu(a|s)$ represents the score function of the joint behavior policy captured by high-capacity generative models, and $\nabla_{a_i}\pi$ is the partial gradient of the joint policy with respect to agent $i$. The primary difficulty lies in accurately calculating $\nabla_{a_i}\pi$ from the multimodal joint behavior policy, as the offline joint policy may not be easily factorizable into individual agent policies.

These challenges faced by BRPO-IND and BRPO-CTDE are fundamentally rooted in the multimodality problem described in Section \ref{sec:3.1} and can be generalized to other policy-based offline RL algorithms. Multimodal joint behavior policies cause complex dependencies among agents, while the infactorization property prevents accurate factorization of these joint policies. Directly applying assumptions in online MARL, such as the factorization assumption, will induce biased policy regularization on individual policy update, ultimately causing the joint policy distribution to deviate from the support set of the dataset.

\section{Theorem Details}\label{app:theorem}

\subsection{Proof of Proposition \ref{theorem1}}\label{app_proper3}

We consider a fully-cooperative n-player game with a single state and action space $A = [0, 1]^n$. Let $\pi^*$ be the optimal joint policy with two optimal modes: $a_1 = (1, \ldots, 1)$ and $a_2 = (0, \ldots, 0)$. Let $\hat{\pi}$ be a factorized approximation of $\pi^*$ such that $\hat{\pi}(a) = \prod_{i=1}^n \hat{\pi}_i(a_i)$, where each $\hat{\pi}_i$ is learned independently.

Given that $\pi^*$ has two optimal modes $(1, \ldots, 1)$ and $(0, \ldots, 0)$, and each $\hat{\pi}_i$ is learned independently, the best approximation for each individual policy is to assign equal probability to 0 and 1. Thus, each $\hat{\pi}_i$ converges to $\text{Uniform}(\{0, 1\})$, with $\hat{\pi}_i(0) = \hat{\pi}_i(1) = 0.5$ for all $i$.

Since each $\hat{\pi}_i$ is $\text{Uniform}(\{0, 1\})$, the joint policy $\hat{\pi}$ will have a mode for each possible combination of 0s and 1s across the $n$ players. There are $2^n$ such combinations. The probability of each mode is $\hat{\pi}(a) = \prod_{i=1}^n \hat{\pi}_i(a_i) = (0.5)^n = 2^{-n}$. Therefore, the reconstruction of joint policy $\hat{\pi}$ exhibits $2^n$ modes, each with probability $2^{-n}$.

To prove that the total variation distance between $\pi^*$ and $\hat{\pi}$ is $\delta_{TV}(\pi^*, \hat{\pi}) = 1 - 2^{1-n}$, we start with the definition of total variation distance:

\[\delta_{TV}(\pi^*, \hat{\pi}) = \frac{1}{2} \sum_a |\pi^*(a) - \hat{\pi}(a)|\]

For $\pi^*$, we have $\pi^*(a_1) = \pi^*((1, \ldots, 1)) = 0.5$, $\pi^*(a_2) = \pi^*((0, \ldots, 0)) = 0.5$, and $\pi^*(a) = 0$ for all other $a$. For $\hat{\pi}$, we have $\hat{\pi}(a) = 2^{-n}$ for all $2^n$ modes.

Calculating the sum of absolute differences:
\[|\pi^*(a_1) - \hat{\pi}(a_1)| + |\pi^*(a_2) - \hat{\pi}(a_2)| = |0.5 - 2^{-n}| + |0.5 - 2^{-n}| = 1 - 2^{1-n}\]

For the remaining $2^n - 2$ modes of $\hat{\pi}$:
\[\sum |0 - 2^{-n}| = (2^n - 2) \cdot 2^{-n} = 1 - 2^{1-n}\]

Therefore,
\[\delta_{TV}(\pi^*, \hat{\pi}) = \frac{1}{2} \cdot (1 - 2^{1-n} + 1 - 2^{1-n}) = 1 - 2^{1-n}\]

As $n \to \infty$, we have:
\[\lim_{n \to \infty} \delta_{TV}(\pi^*, \hat{\pi}) = \lim_{n \to \infty} (1 - 2^{1-n}) = 1 - \lim_{n \to \infty} 2^{1-n} = 1 - 0 = 1\]

This limit indicates a severe distribution shift between the true optimal policy $\pi^*$ and its factorized approximation $\hat{\pi}$ as the number of players increases.

\subsection{Structural Objective Mismatch under Marginal Regularization}
\label{app:objective_mismatch}
We formalize the intuition in Sec.~\ref{sec:3.1} as a structural objective mismatch. Marginal behavior regularization constrains the learned joint policy toward the product of individual marginals,
$\prod_{i=1}^n \mu_i^{\mathrm{ind}}(a_i|s)$, rather than the true joint behavior distribution $\mu(\boldsymbol{a}|s)$. For a fixed policy $\pi$, this mismatch can be written as
\begin{align}
\Delta_{\mathrm{CMS}}(\pi)
=
D_{\mathrm{KL}}\!\left(\pi \middle\| \prod_{i=1}^n \mu_i^{\mathrm{ind}}\right)
-
D_{\mathrm{KL}}(\pi \| \mu) =
\sum_{i=1}^n
\mathbb{E}_{\boldsymbol{a}\sim\pi}
\left[
\log
\frac{\mu_i(a_i|s,\boldsymbol{a}_{<i})}
{\mu_i^{\mathrm{ind}}(a_i|s)}
\right],
\end{align}
where $\mu_i(a_i|s,\boldsymbol{a}_{<i})$ denotes the conditional behavior distribution induced by the chain rule, and $\mu_i^{\mathrm{ind}}(a_i|s)$ denotes the marginal behavior model used by independent regularization. This quantity measures the gap between the intended joint behavior regularizer and its product-of-marginals surrogate. In the balanced two-mode construction of Proposition~\ref{theorem1}, this gap equals $(n-1)\log 2$ when evaluated on the coordinated two-mode target distribution, illustrating that the mismatch can grow with the number of agents.
Sequential decomposition removes this structural factorization error at the behavior-policy level. By the chain rule,
\begin{equation}
\mu(\boldsymbol{a}|s)=\prod_{i=1}^n \mu_i(a_i|s,\boldsymbol{a}_{<i}).
\end{equation}
For the factorized execution policy $\pi(\boldsymbol{a}|s)=\prod_i \pi_i(a_i|s)$, the corresponding joint KL can be decomposed as
\begin{equation}
D_{\mathrm{KL}}(\pi\|\mu)
=
\sum_{i=1}^n
\mathbb{E}_{\boldsymbol{a}_{<i}\sim\pi}
\left[
D_{\mathrm{KL}}\!\left(
\pi_i(\cdot|s)
\,\middle\|\,
\mu_i(\cdot|s,\boldsymbol{a}_{<i})
\right)
\right].
\end{equation}
Thus, the conditional reference used by OMSD targets the true chain-rule factors of the joint behavior policy rather than independent marginals. This identity shows that sequential decomposition introduces no structural factorization error at the behavior-policy level. Practical errors can still arise from finite data and learned conditional score-model approximation.
In the idealized case where each conditional behavior estimator accurately resolves the modes, prefix conditioning restricts later agents toward modes compatible with earlier sampled actions. This reduces cross-mode action combinations that arise under independent marginal regularization. In practice, this property depends on the quality of the learned conditional score models.

\subsection{Proof of Proposition \ref{prop1}}

 First, we derive the optimization objectives with independent learning framework. By decomposing the KL term in (\ref{ind_loss}), we have
    \begin{equation*}
    \mathcal{L}_{Ind} = \sum_{i=1}^n \left( \mathbb{E}_{s \sim \mathcal{D}_{\mu}, a_i \sim \pi_{\theta_i}}  Q^i(s, a_i)  + \frac{1}{\beta} \mathbb{E}_{s \sim \mathcal{D}^\mu, a_i \sim \pi_{\theta_i}}  \log \mu_i(a_i|s)  + \frac{1}{\beta} \mathbb{E}_{s \sim \mathcal{D}^\mu}  \mathcal{H}(\pi_i(a_i|s))  \right)
\end{equation*}
where $\mathcal{H}(\pi_i(a_i|s))$ is the entropy of the agent $i$'s policy. As BRPO-IND learns behavior policy independently, we can directly get the term $\log \mu_i(a_i|s)$ implicitly from the pretrained diffusion models of each agent.

Consider that each agent's policy is trained independently without dependency, we can derive  the gradient of agent $i$ as 
\begin{align}
   \nonumber \nabla_{\theta_i} \mathcal{L}_{Ind} & =  \nabla_{\theta_i} \sum_{i=1}^n \left( \mathbb{E}_{s \sim \mathcal{D}_{\mu}, a_i \sim \pi_{\theta_i}}  Q^i(s, a_i)  + \frac{1}{\beta} \mathbb{E}_{s \sim \mathcal{D}^\mu, a_i \sim \pi_{\theta_i}}  \log \mu_i(a_i|s)  + \frac{1}{\beta} \mathbb{E}_{s \sim \mathcal{D}^\mu}  \mathcal{H}(\pi_i(a_i|s))  \right) \\
  \nonumber  & = \mathbb{E}_{s \sim \mathcal{D}_{\mu}, a_i \sim \pi_{\theta_i}} \left[  \nabla_{\theta_i} Q^i(s, a_i) + \frac{1}{\beta} \nabla_{\theta_i} \log\mu_i(a_i|s)     \right] \\
 \nonumber   & = \mathbb{E}_{s \sim \mathcal{D}^\mu, a_i \sim \pi_{\theta_i}} \left[  \nabla_{\theta_i} \pi_i * \nabla_{a_i} Q^i(s, a_i) + \frac{1}{\beta} \nabla_{\theta_i} \pi_i * \nabla_{a_i} \log\mu_i(a_i|s)     \right] \\
  \nonumber  & = \mathbb{E}_{s \sim \mathcal{D}^\mu, a_i \sim \pi_{\theta_i}} \left[ \nabla_{a_i} Q^i(s, a_i) + \frac{1}{\beta}  \nabla_{a_i} \log\mu_i(a_i|s)     \right] \nabla_{\theta_i} \pi_i  .
\end{align}

Notice that the term $ \nabla_{a_i} \log\mu_i(a_i|s)$ serves as the score function of the independent behavior policy, we can further construct a surrogate loss $\mathcal{L}_{Ind}^{surr}$ and derive a practical gradient for BRPO-IND.  Our proof is mainly inspired by the following Lemma \ref{chen_lemma}. 
\begin{lemma}[Proposition 1 in \cite{chen2024score}]\label{chen_lemma}
Given that $\pi$ is sufficiently expressive, for any time $t$, any state $s$, we have
\begin{equation*}
\arg\min_{\pi} D_{\text{KL}}  [\pi_t(\cdot|s)||\mu_t(\cdot|s)] = \arg\min_{\pi} D_{\text{KL}}  [\pi(\cdot|s)||\mu(\cdot|s)],
\end{equation*}
where both $\mu_t$ and $\pi_t$ follow the same predefined diffusion process in $q_{t_0}(x_t|x_0) = \mathcal{N}(x_t|\alpha_t x_0, \sigma_t^2 I)$, which implies $ x_t = \alpha_t x_0 + \sigma_t \varepsilon$.
\end{lemma}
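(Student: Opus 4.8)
The plan is to prove that the two optimization problems share a single common minimizer, namely $\pi = \mu$, so that their $\arg\min$ sets coincide. The argument rests on two facts: first, the reverse KL divergence is nonnegative and vanishes exactly when its two arguments agree (Gibbs' inequality); second, the forward noising map that sends a clean distribution to its time-$t$ marginal is injective. Granting the stated sufficient expressiveness of $\pi$, the clean problem $\arg\min_\pi D_{\text{KL}}[\pi(\cdot|s)\,\|\,\mu(\cdot|s)]$ is solved uniquely by $\pi = \mu$ with optimal value $0$. I would then show the diffused problem is also minimized uniquely at $\pi=\mu$, which together with the clean case yields the claimed equality.

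First I would make the forward map explicit. Writing $\Phi_t$ for the operator $\pi \mapsto \pi_t$ induced by $x_t = \alpha_t x_0 + \sigma_t \varepsilon$ with $\varepsilon \sim \mathcal{N}(\mathbf{0}, \boldsymbol{I})$, the marginal is $\pi_t(\cdot|s) = \int q_{t0}(\cdot|x_0)\,\pi(x_0|s)\,dx_0$, i.e. a rescaling by $\alpha_t$ followed by convolution with the Gaussian kernel $\mathcal{N}(\mathbf{0}, \sigma_t^2 \boldsymbol{I})$. Since $\pi = \mu$ trivially implies $\Phi_t(\pi) = \Phi_t(\mu)$, i.e. $\pi_t = \mu_t$, the value $0$ is attained at $\pi=\mu$ in the diffused problem as well, so $\pi=\mu$ is a global minimizer there. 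To rule out other minimizers I would establish that $\Phi_t$ is injective: passing to characteristic functions, convolution with a Gaussian multiplies by $\exp(-\tfrac{1}{2}\sigma_t^2\|\omega\|^2)$, a factor that never vanishes, so $\pi_t = \mu_t$ forces equality of characteristic functions and hence $\pi = \mu$ by Fourier uniqueness. Thus $D_{\text{KL}}[\pi_t\,\|\,\mu_t]=0$ if and only if $\pi=\mu$, giving $\arg\min_\pi D_{\text{KL}}[\pi_t\,\|\,\mu_t] = \{\mu\} = \arg\min_\pi D_{\text{KL}}[\pi\,\|\,\mu]$.

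As an auxiliary check that also clarifies why the diffused objective is a faithful surrogate, I would note that the data-processing inequality gives $D_{\text{KL}}[\pi_t\,\|\,\mu_t] \le D_{\text{KL}}[\pi\,\|\,\mu]$ for every $t$, since $x_t$ arises from a fixed Markov kernel applied to $x_0$; this shows the diffused divergence is dominated by the clean one and is driven to its minimum precisely as $\pi \to \mu$. The main obstacle is the injectivity step: it must hold for arbitrary distributions, not only those with smooth densities, so I would carry it out at the level of characteristic functions (equivalently, via uniqueness of Gaussian deconvolution) rather than by manipulating densities directly. The nonvanishing of the Gaussian spectral factor for every $t>0$ is exactly what guarantees that noising never collapses distinct policies, preserving the optimizer across all diffusion times.
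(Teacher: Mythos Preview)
The paper does not prove this lemma; it is quoted verbatim as Proposition~1 of \cite{chen2024score} (SRPO) and invoked as a black box to justify the surrogate loss $\mathcal{L}^{\text{surr}}$. There is therefore no in-paper argument to compare against.

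Your argument is correct for the statement as literally written: under the ``sufficiently expressive'' hypothesis both KL objectives attain their global minimum value $0$ uniquely at $\pi=\mu$, and your injectivity step (Gaussian convolution has nowhere-vanishing Fourier transform, hence is invertible on distributions) is exactly what is needed to rule out spurious zeros of the diffused objective. The data-processing inequality remark is a pleasant sanity check but not load-bearing.

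One caveat you may want to keep in mind: the way the lemma is actually \emph{used}---both in SRPO and here---is with $\pi_\theta$ constrained to a Dirac (deterministic) family, where $\pi=\mu$ is not attainable and neither KL ever reaches zero. In that regime your ``both problems are minimized at $\mu$'' argument does not directly apply; one needs the stronger claim that the \emph{constrained} minimizers coincide, which requires a different argument (roughly, that the map $\pi\mapsto\pi_t$ is a bijection preserving the ordering induced by the objective over the restricted class). The expressiveness hypothesis in the lemma statement sidesteps this, so your proof matches what is claimed, but the practically relevant content of the result lives in the restricted-class setting that your proof does not cover.
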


The surrogate loss is 
\begin{equation}\label{ind_surr}
    L_{Ind}^\text{surr} (\theta_i) = \mathbb{E}_{s,a_i \sim \pi_{\theta_i}} Q(s, a_i) - \frac{1}{\beta} \mathbb{E}_{t,s} \omega(t) \frac{\sigma_t}{ \alpha_t} D_{\text{KL}} [\pi_{\theta_i, t}(\cdot|s) \| \mu_{i,t}(\cdot|s)].
\end{equation}
Then we can propose the practical gradient as follows. 
\begin{proposition}[Practical Gradient of BRPO-IND] Given that $\pi_{\theta_i}$ is deterministic policy and $\epsilon_i^*$ is the optimal diffusion model of independent behavior policy $\mu_i$, the gradient of the surrogate loss (\ref{ind_surr}) w.r.t agent $i$ is
\begin{equation*}
    \nabla_{\theta_i} L_{\text{surr}}^\pi (\theta) = \left[ \mathbb{E}_s \nabla_a Q_\phi(s, a)|_{a=\pi_\theta(s)} - \frac{1}{\beta} \mathbb{E}_{t,s} \omega(t) (\epsilon_i^*(a_{t,i}|s, t) - \epsilon_i)|_{a_{i,t}=\alpha_t\pi_{\theta_i}(s)+\sigma_t\epsilon_i} \right] \nabla_{\theta_i} \pi_{\theta_i} (s).
\end{equation*}
    
\end{proposition}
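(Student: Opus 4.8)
The plan is to differentiate the surrogate objective $(\ref{ind_surr})$ term by term, using the deterministic structure of $\pi_{\theta_i}$ together with the score–model identity for the noised behavior policy. The value term $\mathbb{E}_{s,a_i \sim \pi_{\theta_i}} Q(s, a_i)$ is immediate: determinism collapses the inner expectation to $\mathbb{E}_s Q(s, \pi_{\theta_i}(s))$, so the chain rule gives $\mathbb{E}_s \nabla_a Q_\phi(s,a)\big|_{a=\pi_\theta(s)}\, \nabla_{\theta_i}\pi_{\theta_i}(s)$ at once.

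The heart of the argument is the diffused KL term $D_{\mathrm{KL}}[\pi_{\theta_i,t}(\cdot\mid s)\,\|\,\mu_{i,t}(\cdot\mid s)]$. Since $\pi_{\theta_i}(\cdot\mid s)$ is a point mass at $\pi_{\theta_i}(s)$, the forward process $q_{t\mid 0}(x_t\mid x_0)=\mathcal{N}(x_t\mid \alpha_t x_0,\sigma_t^2 I)$ makes $\pi_{\theta_i,t}(\cdot\mid s)=\mathcal{N}(\alpha_t\pi_{\theta_i}(s),\sigma_t^2 I)$, a Gaussian whose covariance is fixed and whose mean carries all of the $\theta_i$-dependence. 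Splitting $D_{\mathrm{KL}}[\pi_{\theta_i,t}\,\|\,\mu_{i,t}]=-\mathcal{H}(\pi_{\theta_i,t})-\mathbb{E}_{a_t\sim\pi_{\theta_i,t}}[\log\mu_{i,t}(a_t\mid s)]$, the entropy $\tfrac{d}{2}\log(2\pi e\,\sigma_t^2)$ is $\theta_i$-independent, so only the cross term survives. Reparameterizing $a_{i,t}=\alpha_t\pi_{\theta_i}(s)+\sigma_t\epsilon_i$ with $\epsilon_i\sim\mathcal{N}(0,I)$, passing $\nabla_{\theta_i}$ under the expectation, and applying the chain rule through $a_{i,t}$ gives $\nabla_{\theta_i}D_{\mathrm{KL}}[\pi_{\theta_i,t}\,\|\,\mu_{i,t}]=-\alpha_t\,\mathbb{E}_{\epsilon_i}\!\big[\nabla_{a_{i,t}}\log\mu_{i,t}(a_{i,t}\mid s)\big]\,\nabla_{\theta_i}\pi_{\theta_i}(s)$. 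Invoking the denoising–score–matching identity $\epsilon_i^*(a_t\mid s,t)=-\sigma_t\,\nabla_{a_t}\log\mu_{i,t}(a_t\mid s)$ for the Bayes-optimal noise predictor (the same annotation already used under $(\ref{ind_grad})$) rewrites this as $\tfrac{\alpha_t}{\sigma_t}\,\mathbb{E}_{\epsilon_i}[\epsilon_i^*(a_{i,t}\mid s,t)]\,\nabla_{\theta_i}\pi_{\theta_i}(s)$.

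It then remains to assemble the pieces. Multiplying by the prefactor $-\tfrac{1}{\beta}\omega(t)\tfrac{\sigma_t}{\alpha_t}$ that appears in $(\ref{ind_surr})$ cancels the $\tfrac{\alpha_t}{\sigma_t}$ factor; inserting the control variate $-\epsilon_i$ is legitimate because $\mathbb{E}_{\epsilon_i}[\epsilon_i\,\nabla_{\theta_i}\pi_{\theta_i}(s)]=0$ (as $\epsilon_i$ is independent of $s$ and $\theta_i$); and adding back the value-term gradient yields the claimed formula. Note that Lemma $\ref{chen_lemma}$ is what licenses treating $(\ref{ind_surr})$ as a faithful surrogate for the reverse-KL objective of Proposition $\ref{prop1}$, but the gradient computation itself uses only the Gaussian-KL split and the score identity.

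I expect the main obstacle to be the score–model identity combined with the interchange of differentiation and integration. One must be explicit that $\epsilon_i^*$ denotes the $L^2$-optimal predictor, for which $\epsilon_i^*=-\sigma_t\nabla_{a_t}\log\mu_{i,t}$ holds exactly (this is precisely why the statement is phrased in terms of $\epsilon_i^*$ rather than a learned $\epsilon_\theta$), and one must verify the usual regularity conditions — finite second moments, and smoothness of $\mu_{i,t}$ for $t$ bounded away from $0$ — that allow $\nabla_{\theta_i}$ to pass under $\mathbb{E}_{t,s,\epsilon_i}$. A secondary point worth flagging is that the displayed equality is a population-level identity: the $-\epsilon_i$ term does not alter the expected gradient and is included only because the practical estimator draws a single $(t,s,\epsilon_i)$, so the subtraction acts as a variance-reducing baseline.
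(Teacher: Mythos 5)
Your proposal is correct and follows essentially the same route as the paper's proof: identify $\pi_{\theta_i,t}(\cdot\mid s)$ as the Gaussian $\mathcal{N}(\alpha_t\pi_{\theta_i}(s),\sigma_t^2 I)$ induced by the deterministic policy, reparameterize $a_{i,t}=\alpha_t\pi_{\theta_i}(s)+\sigma_t\epsilon_i$, apply the chain rule together with the identity $\epsilon_i^*(a_{i,t}\mid s,t)=-\sigma_t\nabla_{a_{i,t}}\log\mu_{i,t}(a_{i,t}\mid s)$, and let the $\sigma_t/\alpha_t$ weight cancel the $\alpha_t/\sigma_t$ arising from the score. The only cosmetic difference is that the paper obtains the $-\epsilon_i$ term as the path derivative of $\log\pi_{i,t}$ (which is zero in expectation), whereas you discard the constant Gaussian entropy and re-insert $-\epsilon_i$ as a zero-mean control variate; the two viewpoints are equivalent, and your version makes the variance-reduction role of that term more explicit.
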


\begin{proof}
The fundamental framework of the proof follows the proof process of SRPO \citep{chen2024score}, extending it to the multi-agent scenario. Based on the forward diffusion process in section \ref{preliminary_diffusion}, we can represent the noisy distribution of actor policy at step $t$ as 
\begin{align}
\pi_{\theta_i,t}(a_{t,i}|s) & = \int \mathcal{N}(a_{i,t}|\alpha_t a_i, \sigma_t^2 I) \pi_{\theta_i}(a_i|s)  da_i \\ & = \int \mathcal{N}(a_{t,i}|\alpha_t a_i, \sigma_t^2 I) \delta(a_i - \pi_{\theta_i}(s)) da_i \\
& = \mathcal{N}(a_{t,i}|\alpha_t \pi_{\theta_i}(s), \sigma_t^2 I)
\end{align}

Note that $\pi_{\theta,t}(\cdot|s)$ is a Gaussian policy with expected value $\alpha_t \pi_\theta(s)$ and variance $\sigma_t^2 I$, we can simplify the surrogate training objective as 

\begin{align*}
L_{Ind}^{surr}(\theta_i) &= \mathbb{E}_{s,a_i\sim\pi_{\theta_i}(\cdot|s)}Q(s, a_i) - \frac{1}{\beta} \mathbb{E}_{t,s} \omega(t) \frac{\sigma_t}{ \alpha_t} D_{\text{KL}} [\pi_{\theta_i, t}(\cdot|s) \| \mu_{i,t}(\cdot|s)] \\
&= \mathbb{E}_s Q(s, a_i)|_{a_i=\pi_{\theta_i}(s)} + \frac{1}{\beta}\mathbb{E}_{t,s}\omega(t)\frac{\sigma_t}{\alpha_t}\mathbb{E}_{a_{i,t}\sim\mathcal{N}(\cdot|\alpha_t\pi_{\theta_i}(s),\sigma_t^2I)}[\log \mu_t(a_{i,t}|s) - \log \pi_{t,\theta_i}(a_{i, t}|s)]
\end{align*}

Then we can derive the gradient of this objective as follows
\begin{equation}
\begin{aligned}
\nabla_{\theta_i} \mathcal{L}^{surr}_{Ind}(\theta_i) & = \nabla_{\theta_i} \mathbb{E}_{\boldsymbol{s} \sim \mathcal{D}^\mu} Q_\phi(\boldsymbol{s}, a_i)|_{a_i \sim \pi_\theta^i(\boldsymbol{s})} 
+ \frac{1}{\beta} \mathbb{E}_{t,s} \frac{\sigma_t}{\alpha_t} \omega(t) 
\nabla_{\theta_i}\mathbb{E}_{\epsilon_i}\left[\log \mu^i_t(a^i_t|s) - \log \pi^i_t(a^i_t|s) \right] \\
& (\text{reparameterization of } \pi_i = \alpha_t\pi_{\theta_i}(s)+\sigma_t\epsilon_i)\\
& = \nabla_{\theta_i} \mathbb{E}_{\boldsymbol{s} \sim \mathcal{D}^\mu} Q_\phi(\boldsymbol{s}, a_i)|_{a_i \sim \pi_\theta^i(\boldsymbol{s})} + \frac{1}{\beta} \mathbb{E}_{t,s, \epsilon_i} \frac{\sigma_t}{\alpha_t} \omega(t) 
\left[\nabla_{\theta_i}\log \mu^i_t(a^i_t|s) - \nabla_{\theta_i} \log \pi^i_t(a^i_t|s) \right] \quad (\text{chain rule})\\
& = \nabla_{\theta_i} \mathbb{E}_{\boldsymbol{s} \sim \mathcal{D}^\mu} Q_\phi(\boldsymbol{s}, a_i)|_{a_i \sim \pi_\theta^i(\boldsymbol{s})} + \frac{1}{\beta} \mathbb{E}_{t,s, \epsilon_i} \frac{\sigma_t}{\alpha_t} \omega(t) 
\big[\nabla_{a_i^t}\log \mu^i_t(a^i_t|s) \nabla_{\theta_i}a_i^t |_{a_i^t = \alpha_t\pi_{\theta_i}(s)+\sigma_t\epsilon_i} \\
 & - \nabla_{a_i^t} \log \pi^i_t(a^i_t|s) \nabla_{\theta_i}a_i^t |_{a_i^t = \alpha_t\pi_{\theta_i}(s)+\sigma_t\epsilon_i}\big] \\
& = \mathbb{E}_{\boldsymbol{s} \sim \mathcal{D}^\mu} \nabla_{a_i}Q_\phi(\boldsymbol{s}, \boldsymbol{a}_i, \boldsymbol{a}_{-i})|_{\boldsymbol{a}_i \sim \pi_\theta^i(\boldsymbol{s}), \boldsymbol{a}_{-i} \sim \pi_\theta^{-i}(\boldsymbol{s})} \nabla_{\theta_i} \pi_i \\
& + \frac{1}{\beta} \mathbb{E}_{t,s, \epsilon_i} \frac{\sigma_t}{\alpha_t} \omega(t) 
\left[-\frac{\epsilon_i(a_i|s,t)}{\sigma_t} \alpha_t \nabla_{\theta_i}\pi_{\theta_i}(s) + \frac{\epsilon}{\sigma_t} \alpha_t \nabla_{\theta_i}\pi_{\theta_i}(s)\right] \\
& = \left[ \underbrace{\mathbb{E}_{\boldsymbol{s}} \nabla_{a_i}Q_\phi(\boldsymbol{s}, \boldsymbol{a}_i, \boldsymbol{a}_{-i})|_{\boldsymbol{a}_i \sim \pi_\theta^i(\boldsymbol{s}), \boldsymbol{a}_{-i} \sim \pi_\theta^{-i}(\boldsymbol{s})} }_{\text{Q gradient}} \right. \\
& \left. - \frac{1}{\beta} \mathbb{E}_{t,s, \epsilon_i} \omega(t) 
\left(\underbrace{\epsilon_i(a_i^t|s,t)}_{\text{score } \mu_i^t} - \underbrace{\epsilon}_{\text{score } \pi_i^t}\right) |_{a_i^t = \alpha_t\pi_{\theta_i}(s)+\sigma_t\epsilon_i} \right] \nabla_{\theta_i} \pi_i(s)
\end{aligned}
\end{equation}

\end{proof}

\subsection{Proof of Proposition \ref{prop2}}

 First, we derive the optimization objectives with centralized learning framework. By decomposing the KL term, we have
    \begin{equation*}
        \mathcal{L}_{CTDE}^i = \mathbb{E}_{s \sim \mathcal{D}^\mu, \boldsymbol{a} \sim \pi_\theta(\cdot \mid s)} Q^{tot}(s, \boldsymbol{a}) + \frac{1}{\beta}\mathbb{E}_{s \sim \mathcal{D}^\mu, \boldsymbol{a} \sim \pi_\theta(\cdot \mid s)} \log\mu(\boldsymbol{a}|s) +  \frac{1}{\beta} \mathbb{E}_{s \sim \mathcal{D}^\mu} \mathcal{H}(\pi(\boldsymbol{a}|s)),
    \end{equation*}
where $\mathcal{H}(\pi(\boldsymbol{a}|s))$ is the entropy of the joint policy. Then we need to distill the decentralized executive policy for each agent. Consider that each agent policy $\pi_{\theta_i}$ is an isotropic Gaussian policy, we can decompose the joint policy by $\pi = \pi_{\theta_i} \pi_{\theta_{-i}}$. The gradient of agent $i$ is as follows 
\begin{align}
    \nabla_{\theta_i} \mathcal{L}_{CTDE}^i & =  \nabla_{\theta_i} \mathbb{E}_{s \sim \mathcal{D}^\mu, \boldsymbol{a}_{-i} \sim \pi_{\theta_{-i}}(\cdot \mid s)} \left[  Q^{tot}(s, \boldsymbol{a}) + \frac{1}{\beta}  \log\mu(\boldsymbol{a}|s)  \right] \\
    & = \mathbb{E}_{s \sim \mathcal{D}^\mu, \boldsymbol{a}_{-i} \sim \pi_{\theta_{-i}}(\cdot \mid s)} \left[  \nabla_{\theta_i} Q^{tot}(s, \boldsymbol{a}) + \frac{1}{\beta} \nabla_{\theta_i} \log\mu(\boldsymbol{a}|s)     \right] \\
    & = \mathbb{E}_{s \sim \mathcal{D}^\mu, \boldsymbol{a}_{-i} \sim \pi_{\theta_{-i}}(\cdot \mid s)} \left[  \nabla_{\theta_i} \pi_i * \nabla_{a_i} Q^{tot}(s, \boldsymbol{a}) + \frac{1}{\beta} \nabla_{\theta_i} \pi_i * \nabla_{a_i} \log\mu(\boldsymbol{a}|s)     \right] \\
    & = \mathbb{E}_{s \sim \mathcal{D}^\mu, \boldsymbol{a}_{-i} \sim \pi_{\theta_{-i}}(\cdot \mid s)} \left[ \nabla_{a_i} Q^{tot}(s, \boldsymbol{a}) + \frac{1}{\beta}  \nabla_{a_i} \log\mu(\boldsymbol{a}|s)     \right] \nabla_{\theta_i} \pi_i  .
\end{align}

Importantly, different from the cases in BRPO-IND, we cannot distill a score function $\nabla_{a_i} \log \mu(\boldsymbol{a}|s)$ from the pretrained diffusion models of joint behavior policies. To illustrate the influence of inappropriate factorizations, we slightly abuse the factorization assumptions to decompose the joint behavior policy as $\mu(\boldsymbol{a}|s)=\prod_{i=1}^n \mu_i(a_i|s)$ and propose a revised baseline called BRPO-CTDE. This variant shares most of the framework with BRPO-CTDE, but differs in the policy regularization component: instead of using the joint behavior policy, BRPO-CTDE employs individual behavior policies for regularization.

\section{Details about Practical Algorithm}\label{app:practical}

\subsection{OMSD Pipeline}

The OMSD methods contain a two-stage training process: 
1) pretraining sequential diffusion models and joint action critic on the dataset by making score decomposition, and 2) injecting decomposed scores as the individual policy regularization terms into the critic and derive deterministic policies for execution. The resulting OMSD algorithm is presented in Algorithm \ref{alg:omsd}.

The basic workflow of OMSD follows the idea of SRPO \citep{chen2024score} by extending the single agent learning process into multi-agent process, where the sequential conditional score decomposition proposed in section \ref{sec:3.2} is plugged in to reduce the miscoordinated policy updates. Specifically, as we take the joint critic and individual score regularization, all the agents share the copies of a pre-trained common joint action Q-networks $Q_{tot}$ and keep individual pre-trained behavior diffusion  models to extract the score regularization. This is a common setup in multi-agent reinforcement learning, such as MADDPG. Besides, each agent maintains a deterministic policy as the actor network, which bypasses the heavy iterative denoising process of diffusion models to generate actions and enjoy the fast decision-making speed.

\subsection{Pretraining IQL as Critic}

The centralized Q-network are pretrained with implicit Q-learning \citep{kostrikov2021offline}, which introduced the expectile regression in pessimistic value estimation:
\begin{align*}
\min L_V(\zeta) &= \mathbb{E}_{(s,a) \sim \mathcal{D}_{\mu}}\left[ L_2^\tau \left(Q_{\phi}(s, a) - V_{\zeta}(s)\right)\right], \\
\min L_{Q}(\phi) &= \mathbb{E}_{(s,a,s') \sim \mathcal{D}_{\mu}} \left[ || r(s, a) + \gamma V_{\zeta}(s') - Q_{\phi}(s, a) ||_2^2 \right],
\end{align*}
where $L_2^\tau(u) = |\tau - \mathbf{1}(u<0)|u^2$ is the expectile operator.

\subsection{Pretraining Diffusion Models}

Considering the state and actions are continuous, the behavior models are trained with classifier-free guidance diffusion models \citep{hansen2023idql, chen2024score} by minimizing the following loss:
\begin{equation}
    \min_{\psi_i} L_{\mu}(\psi_i) = \mathbb{E}_{t, \epsilon_i, (s,a) \sim \mathcal{D}_{\mu}} \left[|| \hat{\epsilon}_{\psi_i}(a_t^i| s, a^{i-}, t)  - \epsilon||_2^2 \right]_{a_t^i = \alpha_t a^i+\sigma_t \epsilon},
\end{equation}
where $t\sim \mathcal{U}(0,1), \epsilon \sim \mathcal{N}(0, \mathbf{I})$, and the sequential score function can be estimated with $\hat{\epsilon}_{\psi_i}(a_t^i| s, a^{i-}, t) \approx -\sigma_t \nabla_{a_i} \log \mu(a_i|s, a^{i-}) $ \citep{song2020score}.

Following similar numerical computation simplification methods in SRPO \cite{chen2024score}, we also utilize the intermediate distributions of the entire diffusion process $t\in[0,1]$ to replace the original training objective here. The surrogate objective is 
\begin{align}
    \max_{\theta_i}\mathcal{L}^{surr}_\pi(\theta_i)  
  = & \mathbb{E}_{\boldsymbol{s} \sim \mathcal{D}^\mu, \boldsymbol{a_i} \sim \pi_i(\cdot \mid s), \boldsymbol{a_{-i}} \sim \pi_{-i}(\cdot \mid s)} Q_\phi(\boldsymbol{s}, \boldsymbol{a}_i, \boldsymbol{a}_{-i}) \\
 & \nonumber - \frac{1}{\beta} \mathbb{E}_{t,s} \omega(t)\frac{\sigma_t}{\alpha_t}D_{\mathrm{KL}}\left[\pi_{i, t}(\cdot \mid \boldsymbol{s}) \| \mu_{i,t}(\cdot \mid \boldsymbol{s}, a^{i-})\right]|_{a^{i-}\sim \pi^{i-}},
\end{align}
where $\omega(t)=\delta(t-0.02)\frac{\alpha_{0.02}}{\sigma_{0.02}}$ is the weighting parameters to ensure the gap between $\mathcal{L}^{surr}(\theta_i)$ and $\mathcal{L}(\theta_i)$, $\pi_{i, t}(\cdot \mid \boldsymbol{s}) := \mathbb{E}_{a_i\sim \pi_i(\cdot|s)}\mathcal{N}(a_{i,t}|\alpha_t a_i, \sigma_t^2 \mathbf{I})$, and $\mu_{i,t}(\cdot \mid \boldsymbol{s}, a^{i-}) := \mathbb{E}_{a_i \sim \mu_{i,t}(\cdot \mid \boldsymbol{s}, a^{i-}) }\mathcal{N}(a_{i,t}|\alpha_t a_i, \sigma_t^2 \mathbf{I})$.

Considering the instability of the diffusion model near the initial and terminal times, we truncate the time range as $t\sim \mathcal{U}(0.02, 0.98)$. Therefore, we can derive the practical gradients for optimizing the objective as 
\begin{align}
\nabla_{\theta_i} \mathcal{L}_\pi(\theta_i)= & \mathbb{E}_{\boldsymbol{s} \sim \mathcal{D}^\mu, a^{i-}\sim\bar{\pi}^{-i}, a^{i+}\sim\pi^{i+}}[\left.\nabla_{\boldsymbol{a_i}} Q_\phi(\boldsymbol{s}, \boldsymbol{a})\right|_{a_i=\pi_{\theta_i}, a_{-i}=\pi_{\theta_{-i}}(\boldsymbol{s})} \\
\nonumber & +\frac{1}{\beta} \underbrace{\left.\nabla_{\boldsymbol{a}_i} \boldsymbol{a} \cdot \nabla_{\boldsymbol{a}} \log \mu(\boldsymbol{a} \mid \boldsymbol{s})\right|_{\boldsymbol{a}=\boldsymbol{\pi}_\theta(\boldsymbol{s})}}_{=-\boldsymbol{\epsilon}^*\left(\boldsymbol{a}_t \mid \boldsymbol{s}, t\right) /\left.\sigma_t\right|_{t \rightarrow 0}}] \nabla_{\theta_i} \pi_{\theta_i}(\boldsymbol{s}).
\end{align}

Compared to the naive score decomposition methods BRPO-CTDE, the main improvement is replacing the biased score regularization with sequential decomposed score. It strongly promotes the policy update directions and coordination among all agents' gradients.

\subsection{Discussions}

In OMSD, the sequential conditional distribution is solely utilized during the policy update phase to extract conditional score functions for policy regularization. Specifically, the sequential structure is not embedded in the execution policy. Instead, it is only used to model the joint behavior policy and derive score functions that guide individual policy updates. This design ensures that during execution, each agent's policy remains independently executable based solely on local observations, without requiring sequential action selection or global coordination at runtime.

In continuous control tasks, the policy is typically modeled as a Dilac distribution (or Gaussian distribution). Without loss of generality, we employ the Dilac policy, which provides deterministic prefix actions $a_{<i}$ given the state during the policy update of agent $i$. This approach not only preserves the flexibility of simultaneous decision-making but also enables efficient parallel pre-training of score models  for each agent directly from the dataset. By decoupling the sequential modeling of joint behavior policies from the execution phase, OMSD achieves a unique balance between coordinated learning and decentralized execution, making it highly efficient and scalable for real-world multi-agent scenarios.

While Gaussian policies are standard in continuous control, they are suboptimal for sequential score regularization since sampling stochastic prefix actions causes noise propagation and instability. Instead, we adopt Dilac policies---deterministic mappings with likelihood approximation capacity---to ensure that prefix actions  remain stable and deterministic during training. This design choice aligns with the score distillation requirement and allows high-throughput parallel updates across agents, improving both training efficiency and scalability.

Crucially, OMSD does not employ the diffusion model as an actor network during execution, which could lead to out-of-distribution (OOD) action problems due to the iterative sampling process \cite{mao2024diffusion}. Instead, we only perturb the sampled actions from policy $a_i^0 = \pi(a_i|s)$ with a random noise $ \epsilon_t \sim \mathcal{N}(\mathbf{0}, \boldsymbol{I})$ to construct latent variables $a_i^t$ and use the diffusion model to compute the corresponding score function $\hat{\epsilon}(a_i^t|s,a_{<i}, t)$ as behavior regularization. This approach avoids the computationally expensive ancestral sampling required in denoising steps in traditional diffusion models, significantly accelerating both training and execution.

Figure~\ref{fig:omsd_illu} illustrates the training workflow of OMSD. Joint offline data is reused to train a global Q function $Q^{tot}(s, \boldsymbol{a})$ and agent-wise conditional diffusion models. During policy updates, each agent receives two types of guidance: \textbf{top-down guidance} from $Q^{tot}(s, \boldsymbol{a})$, for identifying high-value regions, and \textbf{bottom-up score regularization} from the diffusion model, which conditions on prior agents' actions and regularizes against OOD updates.

This two-way information flow enables coordinated learning while encouraging in-distribution updates at each step. Even when earlier agents' policies deviate, the proper conditional score guides corrections, preserving a stable joint behavior pattern. Moreover, because diffusion models are used only for score estimation, not sampling, OMSD avoids diffusion-based actor workflows that suffer from iterative sampling inefficiency and OOD action generation \cite{mao2024diffusion}. The final policies remain lightweight, independently executable, and deployable in fully decentralized environments.

\section{Computational Resources}\label{app:computation_resource}

For \texttt{MaMuJoCo} and \texttt{MPE} experiments, we utilized a single NVIDIA Geforce RTX 3090 graphics processing unit (GPU). For the most complex \texttt{MaMuJoCo} task, training IQL takes 6-10 hours, training the diffusion model for each agent takes 4-6 hours, and training the OMSD policy update only takes 1-2 hours to converge. For the simpler tasks such as MPE and bandit, each module only takes 1 hour and 10 minutes respectively.  Since the sequential diffusion model for each agent can be trained in parallel using the data from the dataset, multiple pretraining models can be initiated in parallel to avoid the training time increasing linearly with the number of agents.

\section{Use of LLMs}
We use LLMs for polish writing. Specifically, LLMs assist in refining the grammar, clarity, and overall presentation of the paper, ensuring that the text is clear and professionally written. No experimental results or core content were generated by LLMs.

\section{Ethics Statement}
Our work does not involve human subjects, sensitive data, or personally identifiable information. The research is purely theoretical and is not expected to raise any ethical concerns. All experiments are conducted in simulated environments and comply with the relevant ethical guidelines of our institution.

\section{Reproducibility Statement}
We provide all the details necessary to reproduce our results. The main paper and supplementary materials contain a comprehensive description of the model architecture, training procedure, and hyperparameters. Our source code is available at \url{https://github.com/qiaodan-cuhk/OMSD}.


\end{document}